\documentclass[10pt,journal,cspaper,compsoc]{IEEEtran}

%
\ifCLASSOPTIONcompsoc
\else
  \usepackage{cite}
\fi

%
\ifCLASSINFOpdf
\else
  \usepackage[dvips]{graphicx}
\fi
\usepackage{epsfig}

%
\usepackage[cmex10]{amsmath}
\interdisplaylinepenalty=2500

%
\usepackage{algorithmic}

%
\usepackage{array}

\usepackage{mdwmath}
\usepackage{mdwtab}
\usepackage{eqparbox}
\usepackage{multirow}

%
\usepackage{url}
\hyphenation{op-tical net-works semi-conduc-tor}

\usepackage{amssymb}
\usepackage{amsthm,url}
\usepackage{color}
\usepackage[T1]{fontenc}

\newtheorem{prop}{Proposition}
\newtheorem{mydef}{Definition}

\begin{document}
%
\title{Object Proposal Generation using Two-Stage Cascade SVMs}

\author{Ziming~Zhang,
        and Philip~H.S.~Torr,~\IEEEmembership{Senior Member,~IEEE}
\IEEEcompsocitemizethanks{\IEEEcompsocthanksitem Dr. Z. Zhang is currently with the Department of Electrical and Computer Engineering, Boston University, Boston, MA 02215, US. Prof. P.H.S. Torr is with the Department of Engineering Science, University of Oxford, Oxford OX1 3PJ, UK.\protect\\
E-mail: zzhang14@bu.edu, philip.torr@eng.ox.ac.uk}
\thanks{}}

%
%

\markboth{IEEE Transaction on Pattern Analysis and Machine Intelligence}%
{Zhang & Torr: Object Proposal Generation using Two-Stage Cascade SVMs}
%


\IEEEcompsoctitleabstractindextext{%
\begin{abstract}
Object proposal algorithms have shown great promise as a first step for object recognition and detection. Good object proposal generation algorithms require high object recall rate as well as low computational cost, because generating object proposals is usually utilized as a preprocessing step. The problem of how to accelerate the object proposal generation and evaluation process without decreasing recall is thus of great interest. In this paper, we propose a new object proposal generation method using two-stage cascade SVMs, where in the first stage linear filters are learned for predefined quantized scales/aspect-ratios independently, and in the second stage a global linear classifier is learned across all the quantized scales/aspect-ratios for calibration, so that all the proposals can be compared properly. The proposals with highest scores are our final output. Specifically, we explain our scale/aspect-ratio quantization scheme, and investigate the effects of combinations of $\ell_1$ and $\ell_2$ regularizers in cascade SVMs with/without ranking constraints in learning. Comprehensive experiments on VOC2007 dataset are conducted, and our results achieve the state-of-the-art performance with high object recall rate and high computational efficiency. Besides, our method has been demonstrated to be suitable for not only class-specific but also generic object proposal generation.

\end{abstract}

\begin{keywords}
Generic/Class-specific object proposal generation, Scale/Aspect-ratio quantization, Cascade SVMs, Linear filters
\end{keywords}}

\maketitle

\IEEEdisplaynotcompsoctitleabstractindextext

%
\IEEEpeerreviewmaketitle

\section{Introduction}
\IEEEPARstart{F}or object proposal generation, we are interested in providing a small set of windows ({\it i.e.} bounding boxes) containing object instances probably with high object recall as well as high computational efficiency. Recent research has demonstrated that object proposal, as a data pre-process step, can be involved successfully in complex computer vision systems to help reduce the computational cost significantly while achieving state-of-the-art performance, {\it e.g.}, in object recognition \cite{wei2014cnn} and object detection \cite{wang2013regionlets}. In these methods, a small number of object proposals are needed to summarize all the objects in images that will be utilized further by the methods. Therefore, the need to accelerate the evaluation process as well as achieving high object recall is thus becoming more important for a successful computer vision system, and this problem has been attracting more and more attention~\cite{Rahtu_iccv11,Alexe2012pami,vedaldi09multiple,bb25233,bb85658,bb44332,bb25231}. 
\begin{figure*}[t]
\begin{minipage}[b]{0.5\linewidth}
 \begin{center}
 \centerline{\includegraphics[width=\columnwidth]{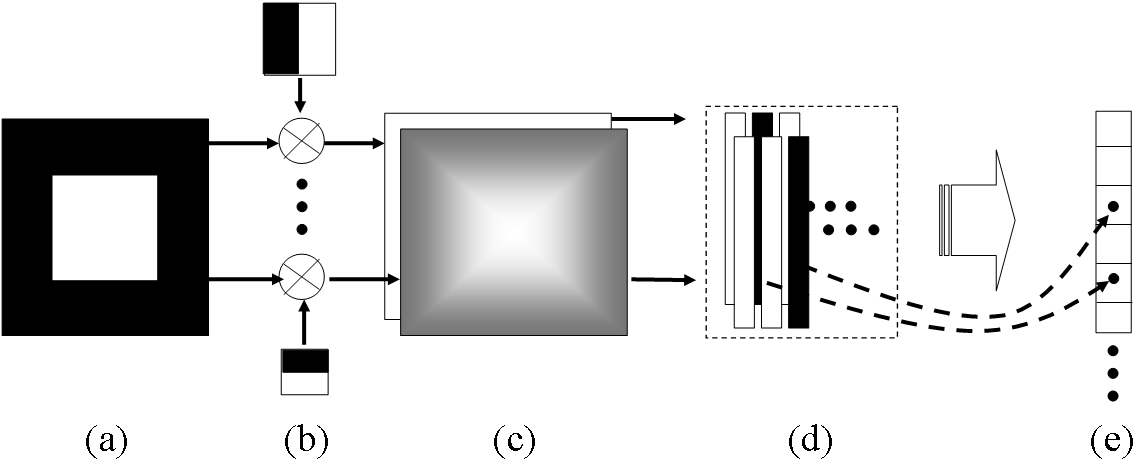}}
 \centerline{\footnotesize{(\lowercase\expandafter{\romannumeral1})}}
 \end{center}
\end{minipage}
\begin{minipage}[b]{0.5\linewidth}
 \begin{center}
 \centerline{\includegraphics[width=0.9\columnwidth]{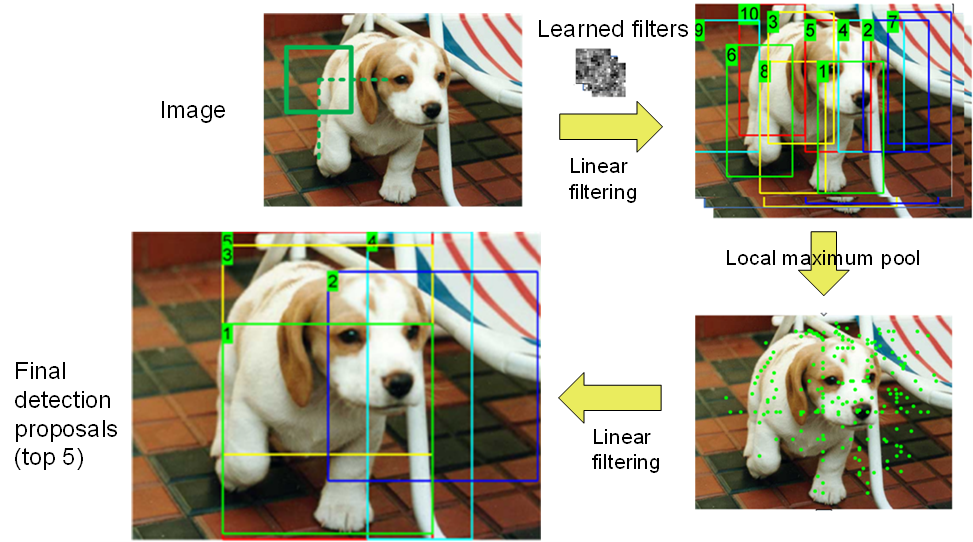}}
 \centerline{\footnotesize{(\lowercase\expandafter{\romannumeral2})}}
 \end{center}
\end{minipage}
\caption{{\footnotesize (\lowercase\expandafter{\romannumeral1}) Summary of our cascaded method. An image (a) is first convolved with a set of linear classifiers at varying scales/aspect-ratios (b) producing response images (c). Local maxima are extracted with non-max suppression from each response image, and the corresponding windows with top ranking scores are forwarded to the second stage of the cascade. Each proposed window is associated with a feature vector (d), and a second round of ranking orders these proposals (e) so that the true positives (marked as black) are pushed towards the top during training. Our method outputs the top ranking windows in this final ordering. (\lowercase\expandafter{\romannumeral2}) An example of generating proposals for detecting the dog in the image is shown, which explains the steps in (\lowercase\expandafter{\romannumeral1}). The numbers at the corners of windows in the bottom-left image indicate the ranks of windows.}}\label{fig:0}
\vspace{-0mm}
\end{figure*}

The main difficulties in object proposal generation are three-fold. First, the search space for localizing object proposals may be huge: Take a $(W\times H)$-pixel image for example. Considering all possible locations and scales/aspect-ratios in the image, the number of proposal candidates is roughly $O(W^2H^2)$. Second, finding a proper object representation is challenging, because of the change of imaging factors, huge intra-class and inter-class variations, many object categories, {\it etc}. Third, there may be multiple correct proposals for a single object instance of interest, leading to unnecessary spatial clusters of proposals. Thus, developing a highly computationally efficient yet accurate object proposal generation algorithm becomes very challenging.

Our previous work appeared as \cite{zhang_cvpr11}, where we proposed a {\em ranking} based two-stage cascade model for {\em class-specific} object proposal generation. To reduce the search space, we first proposed a scale/aspect-ratio quantization scheme in log-space, which guarantees any possible instance of objects in images can be located using at least one bounding box defined in the scheme. Then we learn linear classifiers at each stage in our cascade, all of whose scores can be utilized for ranking purposes. Ranking support vector machines (SVMs)~\cite{herbrich00ordinal} are used for ranking the proposals, which are normal SVMs with additional ranking constraints added into the learning to guarantee that some data should be classified with a higher score than others based on the ground-truth ranking order ({\it e.g.} those windows that better overlap the object ground-truth bounding boxes). In this way, our two-stage cascade enables us to incorporate variability in scale and aspect ratio by training a linear classifier for each quantized scale/aspect-ratio in the first stage, and another linear classifier in the second stage to calibrate the scores of the windows proposed from the first stage for final proposals. Finally, the usage of simple gradient features, linear convolution, and non-max suppression makes our method achieve the state-of-the-art performance in terms of object recall {\it vs.} number of proposals with high computational efficiency. Fig.~\ref{fig:0} summarizes the cascaded model and gives an example of generating proposals using this method.

This paper extends our work in \cite{zhang_cvpr11}. Specifically, we explain in detail our scale/aspect-ratio quantization scheme, investigate more general usage of cascade SVMs, and particularly demonstrate the capability of our method for {\em generic} object proposal generation. We explore the effects of combinations of $\ell_1$ and $\ell_2$ regularizers in two-stage cascade SVMs with/without ranking constraints in learning. Interestingly, our comprehensive comparison on the VOC2007 \cite{pascal-voc-2007} dataset suggests that {\em in general, the cascade where $\ell_1$-SVMs, which perform feature selection~\cite{bb61565}, are utilized in both stages without ranking constraints consistently works best}.

The rest of the paper is organized as follows. We first review some related work in Section \ref{sec:rw}. Then we explain the details of scale/aspect-ratio quantization scheme in Section \ref{sec:qs}. Next we formulate our two-stage cascade SVMs based on the proposed scale/aspect-ratio quantization scheme in Section \ref{sec:app}, and list some implementation details in Section \ref{sec:imp}. Finally Section~\ref{sec:exp} shows our experimental results and Section~\ref{sec:con} concludes the paper.

\section{Related Work}\label{sec:rw}
Various methods have been proposed to handle the proposal generation problem. Branch and bound techniques~\cite{bb25233,bb25231} for instance limit the number of windows that must be evaluated by pruning sets of windows whose response can be bounded. The efficiency of such methods is highly dependent on the strength of the bound, and the ease with which it can be evaluated, which can cause the method to offer limited speed-up for non-linear classifiers. Alternatively, cascade approaches~\cite{bb85658,bb44332,vedaldi09multiple} use weaker but faster classifiers in the initial stages to prune out negative examples, and only apply slower non-linear classifiers at the final stages. In~\cite{vedaldi09multiple} a fast linear SVM is used as a first step, while the jumping window approach~\cite{bb85658} builds an initial linear classifier by selecting pairs of discriminative visual words from their associated rectangle regions. Felzenszwalb et. al.~\cite{bb44331} propose a part-based cascaded model using a latent SVM in which part filters are only evaluated if a sufficient response is obtained from a global ``root'' filter, and \cite{bb44332} propose a combination of cascade and branch and bound techniques. Such approaches have been proved to be efficient, and have generated state-of-the-art results \cite{bb44331}. However, the fact that in \cite{bb44332} the decision scores for detections must be compared across the training data may limit the efficiency of the early cascade stages, where we only need to compare the scores of a classifier at any level of the cascade within a single image. Further, such approaches learn a single model which is applied at varying resolutions. Recent work~\cite{park2010multiresolution} strongly suggests that we should explicitly learn different detectors for different scales.

Several recent works \cite{Rahtu_iccv11,Alexe2012pami,Endres:2010:CIO:1888150.1888195,Uijlings13,Yanulevskaya14,Rantalankila14} are closely related to ours. Objectness measure \cite{Alexe2012pami} combines multiple visual cues to score the windows, and then produces the object proposals by sampling windows with high scores. Based on \cite{Alexe2012pami}, Rahtu {\it et. al.} \cite{Rahtu_iccv11} proposed another category-independent cascaded method for proposal generation, where the proposal candidates are sampled from super-pixels, which are generated using a segmentation method, according to a prior object localization distribution and then ranked using structured learning with learned features. The idea of grouping super-pixels/segments to generate proposals is also used in \cite{Endres:2010:CIO:1888150.1888195,Uijlings13,Yanulevskaya14,Rantalankila14} with different grouping criteria. More empirical comparisons of different proposed object proposal generation methods are presented in \cite{hosang2014good}.

The major differences between our method and these related work above are:
\begin{itemize}
\item From the view of features, our method only takes simple image gradients as features for learning and testing, while all of the related work above utilize multiple visual cues in images;
\item From the view of ranking proposals, our method utilizes the classification scores ({\it i.e.} margins) generated by the learned linear classifiers, rather than the scores from super-pixels \cite{Endres:2010:CIO:1888150.1888195}, prior object localization distributions \cite{Rahtu_iccv11}, or the combination of multiple visual cues \cite{Alexe2012pami,Rahtu_iccv11}, which involves more heuristics in general;
\item From the view of learning, our method formulates the problem using the cascade SVM framework, which is much easier to understand and implement.
\end{itemize}

As a result, our method achieves state-of-the-art performance with high object recall and high computational efficiency.

\section{Scale/Aspect-ratio Quantization Scheme}\label{sec:qs}
\subsection{Preliminaries}
Before explaining the details of our scale/aspect-ratio quantization scheme, we first introduce some definitions that are used later.

\begin{mydef}[\textbf{Bounding Box Overlap Score}]\label{def:ch1-overlap}
The {\em overlap score} between a bounding box $s$ and a ground-truth bounding box of an object $t$, $o(s,t)$, is defined as their intersection area divided by their union area. Clearly, $0\leq o(s,t)\leq1$, and the higher $o(s,t)$ is, the better the localization of the object $t$ with the bounding box $s$ is.
\end{mydef}
\begin{mydef}[\textbf{$\eta$-Accuracy}]\label{def:ch1-accuracy}
We say that a window $s\in \mathcal{S}$ can be localized by another window $t\in \mathcal{T}$ to {\em $\eta$-accuracy} if $o(s,t)\geq\eta$, $(0\leq\eta\leq1)$.
\end{mydef}
\begin{mydef}[\textbf{Maximum Overlap}]\label{def:ch3-maximum-overlap}
Given an image $I$ and the ground-truth bounding boxes of multiple objects $g_{1\cdots m_I}$ in $I$, the {\em maximum overlap} of a window $s$ in $I$ is defined as $o_s = \max_{i\in\{1,\cdots,m_I\}}o(s,g_i)$, where $o(s,g_i)$ denotes the overlap score between $s$ and $g_i$.
\end{mydef}
\begin{mydef}[\textbf{Correct Object Proposals}]\label{def:ch3-correct-proposal}
Given an overlap score threshold $\eta$, a window $s$ is considered as a {\em correct object proposal} in an image if and only if $o_s\geq\eta$.
\end{mydef}
\begin{mydef}[\textbf{Quantized Scale/Aspect-ratio}]\label{def:ch3-qs}
Given an overlap score threshold $\eta$, a window $s$ in an image can be quantized into a quantized scale/aspect-ratio $\mathcal{T}$ if and only if $\exists t\in\mathcal{T}$ such that $s$ can be localized to $\eta$-accuracy, where $t$ is a window with the quantized scale/aspect-ratio.
\end{mydef}

\subsection{Quantization Scheme}

\begin{figure}[t]
\begin{center}
\centerline{\includegraphics[width=\columnwidth]{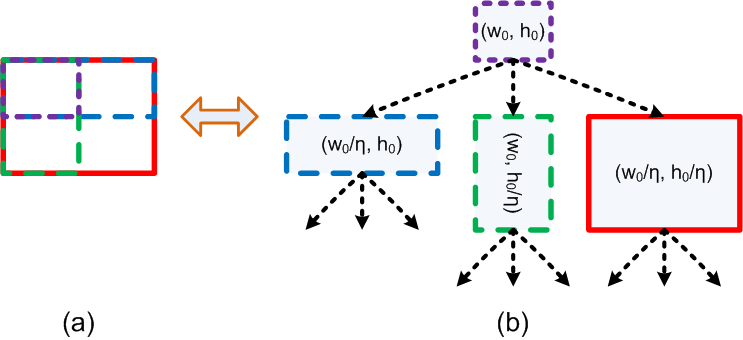}}\vspace{0mm}
\caption{{\footnotesize Illustration of hierarchical representation of our scale/aspect-ratio quantization scheme with overlap threshold $\eta=0.5$. (a) superimposes the four window scales in a mini-quantization scheme, and (b) unfolds the scales into a tree structure. The relative widths and heights of the windows are represented by the $(w,h)$ pairs. Such a hierarchy can represent all windows to $\eta$-accuracy.}}\label{fig:1}
\end{center}
\vspace{-0mm}
\end{figure}

\begin{figure*}[t]
\begin{minipage}[b]{0.33\linewidth}
 \begin{center}
 \centerline{\includegraphics[width=1.12\columnwidth]{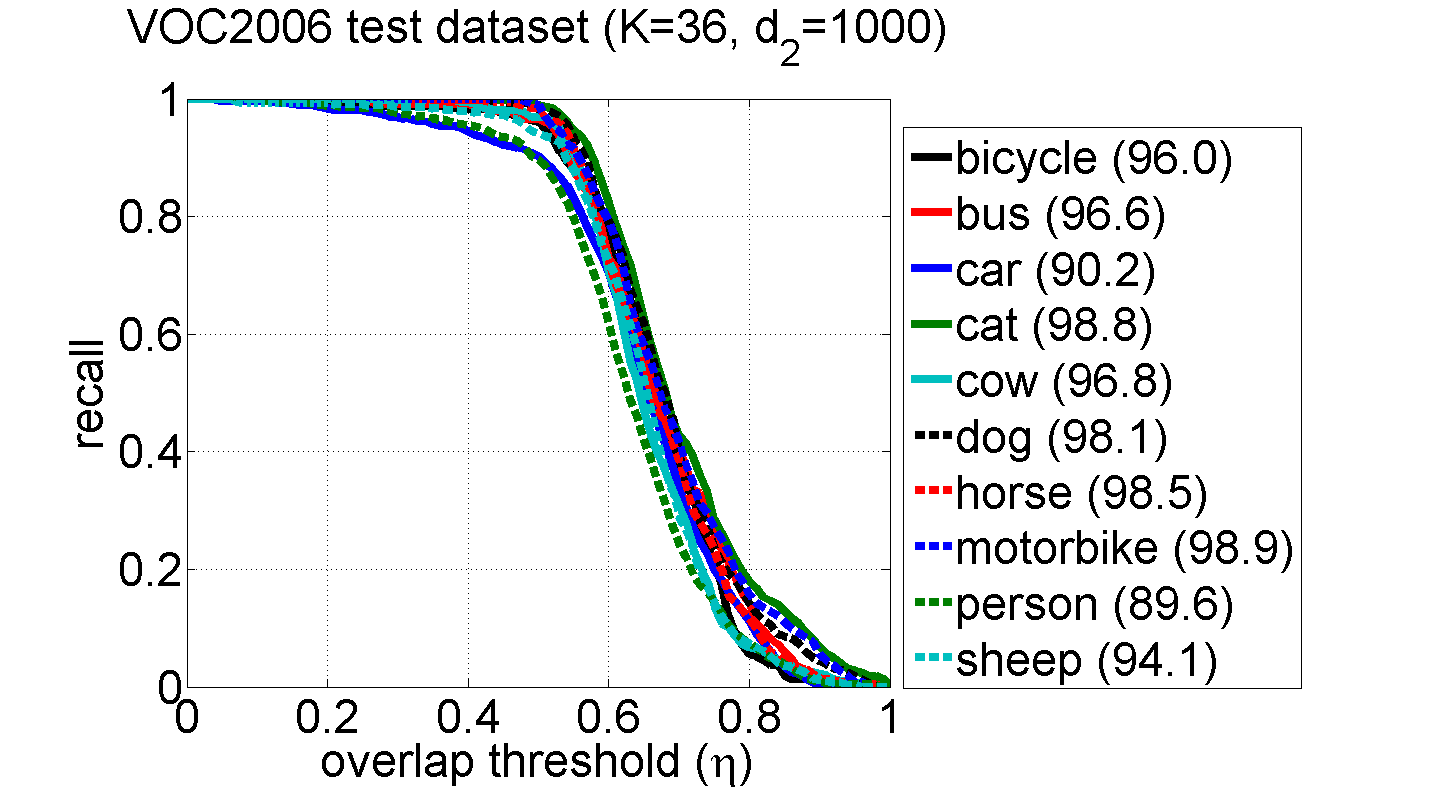}}
 \end{center}
\end{minipage}
\begin{minipage}[b]{0.33\linewidth}
 \begin{center}
 \centerline{\includegraphics[width=1.12\columnwidth]{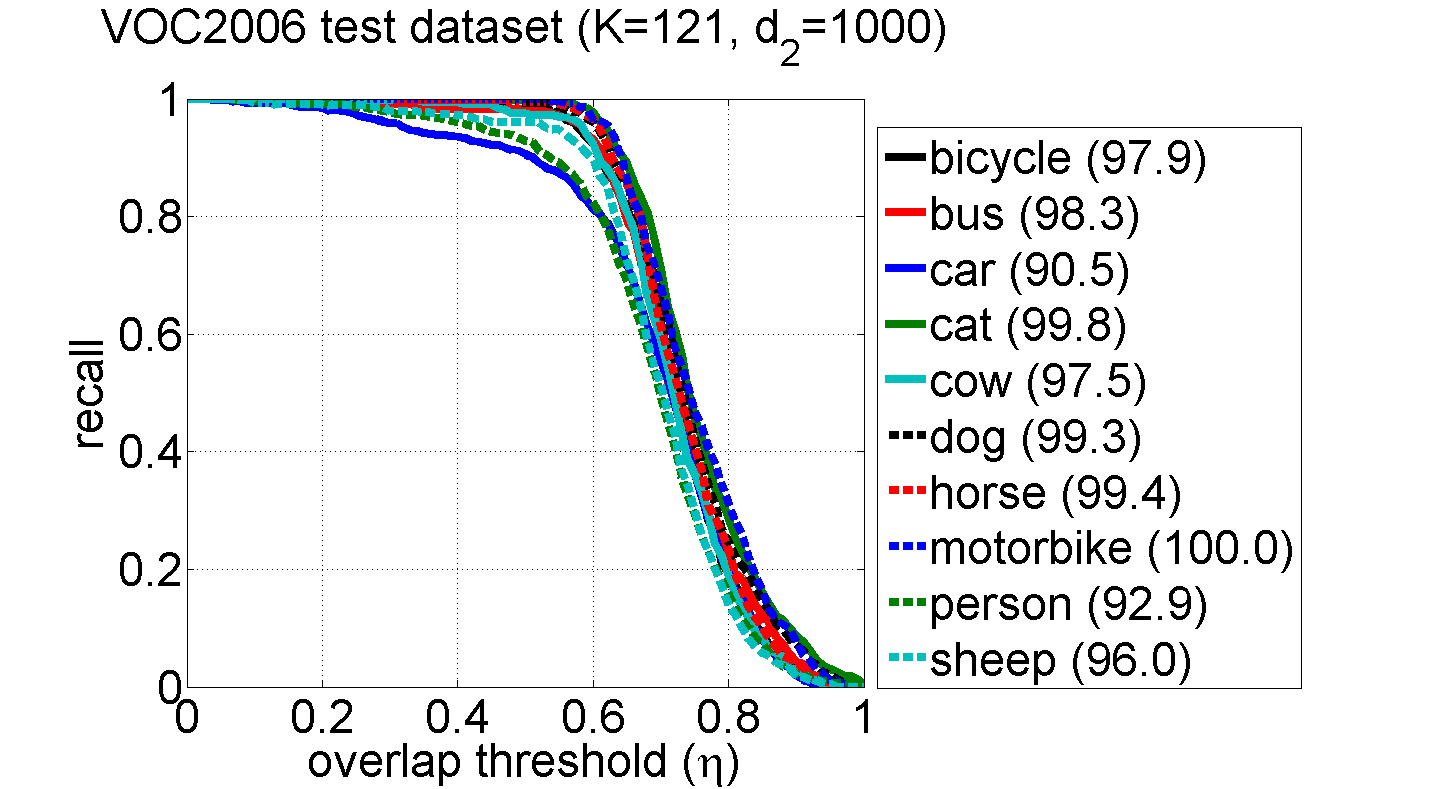}}
 \end{center}
\end{minipage}
\begin{minipage}[b]{0.33\linewidth}
 \begin{center}
 \centerline{\includegraphics[width=1.12\columnwidth]{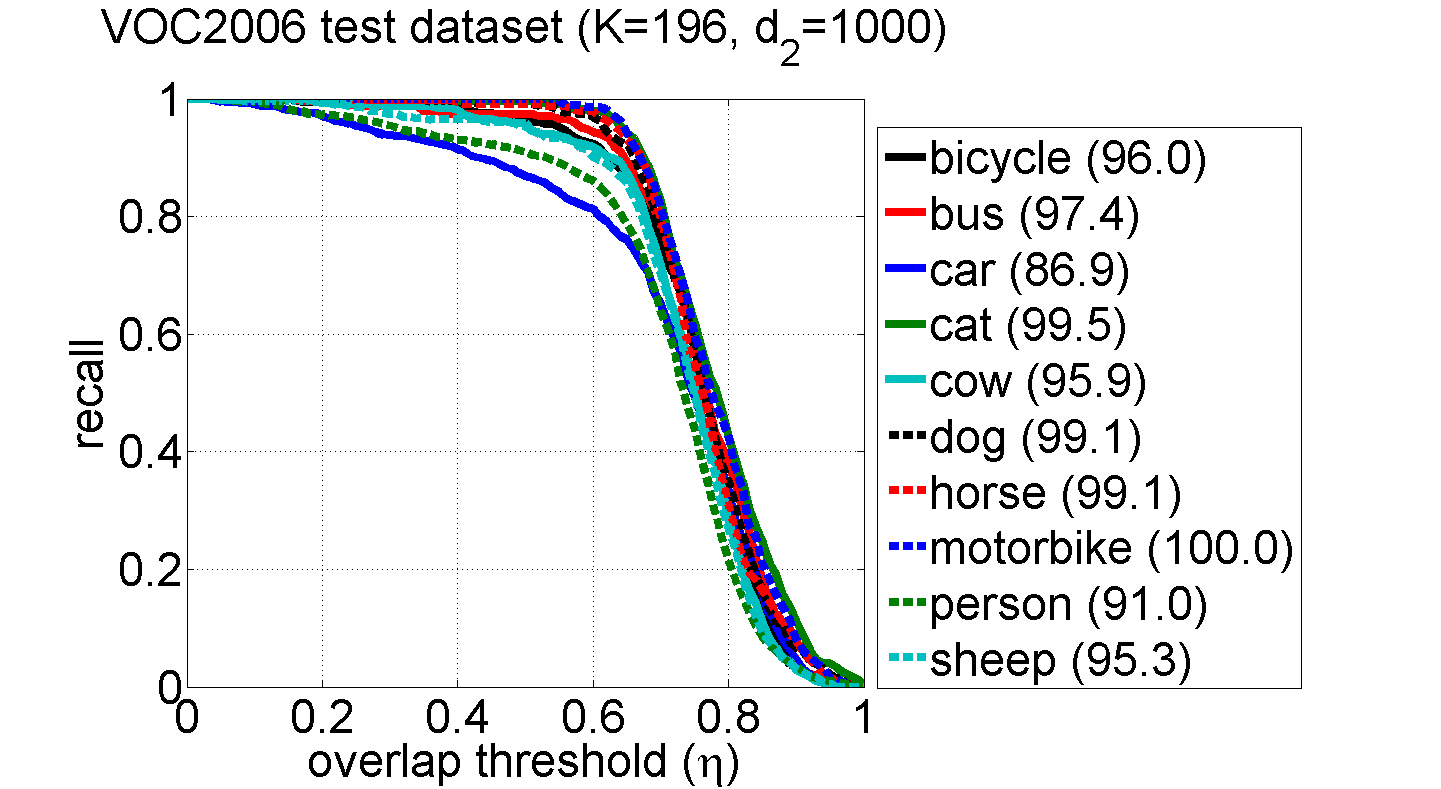}}
 \end{center}
\end{minipage}
\vspace{-5mm}
\caption{\footnotesize{An example of our method \cite{zhang_cvpr11} on demonstrating the localization quality with increase of the number of quantized scales/aspect-ratios, $K$, on the VOC2006 \cite{pascal-voc-2006} dataset using the object recall-overlap evaluation. Recall-overlap curves are plotted for individual classes using $d_2=1000$ final proposals from left to right, and $K\in\{36, 121, 196\}$ from top to bottom. The numbers shown in the legends are the recall percentages when the overlap score threshold for correct localization, $\eta$, is set to 0.5. For more details, please refer to \cite{zhang_cvpr11}.}}\label{fig:cvpr11}
\vspace{0mm}
\end{figure*}

We design our quantization scheme so that in each image any window $t \in \mathcal{T}$ can be represented by at least one window $s \in \mathcal{S}$ in our quantization scheme. 

Fig.~\ref{fig:1} gives an intuitive representation of our scheme. Given the smallest size (width and height) of windows in the scheme $(w_0,h_0)$, we include in our scheme all quantization levels of the form $\mathcal{S}(w_0/\eta^a,h_0/\eta^b)$, where $a \in \{0,1,\cdots, A\}$ and $b \in \{0,1,\cdots, B\}$ are naturally limited by the image size, and $\mathcal{S}(\cdot,\cdot)$ denotes the set of windows with the specific width and height. As a result, the quantization levels can be thought of as forming a tree structure, as illustrated in Fig. \ref{fig:1}.

Next, we will introduce some very important properties of our quantization scheme to explain its essence of reducing the search space for object proposal generation.

\begin{prop}[\textbf{Existence of Quantization Scheme}]\label{prop:ch3-existence}
Given an overlap score threshold $\eta_0$ and a minimum size of objects $(w_0,h_0)$ that can be found in images, any window $s$ with window size $(w_s,h_s)$ can be localized to $\eta_0$-accuracy by at least one window $t$ in our scale/aspect-ratio quantization scheme with parameter $\eta\geq\eta_0$.
\end{prop}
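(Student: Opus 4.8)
The plan is to fix an arbitrary target window $s=(w_s,h_s)$ and to exhibit a single scheme window $t$ with $o(s,t)\ge\eta\ge\eta_0$; by Definition \ref{def:ch1-accuracy} this certifies that $s$ is localized to $\eta_0$-accuracy. First I would choose the \emph{scale level}. Because the admissible widths $w_0/\eta^{a}$ and heights $h_0/\eta^{b}$ form geometric progressions with common ratio $\eta^{-1}>1$, it is cleanest to pass to log-space: the values $\log w_0 + a\log(1/\eta)$ are equally spaced with gap $\log(1/\eta)$, so some index $a$ has its grid point within half a gap of $\log w_s$, i.e. $|\log(w_t/w_s)|\le\tfrac12\log(1/\eta)$, hence $w_t/w_s\in[\sqrt\eta,\,1/\sqrt\eta]$; likewise pick $b$ so that $h_t/h_s\in[\sqrt\eta,\,1/\sqrt\eta]$. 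The minimum-size hypothesis $w_s\ge w_0,\ h_s\ge h_0$ guarantees $a,b\ge0$, while the image-size bounds $A,B$ are exactly what makes the nearest grid point lie inside the scheme, so this ``round to the nearest level on each axis'' step is always legal.

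Next I would choose the \emph{position} of $t$, noting that $\mathcal{S}(w_t,h_t)$ contains a window at every location. Since the overlap score increases with the intersection area, and the largest achievable intersection is $\min(w_s,w_t)\min(h_s,h_t)$ (slide $t$ so that on each axis the shorter side sits inside the longer), the best placement yields
\[
o(s,t)=\frac{\min(w_s,w_t)\,\min(h_s,h_t)}{w_sh_s+w_th_t-\min(w_s,w_t)\,\min(h_s,h_t)}.
\]
Writing $r=w_t/w_s$ and $q=h_t/h_s$, both confined to $[\sqrt\eta,\,1/\sqrt\eta]$, this depends on $(r,q)$ alone, and it remains to show it is at least $\eta$ on that square.

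The final and most delicate step is this lower bound, which I would handle by cases on the signs of $r-1$ and $q-1$, since the two $\min$'s (and hence the closed form) switch. When $t$ is smaller on both axes the expression collapses to $rq\ge\sqrt\eta\cdot\sqrt\eta=\eta$; when it is larger on both axes it equals $1/(rq)\ge\eta$ by the same bound. The genuine obstacle is the mixed case, say $r\le1\le q$, where $o=r/\big(1+r(q-1)\big)$; a quick monotonicity check shows $o$ is decreasing in $q$ and increasing in $r$, so the minimum over the square sits at the corner $r=\sqrt\eta,\ q=1/\sqrt\eta$, giving $\sqrt\eta/(2-\sqrt\eta)$, and the required $\sqrt\eta/(2-\sqrt\eta)\ge\eta$ reduces, after clearing the positive denominator, to $(1-\sqrt\eta)^2\ge0$. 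Hence $o(s,t)\ge\eta\ge\eta_0$ in every case and $t$ localizes $s$ to $\eta_0$-accuracy. I expect the bookkeeping of this mixed case---confirming that the stated corner really is the minimizer over the square---to be where the care is needed; the log-space rounding and the $\min$-$\min$ intersection fact are then routine.
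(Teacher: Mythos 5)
Your proof is correct, and its key step---rounding $\log_\eta(w_s/w_0)$ and $\log_\eta(h_s/h_0)$ to the nearest admissible level so that each dimension ratio lands in $[\sqrt{\eta},1/\sqrt{\eta}]$---is exactly the paper's choice of $a\in\{\lfloor\log_{\eta}\frac{w_s}{w_0}\rfloor,\cdots,\lceil\log_{\eta}\frac{w_s}{w_0}\rceil\}$ and likewise for $b$. Where you diverge is in evaluating the overlap of the chosen $t$ with $s$. The paper writes the overlap as $\frac{\min\{w_s,w_t\}\min\{h_s,h_t\}}{\max\{w_s,w_t\}\max\{h_s,h_t\}}$, which factorizes over the two axes and collapses the entire bound into the one-line identity $\eta^{|a-\log_{\eta}(w_s/w_0)|+|b-\log_{\eta}(h_s/h_0)|}\geq\eta^{0.5+0.5}=\eta$ with no case analysis. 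That expression equals the best-placement IoU only when one box nests inside the other after translation; in the mixed case (say $w_t>w_s$ but $h_t<h_s$) the quantity $\max\cdot\max$ exceeds the true union $w_sh_s+w_th_t-\min\cdot\min$ by $(w_t-w_s)(h_s-h_t)>0$, so the paper's formula is there only a (still sufficient) lower bound on the actual overlap. You instead use the exact union, which destroys the per-axis factorization and forces your three-case analysis, with the mixed case requiring the monotonicity argument and the corner bound $\sqrt{\eta}/(2-\sqrt{\eta})\geq\eta\iff(1-\sqrt{\eta})^2\geq0$. Both arguments are valid and yield the same conclusion; the paper's is shorter because its weaker product bound already clears $\eta$, while yours is more faithful to Definition~\ref{def:ch1-overlap} (and shows the mixed case actually does a bit better than the paper's formula credits). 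Your explicit handling of the positioning step---sliding $t$ so the intersection realizes $\min(w_s,w_t)\min(h_s,h_t)$---is also something the paper leaves implicit.
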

\begin{proof}
According to Fig. \ref{fig:1}, we can construct a subset of windows in our quantized scheme by computing $a\in\left\{\lfloor\log_{\eta}\frac{w_s}{w_0}\rfloor,\cdots,\lceil\log_{\eta}\frac{w_s}{w_0}\rceil\right\}$ and $b\in\left\{\lfloor\log_{\eta}\frac{h_s}{h_0}\rfloor,\cdots,\lceil\log_{\eta}\frac{h_s}{h_0}\rceil\right\}$, where $\lfloor\cdot\rfloor$ and $\lceil\cdot\rceil$ denote the floor and ceiling operations, respectively. Letting $t(w_t,h_t)$ be a window with quantized scale/aspect-ratio $(w_t,h_t)$, the overlap between $s$ and $t$ can be calculated as follows:
\begin{eqnarray}
\lefteqn{\exists a,b,\; o\left(s,t\left(w_0\eta^{a},h_0\eta^{b}\right)\right)}\\
&& =\frac{\min\left\{w_s,w_0\eta^a\right\}\cdot\min\left\{h_s,h_0\eta^b\right\}}{\max\left\{w_s,w_0\eta^a\right\}\cdot\max\left\{h_s,h_0\eta^b\right\}}\nonumber\\
&& =\eta^{\left|a-\log_{\eta}\frac{w_s}{w_0}\right|+\left|b-\log_{\eta}\frac{h_s}{h_0}\right|}\geq\eta^{0.5+0.5}=\eta\geq\eta_0.\nonumber
\end{eqnarray}
That is, $s$ can be localized to $\eta_0$-accuracy by $t$.
\end{proof}

\begin{prop}[\textbf{Sufficient Number of Quantized Scales/Aspect-ratios}]\label{prop:ch3-min-num}
Given an overlap score threshold $\eta$, a minimum size $(w_0,h_0)$ and a maximum size $(w,h)$ of objects that can be found in images, the number of quantized scales/aspect-ratios that is sufficient to localize any object is bounded by $\left(1+\lceil\log_{\eta}\frac{w_0}{w}\rceil\right)\left(1+\lceil\log_{\eta}\frac{h_0}{h}\rceil\right)$.
\end{prop}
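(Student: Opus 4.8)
The plan is to exploit the product structure of the quantization scheme: every quantized scale/aspect-ratio is indexed by a pair $(a,b)$, where $a$ selects a width level $w_0/\eta^{a}$ and $b$ selects a height level $h_0/\eta^{b}$, and the two indices are chosen independently. Consequently the total count factorizes, and it suffices to count the number of distinct width levels and the number of distinct height levels separately and then multiply. For each one-dimensional count I would invoke Proposition~\ref{prop:ch3-existence}, which already guarantees that an object of a given side length can be localized to $\eta$-accuracy as soon as the grid contains an exponent within half a step of the target; I therefore only need to determine how many integer exponents are required to cover the full range of admissible side lengths.

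First I would handle the width dimension. Since the smallest and largest admissible widths are $w_0$ and $w$, the width level needed to localize an object of width $w_s\in[w_0,w]$ is the one whose exponent is closest to $\log_{\eta}\frac{w_0}{w_s}$ (recall $0<\eta<1$, so $w_0/\eta^{a}$ increases with $a$ and these exponents are nonnegative). As $w_s$ ranges over $[w_0,w]$ this target exponent sweeps the interval $\left[0,\log_{\eta}\frac{w_0}{w}\right]$. By the half-step guarantee underlying Proposition~\ref{prop:ch3-existence}, every target in this interval has an integer within distance $0.5$ lying in $\left\{0,1,\ldots,\lceil\log_{\eta}\frac{w_0}{w}\rceil\right\}$, so restricting $a$ to this set of $1+\lceil\log_{\eta}\frac{w_0}{w}\rceil$ integers already localizes every width up to the maximum. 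An identical argument for the height dimension yields $1+\lceil\log_{\eta}\frac{h_0}{h}\rceil$ distinct height levels.

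Finally I would combine the two one-dimensional counts: since the indices are independent, the number of quantized scales/aspect-ratios is at most the product $\left(1+\lceil\log_{\eta}\frac{w_0}{w}\rceil\right)\left(1+\lceil\log_{\eta}\frac{h_0}{h}\rceil\right)$, which is the claimed bound. The step I expect to require the most care is verifying coverage at the boundary, namely that truncating the grid at the \emph{ceiling} $\lceil\log_{\eta}\frac{w_0}{w}\rceil$ (rather than the floor) really does localize the largest object of width $w$; this is exactly where the half-step property of Proposition~\ref{prop:ch3-existence} is needed, since the nearest integer to the extreme target $\log_{\eta}\frac{w_0}{w}$ is at most its ceiling and lies within $0.5$ of it. A secondary point to keep straight is the sign convention: the scheme is written with levels $w_0/\eta^{a}$ for $a\geq0$, whereas the proof of Proposition~\ref{prop:ch3-existence} uses $w_0\eta^{a}$ with $a\leq0$, and I would make this identification explicit before counting.
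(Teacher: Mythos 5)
Your argument is correct and follows essentially the same route as the paper: restrict the width exponent to $\{0,\ldots,\lceil\log_{\eta}\frac{w_0}{w}\rceil\}$ and the height exponent to $\{0,\ldots,\lceil\log_{\eta}\frac{h_0}{h}\rceil\}$ using the half-step coverage guarantee from Proposition~\ref{prop:ch3-existence}, then multiply the two counts. The paper's own proof is just a terser version of this; your added care about boundary coverage and the $w_0/\eta^{a}$ versus $w_0\eta^{a}$ sign convention fills in details the paper leaves implicit.
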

\begin{proof}
Let the smallest quantized scale/aspect-ratio in our scheme is $(w_0,h_0)$. Based on the proof in Proposition \ref{prop:ch3-existence}, we can construct a scale/aspect-ratio quantization scheme which limits $a\in\{0,\cdots,\lceil\log_{\eta}\frac{w_0}{w}\rceil\}$ and $b\in\{0,\cdots,\lceil\log_{\eta}\frac{h_0}{h}\rceil\}$. Therefore, the number of quantized scales/aspect-ratios that is sufficient to localize all possible objects in images is bounded by $\left(1+\lceil\log_{\eta}\frac{w_0}{w}\rceil\right)\left(1+\lceil\log_{\eta}\frac{h_0}{h}\rceil\right)$.
\end{proof}

\begin{prop}[\textbf{Search Space for Object Localization}]\label{prop:3}
Given an overlap score threshold $\eta$, the minimum size of quantized scale/aspect-ratio $(w_0,h_0)$, and the maximum image size $(W,H)$, the search space for localizing an arbitrary object in images using quantized windows is $O\left(W\cdot\lceil\log_{\eta}\frac{w_0}{W}\rceil\cdot H\cdot\lceil\log_{\eta}\frac{h_0}{H}\rceil\right)$.
\end{prop}
\begin{proof}
According to Proposition \ref{prop:ch3-min-num}, the search space for scales/aspect-ratios of objects is reduced to $O(\lceil\log_{\eta}\frac{w_0}{W}\rceil\lceil\log_{\eta}\frac{h_0}{H}\rceil)$ using our quantization scheme, while the search space for positions of objects keeps the same $O(W\cdot H)$ as sliding window methods. Therefore, the search space for object localization using our scheme is $O\left(W\cdot\lceil\log_{\eta}\frac{w_0}{W}\rceil\cdot H\cdot\lceil\log_{\eta}\frac{h_0}{H}\rceil\right)$.
\end{proof}

\begin{figure*}[t]
\centerline{\includegraphics[width=2\columnwidth]{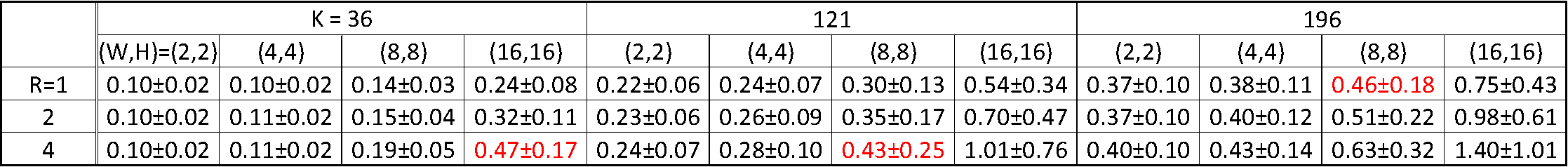}}
\caption{\footnotesize{Comparing the speed of our method \cite{zhang_cvpr11} in seconds at various parameter settings in forms of ``mean$\pm$standard deviation'', where $K$ denotes the number of quantized scales/aspect-ratios, $(W,H)$ denotes the filter size, and $R$ denotes the number of feature channels. The code is written using a mixture of Matlab and C++, and run on a single core with 3.33 GHz. The highlighted (red) numbers are close to the running time in \cite{bb25233}, one of the state-of-the-art cascaded classifiers. For more details, please refer to \cite{zhang_cvpr11}.}}\label{fig:ct}
\vspace{0mm}
\end{figure*}

\subsection{Discussion}
\subsubsection{Object Representation}
Instead of constructing larger and larger quantized scales/aspect-ratios in the quantization scheme, {\em we utilize a same small window size ({\it i.e.} $8\times8$ pixel windows) for all the quantized scales/aspect-ratios by rescaling images accordingly.} In this way, we represent all possible objects in images using a fixed small window size.

The intuitions behind this image rescaling are as follows. The objects of interest in images are usually well-defined with clear boundary ({\it i.e.} high-contrast edges) between them and background. At low resolution, these high-contrast edges preserve the discrimination between objects and background, while the details inside the object regions become blur or even fade away. This allows us to avoid modeling very complex object variations, making every object instance look similar to each other. {\em Our method indeed tries to localize these boundary information using linear filters.} In our recent work \cite{BingObj2014}, this intuition was shared as well.

\subsubsection{Localization Quality}\label{sssec:lq}
From Proposition \ref{prop:ch3-existence}, we can see that {\em the localization quality of a given quantization scheme in our method is dependent on the parameter $\eta$ (NOT $\eta_0$), chosen to construct the quantization scheme.} For instance, in VOC object detection challenges, the overlap score threshold for correct localization is set to 0.5, {\it i.e.} $\eta_0=0.5$. However, to construct our quantization scheme, we can choose an arbitrary value for the parameter $\eta$ as long as $\eta_0\leq\eta<1$, say $\eta=0.6$. Then our method can generate better object proposals than those using $\eta=0.5$, in general. In order to generate proposals with better localization, we have to create more quantized scales/aspect-ratios (based on Proposition \ref{prop:ch3-min-num}), leading to larger search space and higher computational cost accordingly (based on Proposition \ref{prop:3}).

We have verified this situation in \cite{zhang_cvpr11}. Fig. \ref{fig:cvpr11} is cited from \cite{zhang_cvpr11}, where $K\in\{36,121,196\}$ corresponds to $\eta\in\{\frac{1}{2},\frac{2}{3},\frac{3}{4}\}$, respectively, for constructing the quantization schemes. As we see, with increase of $K$, all the curves are pushing towards the top-right corner, in general. This indicates that increasing $K$ does help localize objects better, with observations of larger area-under-the-curve (AUC) scores. Fig. \ref{fig:ct} is also cited from \cite{zhang_cvpr11}, showing that larger $K$ does result in higher computational time under the same parameter setting.

\section{Two-stage Cascade SVMs}
\label{sec:app}
Cascaded classifiers have a decade history in object detection \cite{viola_jones_2001a,Heitz+al:NIPS08a,10.1109/TPAMI.2011.281}, especially the very successful Viola and Jones's method for face detection \cite{viola_jones_2001a}. Cascaded classifiers are good tools for handling extremely imbalanced data, that is, too many negatives and too few positives. Object detection is one of the applications with extremely imbalanced data, where the objects of interest in an image are very few but the non-object are many, considering the huge structural search space of windows. In the cascade, only ``positives'' are passed on as outputs of each stage, which have higher ranks than those ``negatives''.

For ease of explanation of our cascaded approach, we list the main notation used in the following sections in Table \ref{tab:0}.
\begin{table}[t]\centering
\caption{\footnotesize Some notations used in explanation of our cascade SVMs.} \label{tab:0}
\begin{tabular}{|p{1.2cm}|p{6.3cm}|}
  \hline
  Notation & Definition\\
  \hline
  $\mathcal{T}$ & The set of all possible windows in an image.\\  
  $\mathcal{S}$ & The set of all possible windows in our window quantization scheme.\\
  $\mathcal{S}(w,h)$ & The set of all the windows in an image with width $w$ and height $h$.\\
  $o(t,s)$ & The overlap between window $t\in \mathcal{T}$ and window $s\in \mathcal{S}$ (see Def. \ref{def:ch1-overlap}).\\
  $o_t$ & The maximum overlap for window $t\in \mathcal{T}$ in an image (see Def. \ref{def:ch3-maximum-overlap}).\\
  $\eta\in[0,1]$ & Overlap score threshold for proposal generation.\\
  $k$ & A given scale/aspect-ratio combination in our quantization scheme.\\
  $\mathcal{S}_k$ & The set of all the windows which can be represented to $\eta$-accuracy at quantized scale/aspect-ratio $k$.\\
  $\mathbf{w}_k$ , $\mathbf{z}_k$ & Learned linear classifiers at Stage \uppercase\expandafter{\romannumeral1} and \uppercase\expandafter{\romannumeral2}, respectively, for quantized scale/aspect-ratio $k$.\\
  $\mathbf{v}$ & A channel response feature vector used in Stage \uppercase\expandafter{\romannumeral2} for learning $\mathbf{z}$.\\
  \hline
\end{tabular}
\end{table}

In our training data, each image is annotated with the bounding boxes of the objects of interest. Our goal is to give higher ranks to the correct object proposals, given the overlap score threshold parameter $\eta$, than the wrong ones in a very efficient way, such that the windows at the top of the ranking list can be taken as our final object proposals. 

\subsection{Stage \uppercase\expandafter{\romannumeral1}: Scale/Aspect-ratio Specific Ranking}
\label{ssec:pol}
The first stage of our cascade aims to pass on a number of object proposals based on different sliding windows at each of a set of quantized scales and aspect ratios to the next stage. This is done by learning a linear classifier for each quantized scale/aspect-ratio separately. 

\subsubsection{Individual Classifier Learning}
\label{sssec:icl}

Given $\eta$ and a set of quantized scales/aspect-ratios, for each scale $k$\footnote{In the following sections, we refer to scale $k$ as quantized scale/aspect-ratio $k$ for short.} we wish to learn a linear classifier $f_1(\mathbf{x}_{s};\mathbf{w}_k)=\mathbf{w}_k\cdot\mathbf{x}_s$, as suggested in~\cite{park2010multiresolution}, to rank the window $s\in \mathcal{S}_k$, whose feature vector is denoted as $\mathbf{x}_s$, among all the windows in $\mathcal{S}_k$.

Ideally, we expect that within image $I$ the ranking score for any window $s_i\in \mathcal{S}_k\bigcap \mathcal{T}_I$ with $o_{s_i}\geq\eta$ is always higher than that of any window $s_j\in \mathcal{T}_I$ with $o_{s_j}<\eta$. That is, for $\mathbf{w}_k$ we require that within the image $I$ all the corresponding positive training windows $\mathcal{I}^+_k = \{s_i\in \mathcal{S}_k\bigcap \mathcal{T}_I|o_{s_i}\geq\eta\}$ should be ranked above all the training negatives $\mathcal{I}^- = \{s_j\in \mathcal{T}_I|o_{s_j}<\eta\}$. Naturally this leads us to formulate the problem as a ranking SVM as follows:
\begin{eqnarray}
\label{eqn:3}
\lefteqn{\hspace{2mm}\min_{\substack{\mathbf{w}_k,\boldsymbol{\xi}}} \hspace{2mm} \frac{1}{p}\|\mathbf{w}_k\|_p^p+C\sum_{\substack{i,j,n}}\xi_{ij}^n}\\
&{
\begin{array}{ll}
\hspace{-2mm}\mbox{s.t.} & \forall n,i\in \mathcal{I}^+_{kn},j\in \mathcal{I}^-_{n},\, \mathbf{w}_k\cdot(\mathbf{x}_{i}^n-\mathbf{x}_{j}^n)\geq 1-\xi_{ij}^n, \\
\hspace{-2mm}& \xi_{ij}^n\geq 0, \quad p\in{1,2}.
\end{array}}\nonumber
\end{eqnarray}
Here, $\mathbf{x}_{i}^n$ and $\mathbf{x}_{j}^n$ are the feature vectors associated with positive window $i$ and negative window $j$ in training image $I_n$ respectively, $\boldsymbol{\xi}$ are the slack variables, $C\geq0$ is a predefined regularization parameter, and $\|\cdot\|_p$ denotes the $\ell_p$ norm of vectors. 

Recall that the purpose of learning the individual classifier is to build the proposal pool for further usage, so the constraints in Eq.~\ref{eqn:3} are restricted to {\em one} quantized scale in {\em one} image. Therefore, the (local) ranking scores from each classifier are incomparable across scales/aspect-ratios, necessitating the second stage in the cascade.

\textbf{\em Remarks:} In order to make Eq. \ref{eqn:3} more general, we introduce a dummy feature $\mathbf{0}$ and define that its rank is higher than negatives but lower than positives. Then only comparing positive/negative features with the dummy feature turns Eq. \ref{eqn:3} into a standard SVM without ranking constraints. We denote the solution of Eq. \ref{eqn:3} with ranking constraints as ``$\ell_p$-w/r'', and the solution of Eq. \ref{eqn:3} without ranking constraints as ``$\ell_p$-o/r'', respectively.

\subsubsection{Proposal Selection with Non-Max Suppression}
\label{sssec:ps}
To decide which proposals to forward from the first stage to the second of the cascade, we look for the local maxima in the response image of classifier $\mathbf{w}_k$ as illustrated in Fig. \ref{fig:0}(\lowercase\expandafter{\romannumeral1},c), and set a threshold on the maximum number of windows to be passed on. The first stage thus has two controlling parameters. The first, $\gamma \in [0,2]$ specifies the ratio between the size of the neighborhood over which we search for the local maxima, and the reference window size for each classifier. This is the non-max suppression parameter. The second, $d_1 \in \{1,\cdots,1000\}$ specifies the maximum number of windows, which are the top $d_1$ ranked local maxima, as illustrated in Fig. \ref{fig:0}(\lowercase\expandafter{\romannumeral1},d), that can be passed on from any scale. This non-max suppression step is utilized to deal with the difficulty of multiple correct proposals per object.

\subsection{Stage \uppercase\expandafter{\romannumeral2}: Ranking Score Calibration}
\label{ssec:por}
The first stage of the cascade generates a number of proposal windows at each scale $k$ for image $I$. The second stage then re-ranks these windows globally, so that the best proposals across scales are forwarded. To achieve this, we introduce a new feature vector for each window, $\mathbf{v}$, which consists of the channel responses of the classifier at the first stage. For instance, $\mathbf{v}$ could be a 4-dimensional feature vector if feature $\mathbf{x}$ is divided into 4 segments without overlaps, each of which gives a response to the corresponding classifier. The reason for splitting $\mathbf{x}$ into different segments is that we could make full use of information in different segments to improve the calibration performance. 

Based on $\mathbf{v}$, we can re-rank each window $i$ by the decision function $f(\mathbf{v_i})=\mathbf{z}_{k_i}\cdot\mathbf{v}_{i}+e_{k_i}$, where $k_i$ denotes the quantized scale/aspect-ratio associated with window $i$, $\mathbf{z}_{k_i}$ is a set of coefficients for scale $k_i$ that we would like to learn, and $e_{k_i}$ is the corresponding bias term. Similarly, we formulate this learning problem as a multi-class ranking SVM as shown in Eq.~\ref{eqn:4}:
\begin{eqnarray}
\label{eqn:4}
\lefteqn{\hspace{2mm}\min_{\substack{\mathbf{z},\mathbf{e},\boldsymbol{\xi}}} \hspace{2mm} \frac{1}{p}\|\mathbf{z}\|_p^p+C\sum_{\substack{i,j,n}}\xi_{ij}^n}\\
&{
\begin{array}{ll}
\hspace{-2mm}\mbox{s.t.} & \forall n, i\in \hat{\mathcal{I}}^+_n,j\in \hat{\mathcal{I}}^-_n,\\
\hspace{-2mm}& \mathbf{z}_{k_i}\cdot\mathbf{v}_{i}^n-\mathbf{z}_{k_j}\cdot\mathbf{v}_{j}^n+e_{k_i}-e_{k_j}\geq 1-\xi_{ij}^n, \\
\hspace{-2mm}& \xi_{ij}^n\geq 0, \quad p\in\{1,2\}.
\end{array}}\nonumber
\end{eqnarray}
Here, $\hat{\mathcal{I}}^+_n$ and $\hat{\mathcal{I}}^-_n$ denote the positive and negative windows in image $I_n$ forwarded from the first stage of the cascade across different quantized scales/aspect-ratios. Similar to Eq. \ref{eqn:3} with the dummy feature, we continue to use the same notations for the solutions of Eq. \ref{eqn:4}.

In this way, all the windows can be ranked in an image. The top $d_2$ windows are then considered as the final proposals generated at the second stage of our cascade.

\subsection{Computational Complexity}
\label{ssec:cc}
Our method involves the application of simple linear classifiers to the images, and as such is dominated by the complexity of 2D convolution which must be applied to each image. The complexity can thus be approximated as $O(K\times R\times (W\times H)\times (W_I\times H_I))$, where $K$ denotes the number of individual classifiers learned in Stage \uppercase\expandafter{\romannumeral1}, $R$ denotes the number of segments used in Stage \uppercase\expandafter{\romannumeral2}, $(W,H)$ denotes the filter size, and $(W_I,H_I)$ denotes the resized image size. We note that our complexity is therefore (largely) independent of the number of potential proposals let through at each stage ($d_1, d_2$), unlike methods which include non-linear classifiers~\cite{bb25233,vedaldi09multiple}. Also, our algorithm is quite suitable for parallel computing, which will reduce the running time dramatically.

\section{Implementation}
\label{sec:imp}
We list some details of our implementation\footnote{The code is available at \url{https://sites.google.com/a/brookes.ac.uk/zimingzhang/code}.} of the cascade SVMs as follows.

\textbf{(1) Scale/aspect-ratio quantization scheme:} In our experiments, we test $\eta\in\{0.5, 0.67, 0.75\}$, which lead respectively to the maximum numbers of classifiers learned at the first stage $K\in\{36, 121, 196\}$ by limiting the sizes of windows from 10 to 500 pixels. This enables us to approximate the sizes of the smallest object and the whole image within the hierarchy.

\textbf{(2) Features and data used in Stage \uppercase\expandafter{\romannumeral1}:} We use simple gradient features to learn each classifier $\mathbf{w}_k$ at the first stage. In detail, we first convert all the images into gray scale, and represent all the object ground-truth bounding boxes to $\eta$-accuracy using our scale/aspect-ratio quantization scheme to provide positive windows. After randomly selecting negatives across scales, all windows are resized to a fixed feature window size $(W,H)$, and then for each pixel, the magnitude of its gradient is calculated. At test time, to generate features $\mathbf{x}$, we simply resize the image for each scale $k$ by the ratio of its reference window to $(W,H)$, and then apply the learned classifier $\mathbf{w}_k$ by 2D convolution.

\textbf{(3) Features used in Stage \uppercase\expandafter{\romannumeral2}:} We use the 1D ({\it i.e.} $R=1$) classifier responses ({\it i.e.} margins) from Stage \uppercase\expandafter{\romannumeral1} as features to train the ranking SVM, because from \cite{zhang_cvpr11}, we can see that the performance gained by increasing the dimension of features in Stage \uppercase\expandafter{\romannumeral2} is marginal, but computational time is boosted significantly, especially for large window size $(W,H)$.

\textbf{(4) Parameters $\gamma$, $d_1$, $W$ and $H$:} Here we follow our work in \cite{zhang_cvpr11} and keep using the same parameters as before. Precisely, $\gamma=0.6$, $d_1=50$ \footnote{When $K=36$, we set $d_1=150$ so that our method can select more than $10^3$ proposals from Stage \uppercase\expandafter{\romannumeral1}. For other $K$, we still use $d_1=50$.}, $W=H=16$ pixels. Please refer to \cite{zhang_cvpr11} for the parameter selection details.

\textbf{(5) SVM solver:} We employ LIBLINEAR \cite{REF08a} as our solver. To train ranking SVMs, we take $10^5$ samples randomly as the training set, each of which is created by a positive minus a negative. Without tuning, in all the cases, we set the regularization parameter $C=10$.

\section{Experiments}
\label{sec:exp}

In \cite{zhang_cvpr11} we have demonstrated the capability of our method for class-specific object proposal generation, and partial experimental results are shown in Fig. \ref{fig:cvpr11} and Fig. \ref{fig:ct}. For more details, please refer to \cite{zhang_cvpr11}.

In this paper, our method is extended for {\em generic object proposal generation}, and outputs bounding boxes as object proposals. Therefore, it is fair to compare our method with the two very closely related work \cite{Rahtu_iccv11}\footnote{We downloaded their public code and precomputed windows for VOC2007 from \url{http://www.cse.oulu.fi/CMV/Downloads/ObjectDetection}.} and \cite{Alexe2012pami}\footnote{We downloaded their public code and precomputed windows for VOC2007 from \url{http://groups.inf.ed.ac.uk/calvin/objectness/}.}. We did not compare ours with \cite{Endres:2010:CIO:1888150.1888195} because their outputs are pixels in the proposals rather than bounding boxes, which makes their method better for segmentation measure.


We test our method on PASCAL VOC2007 \cite{pascal-voc-2007}. VOC2007 contains 20 object categories, and consists of 9963 natural images with object labels and their corresponding ground-truth bounding boxes released for training, validation and test sets.

We learn only one object model per quantized scale/aspect-ratio by using all the object instances in the training data as positives to train a single binary object/non-object filter and output object proposals per image during testing, no matter what classes the object instances belong to. This is our default learn and testing procedure without specific mention.

We measure our performance in terms of {\em object recall vs. overlap score threshold} (recall-overlap for short) curves \cite{zhang_cvpr11,Rahtu_iccv11,vedaldi09multiple,bb25233}, {\em object recall vs. number of proposals} (recall-proposal for short) curves at $\eta=0.5$, and running speed. We follow the PASCAL VOC challenge and use $\eta=0.5$ for correct detection. 

\subsection{VOC2007}
We first test different cascade settings on this dataset using different $K$'s and $\ell$'s, and then compare our method with \cite{Rahtu_iccv11,Alexe2012pami}. We use the training/validation dataset, consisting of 5011 images, to train our model, and test it on the test dataset, comprising 4952 images. 

\subsubsection{Cascade Setting Comparison}
\label{sssec:roe}

Fig. \ref{fig:cascade-comparison} summarizes the comparison results, where our program runs for three times and we report the mean and standard deviation of our results. From the top 3 settings in each sub-figure, we can see that (1) In general, the performances using different settings are close to each other; (2) In Stage \uppercase\expandafter{\romannumeral1}, the method $\ell_1-o/r$ seems to work best, which trains $\ell_1$-norm SVMs, rather than ranking SVMs; (3) In Stage \uppercase\expandafter{\romannumeral2}, both methods $\ell_1-o/r$ and $\ell_2-w/r$ seem to work better than others, the first training $\ell_1$-norm SVMs and the second training $\ell_2$-norm ranking SVMs; (4) The method proposed in \cite{zhang_cvpr11} is slightly worse than the best setting; (5) With a larger $K$, the AUC score under 1000 proposals becomes larger, while differences of the AUC score under a fewer proposals ({\it i.e.} 1, 10, 100) are marginal. This also verifies that with a larger $K$, the localization quality of our object proposals will become better, in general, as stated in Section \ref{sssec:lq}.

It surprises us that the $\ell_1-o/r$ SVMs work so well in our cascade, because usually $\ell_2$-norm SVMs work better than $\ell_1$-norm SVMs \cite{REF08a}. We believe that $\ell_1$-norm SVMs actually select the discriminant features and suppress non-discriminant ones between objects and non-objects.

\subsubsection{Recall-Overlap Evaluation}
The object recall {\it vs.} overlap score threshold (recall-overlap for short) curves measure the quality of proposals within a fixed number of proposals by varying the overlap score threshold. 

Fig. \ref{fig:recall-overlap} shows our comparison results on VOC2007. We can see here the movement of the curves towards the top-right both as we allow more output proposals ($d_2\in\{1, 10, 100, 1000\}$) and as we increase $K=\{36, 121, 196\}$ in our quantization scheme. Recall that our quantized scales/aspect-ratios are designed to cover bounding boxes to a particular overlap score threshold of $\eta$, so $K\in\{36, 121, 196\}$ corresponds to $\eta\in\{0.5, 0.67, 0.75\}$ respectively. This affects the performance observed, and on the $K=36$ graph for instance, we see that the curves are high for $\eta \leq 0.5$, but then drop quickly. However, the curves for the $K=121$ and $K=196$ drop at the corresponding later points, around $\eta=0.6$, implying our quantization is capturing the desired information.

From the curves, we also can see that our method has a similar behavior to \cite{Alexe2012pami}, and their AUC values are close to each other, and at $\eta=0.5$, in most cases our method and \cite{Alexe2012pami} achieve higher object recall than \cite{Rahtu_iccv11}. However, in terms of proposal localization quality, \cite{Rahtu_iccv11} is the best among these methods, because its curves drop quickly when $\eta$ is larger than around 0.75, while the curves of ours and \cite{Rahtu_iccv11} drop when $\eta$ is larger than around 0.55. This observation indicates that compared to our method and \cite{Alexe2012pami}, the correct detection proposals outputted by \cite{Rahtu_iccv11} are closer to the ground-truth bounding boxes of objects, which may be caused by the structured learning used in \cite{Rahtu_iccv11}. 

Fig.~\ref{fig:recall-overlap-class-2007} breaks down the VOC2007 results in Fig. \ref{fig:recall-overlap} by classes using 1000 proposals, and displays the recall-overlap curves. Similar observations to Fig. \ref{fig:recall-overlap} can be made. Table \ref{tab:1} summarizes the AUC score comparison on VOC2007.
\begin{table}[t]\centering
\caption{{{{\footnotesize AUC score comparison on VOC2007 using 1000 proposals in Fig. \ref{fig:recall-overlap} and Fig. \ref{fig:recall-overlap-class-2007}.}}}} \label{tab:1}
\begin{tabular}{c|cc}
  Methods & AUC (objects) & AUC (classes)\\
  \hline\hline
  Ours ($\ell_1-o/r+\ell_1-o/r$) & 64.5\% & (65.1$\pm$2.2)\% \\
  \hline\hline
  \cite{Alexe2012pami} & 64.9\% & (66.8$\pm$4.2)\% \\
  \cite{Rahtu_iccv11} & \textbf{\textit{67.4\%}} & \textbf{\textit{(70.8$\pm$8.3)\%}}
\end{tabular}
\end{table}

\subsubsection{Recall-Proposal Evaluation}
As a pre-process step in a system, the object recall with a certain $\eta$ using a fixed number of proposals is more important, because this recall determines the best performance that objects can be detected. Therefore, we propose another measure using the object recall vs. number of proposals (recall-proposal for short) curves at $\eta=0.5$.

In Fig.~\ref{fig:recall-proposal} we show how the recalls of different methods are effected as we increase the number of output proposals $d_2$ from 1 to 1000 on VOC2007. We can see that when $d_2$ is beyond 200, the curves become flatter and flatter. We believe that this property of our approach is useful for detection tasks, because it narrows down significantly the total number of windows that classifiers need to check while losing few correct detections. From the comparison of the 4 cascade settings, $\ell_1-o/r+\ell_1-o/r$ performs best. Therefore, {\em in the following experiments we use the setting ``$\ell_1-o/r+\ell_1-o/r$'' as our default cascade setting}. Compared with \cite{Alexe2012pami,Rahtu_iccv11}, our method has a similar behavior to \cite{Alexe2012pami}, and both are better than \cite{Rahtu_iccv11} significantly.

Similarly, Fig.~\ref{fig:recall-proposal-class-2007} breaks down the VOC2007 results in Fig. \ref{fig:recall-proposal} by classes and displays the recall-overlap curves. As we see, some categories need far fewer proposals to achieve good performance. For instance, for the dog category, 100 output proposals saturate performance. Table \ref{tab:2} lists the object recall comparison on VOC2007.
\begin{table}[t]\centering
\caption{{{{\footnotesize Object recall comparison on VOC2007 using 1000 proposals as shown in Fig. \ref{fig:recall-proposal} and Fig. \ref{fig:recall-proposal-class-2007}.}}}} \label{tab:2}
\begin{tabular}{c|cc}
  Methods & Recall (objects) & Recall (classes)\\
  \hline\hline
  Ours ($\ell_1-o/r+\ell_1-o/r$) & \textbf{\textit{93.8}}\% & \textbf{\textit{(95.1$\pm$3.5)\%}} \\
  \hline\hline
  \cite{Alexe2012pami} & 88.6\% & (92.0$\pm$6.7)\% \\
  \cite{Rahtu_iccv11} & 77.7\% & (82.8$\pm$12.8)\%
\end{tabular}
\end{table}

Particularly, here we also perform a same experiment used in objectness \cite{Alexe2012pami}. We divide the 20 object categories into two sets. Same as \cite{Alexe2012pami}, we use the 14 categories ({\it i.e.} aeroplane, bicycle, boat, bottle, bus, chair, diningtable, horse, motorbike, person, pottedplant, sofa, train, tvmonitor) as testing categories, and the rest as training categories, which means that these 14 categories are unseen during training. The images containing objects within the training categories in the training/validation dataset are utilized as the training data for learning our models, and the images containing objects within the testing categories in the test dataset are utilized as the test data for evaluating our method. This experiment is designed for exploring the generality of the proposal methods. Table \ref{tab:ch3-generic} lists our comparison results between ours and objectness \cite{Alexe2012pami}. Still our method outperforms \cite{Alexe2012pami} in terms of object recall given the number of proposals.
\begin{table}[t]\centering
\caption{\footnotesize Object recall comparison on VOC2007 using different numbers of proposals and the same experimental setting in \cite{Alexe2012pami}.} \label{tab:ch3-generic}
\begin{tabular}{c|ccc}
  Method & 10 Prop. & 100 Prop. & 1000 Prop.\\
  \hline\hline
  Ours ($\ell_1-o/r+\ell_1-o/r$) & \textbf{46.4}\% & \textbf{78.7}\% & \textbf{93.1}\%  \\
  \cite{Alexe2012pami} & 41.0\% & 71.0\% & 91.0\%
\end{tabular}
\end{table}

\subsubsection{Computational Time}
\begin{table}[t]\centering
\caption{{\footnotesize Computational time comparison on VOC2007 in second per image with 1000 proposals.}} \label{tab:3}
\begin{tabular}{c|c}
  Methods & Computational time\\
  \hline\hline
  Ours ($\ell_1-o/r+\ell_1-o/r$) & \textbf{\textit{0.20$\pm$0.02}} \\
  \hline\hline
  \cite{Alexe2012pami} & 3.58$\pm$0.25 \\
  \cite{Rahtu_iccv11} & 2.22$\pm$0.42
\end{tabular}
\end{table}
The computational time comparison of the three methods is listed in Table~\ref{tab:3}. Our implementation is a mixture of Matlab and C++, just like \cite{Alexe2012pami,Rahtu_iccv11}, and all the programs are run on a single core of Intel Xeon W3680 CPU with 3.33GHz. The computational time shown here includes all the steps at the test stage starting from loading images. As we see, our method is more than 10 times faster than \cite{Alexe2012pami,Rahtu_iccv11}, because our method only utilizes the simple gradients in gray images as features, and 2D convolution for classification, which are very efficient.

\section{Conclusion and Discussion}
\label{sec:con}
We propose a very efficient two-stage cascade SVM method for both class-specific and generic object proposal generation. To achieve better computational efficiency, we propose a scale/aspect-ratio quantization scheme to reduce the bounding box search space into log-space. To represent each object instance, we utilize the simple gradients within small fixed-size windows ({\it i.e.} $8\times 8$ pixels). We learn linear filters in each stage based on SVM formulations, resulting in applying fast 2D convolution to localizing object proposals during testing. Non-max suppression is used to select proper proposals in the first stage.

We envisage that the cascaded model can be used as the initial stage in complex systems. Our framework naturally incorporates scale and aspect ratio information about objects, which are treated separately in the first stage of the cascade, and we emphasize the flexibility of the framework, where different types of features could easily be incorporated at this stage. Our method is both fast and efficient, and we have shown a substantial improvement in speed and recall over two recent related work \cite{Alexe2012pami,Rahtu_iccv11}. Besides object detection, we believe that our work will contribute to many other research areas, such like segmentation \cite{cpmc_pami12} and stereo matching \cite{Bleyer:2012:ESO:2403138.2403174}.

Our recent proposal generation method in \cite{BingObj2014} achieves the fastest running time among all popular object proposal generation methods \cite{hosang2014good,zitnickedge}, and the most repeatable under different imaging conditions ({\it e.g.} illumination, rotation, scaling, blurring, {\it etc.}). However, the main issue of our method seems that the localization quality of our proposals are worse quantitatively compared to other methods. This is mainly because of our scale/aspect-ratio quantization scheme. Unfortunately, as we stated above, for our method better localization quality can be achieved at the cost of higher computational cost. Thus, how to reduce such computational burden as well as improving the localization quality will be our future work.

\begin{figure*}[h]
\begin{minipage}[b]{0.33\linewidth}
 \begin{center}
 \centerline{\includegraphics[width=1.1\columnwidth]{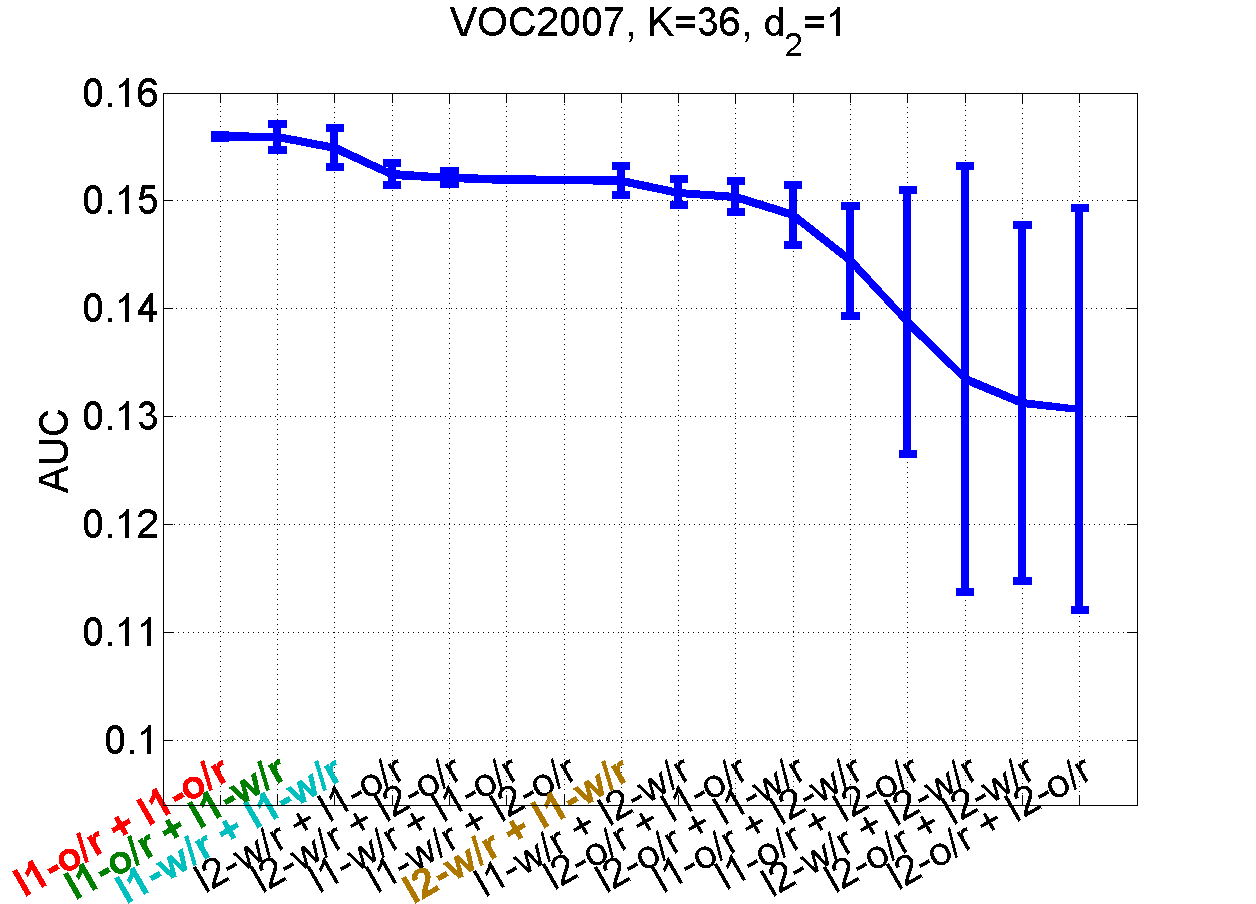}}
 \end{center}
\end{minipage}
\begin{minipage}[b]{0.33\linewidth}
 \begin{center}
 \centerline{\includegraphics[width=1.1\columnwidth]{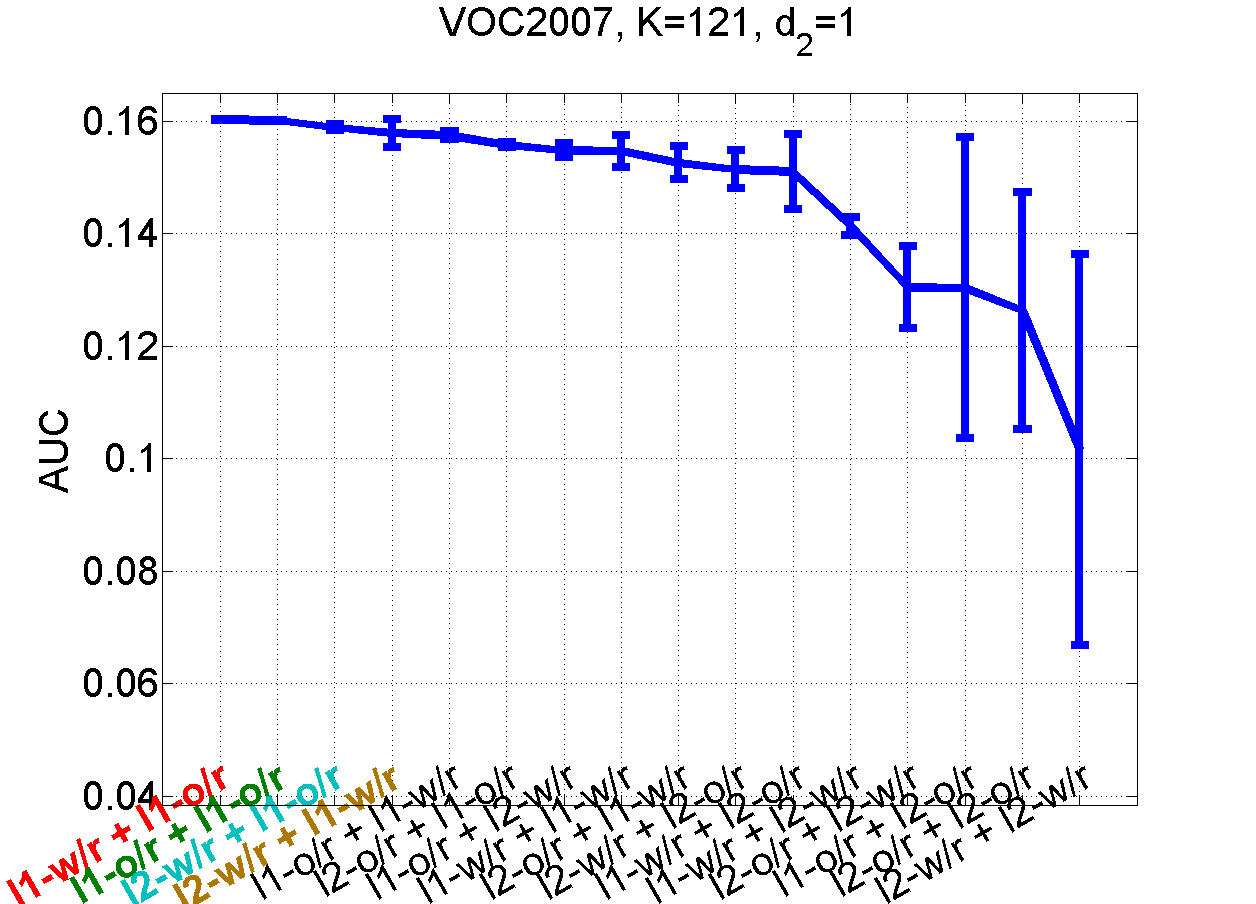}}
 \end{center}
\end{minipage}
\begin{minipage}[b]{0.33\linewidth}
 \begin{center}
 \centerline{\includegraphics[width=1.1\columnwidth]{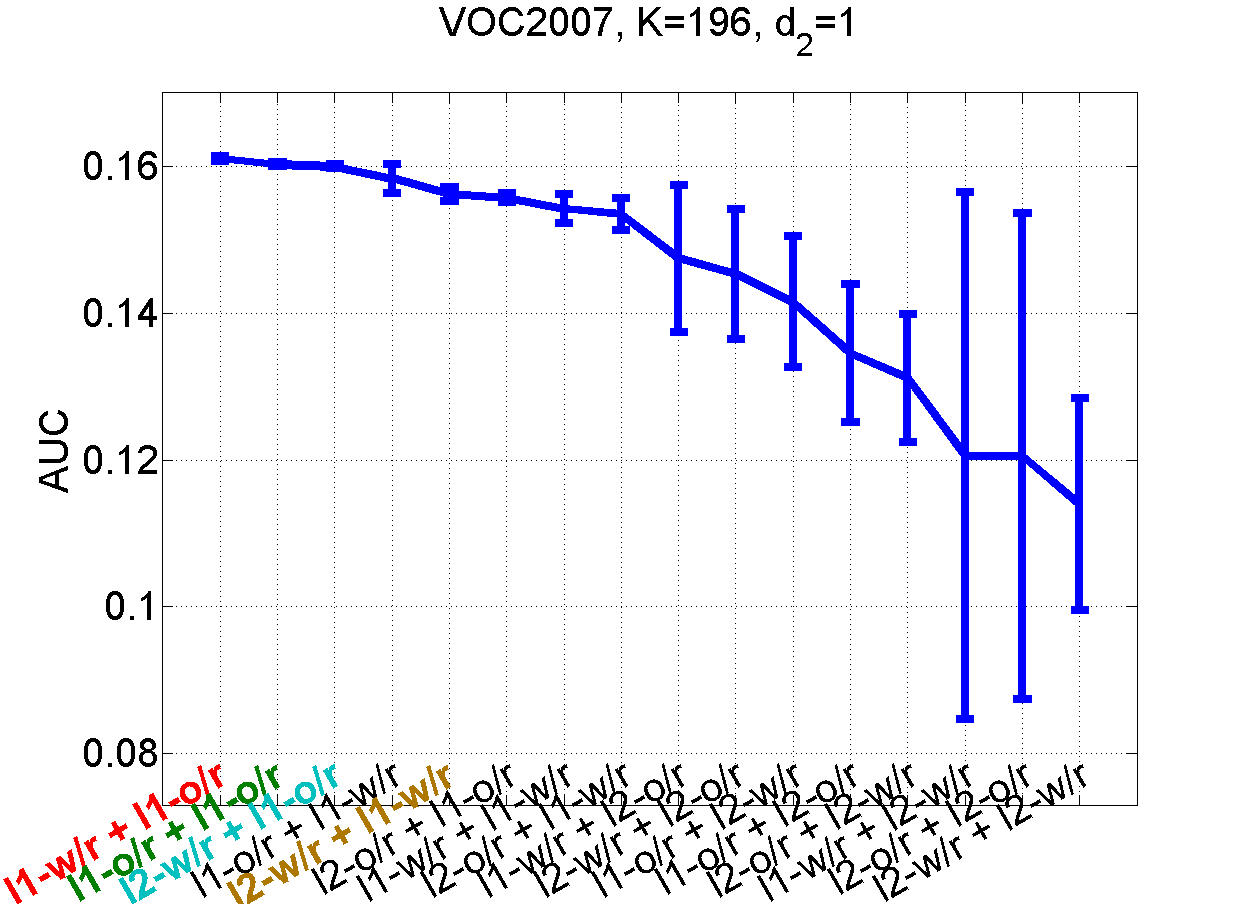}}
 \end{center}
\end{minipage}
\begin{minipage}[b]{0.33\linewidth}
 \begin{center}
 \centerline{\includegraphics[width=1.1\columnwidth]{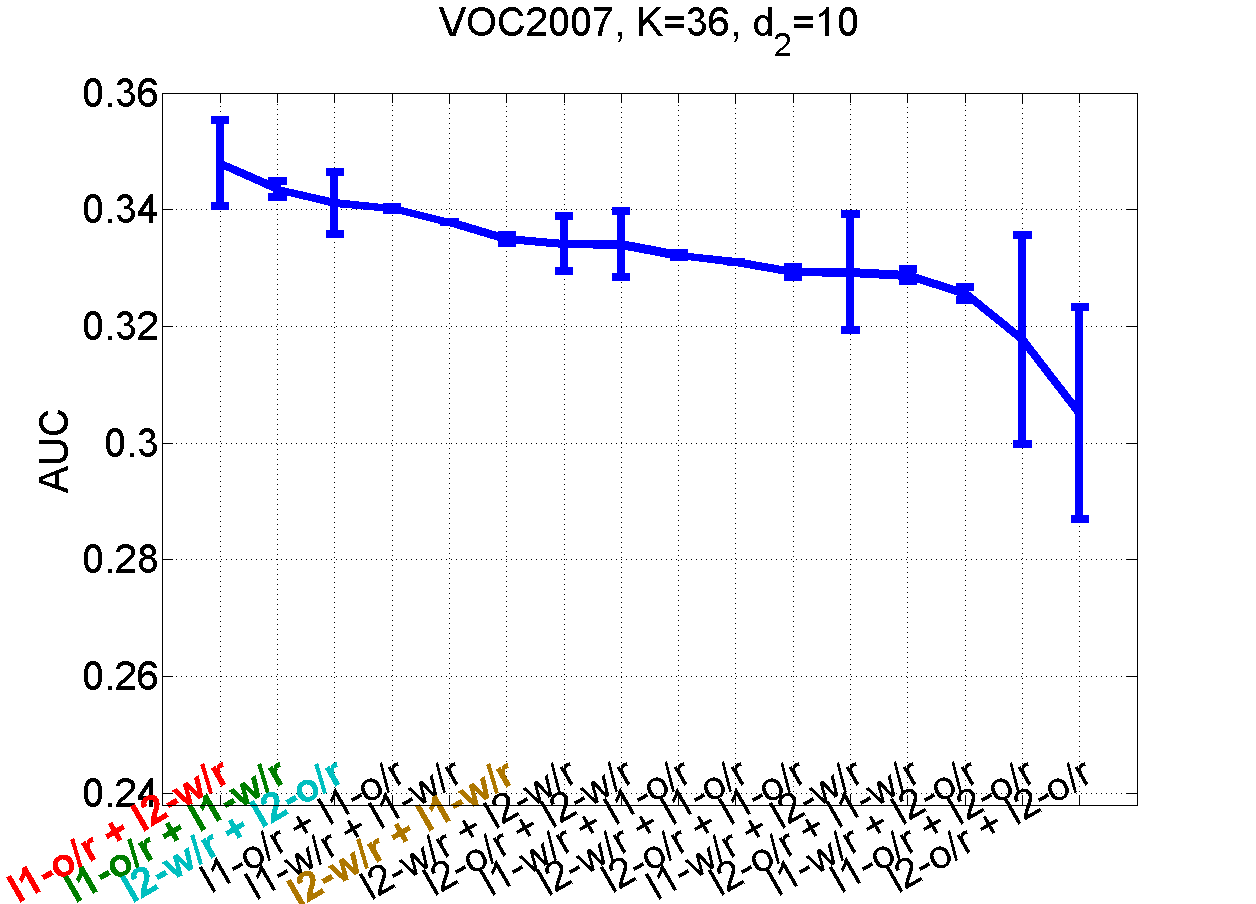}}
 \end{center}
\end{minipage}
\begin{minipage}[b]{0.33\linewidth}
 \begin{center}
 \centerline{\includegraphics[width=1.1\columnwidth]{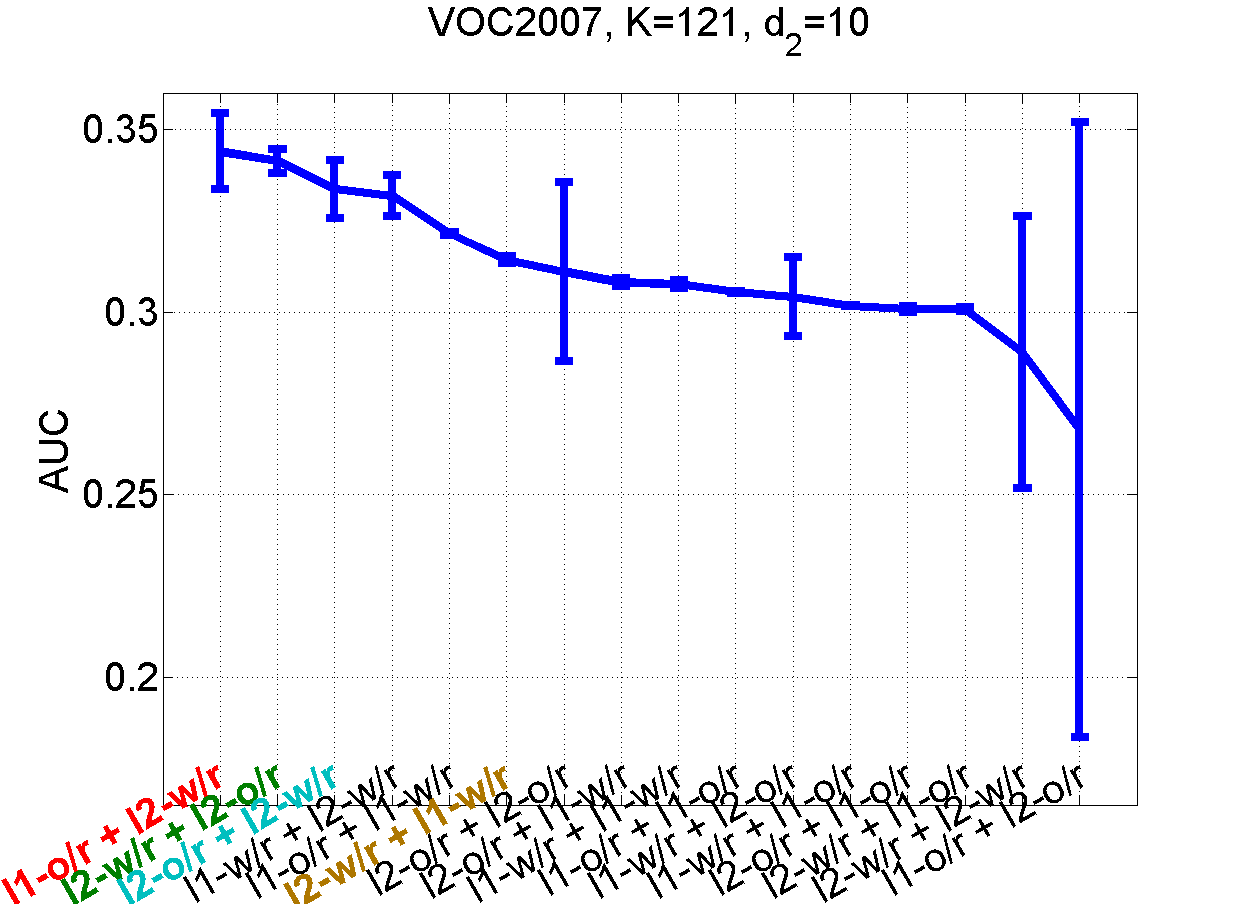}}
 \end{center}
\end{minipage}
\begin{minipage}[b]{0.33\linewidth}
 \begin{center}
 \centerline{\includegraphics[width=1.1\columnwidth]{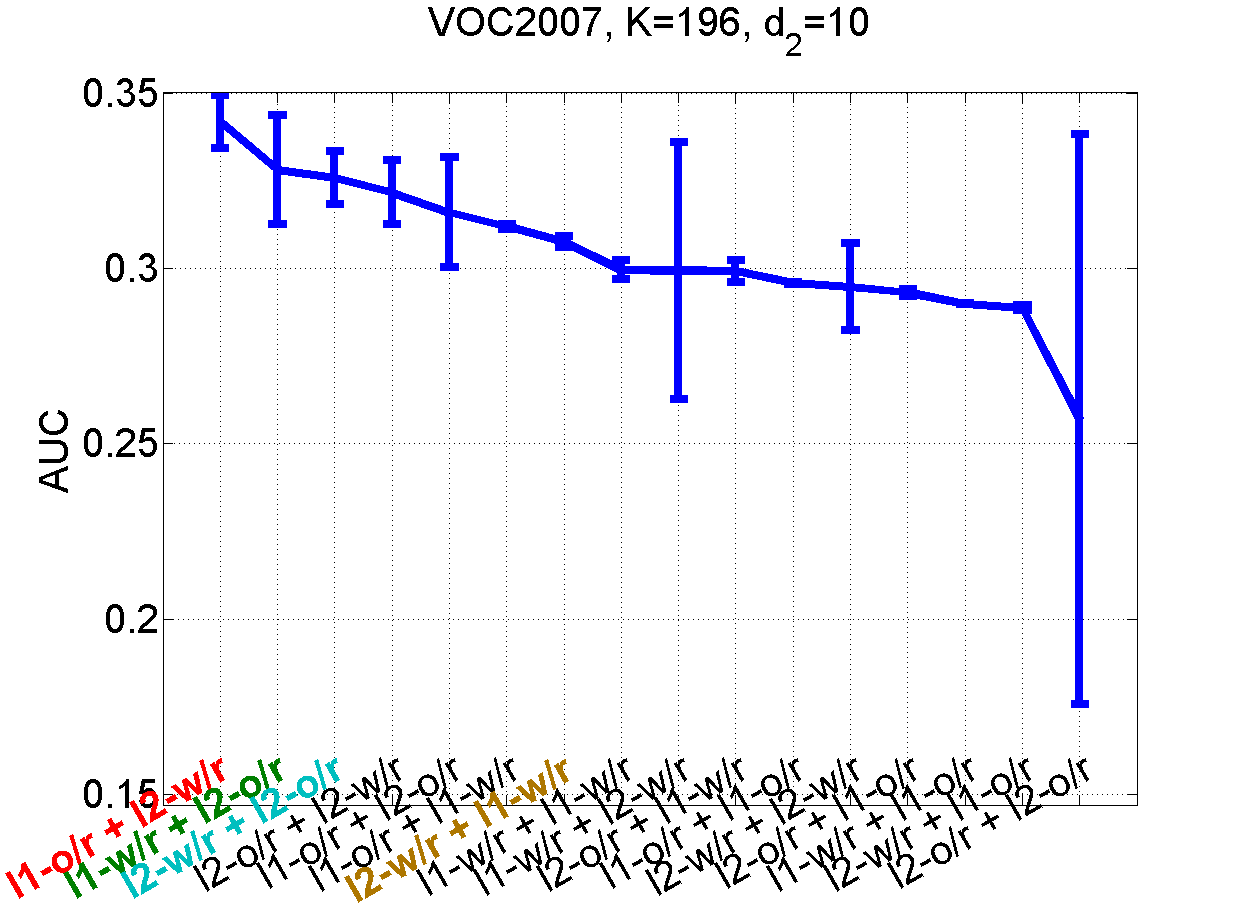}}
 \end{center}
\end{minipage}
\begin{minipage}[b]{0.33\linewidth}
 \begin{center}
 \centerline{\includegraphics[width=1.1\columnwidth]{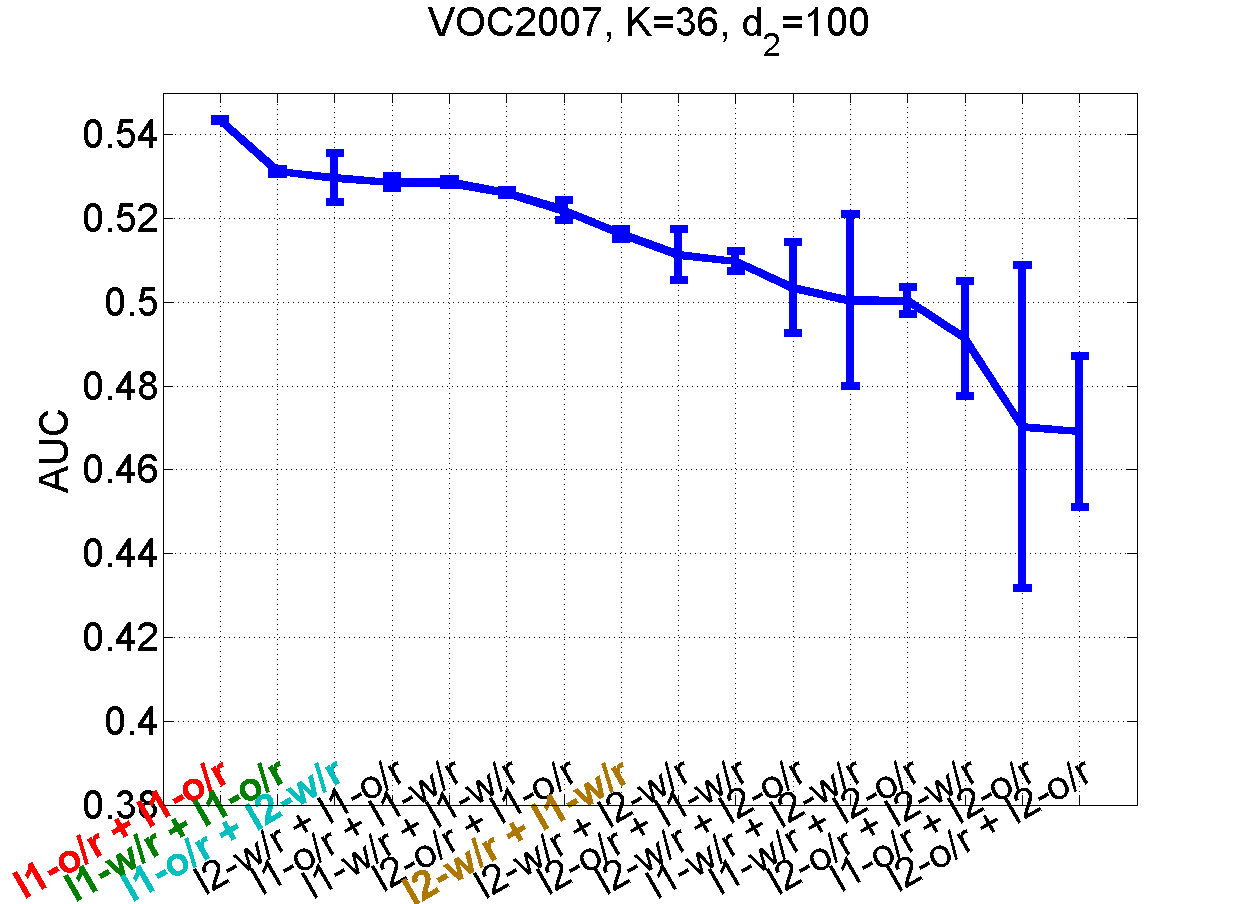}}
 \end{center}
\end{minipage}
\begin{minipage}[b]{0.33\linewidth}
 \begin{center}
 \centerline{\includegraphics[width=1.1\columnwidth]{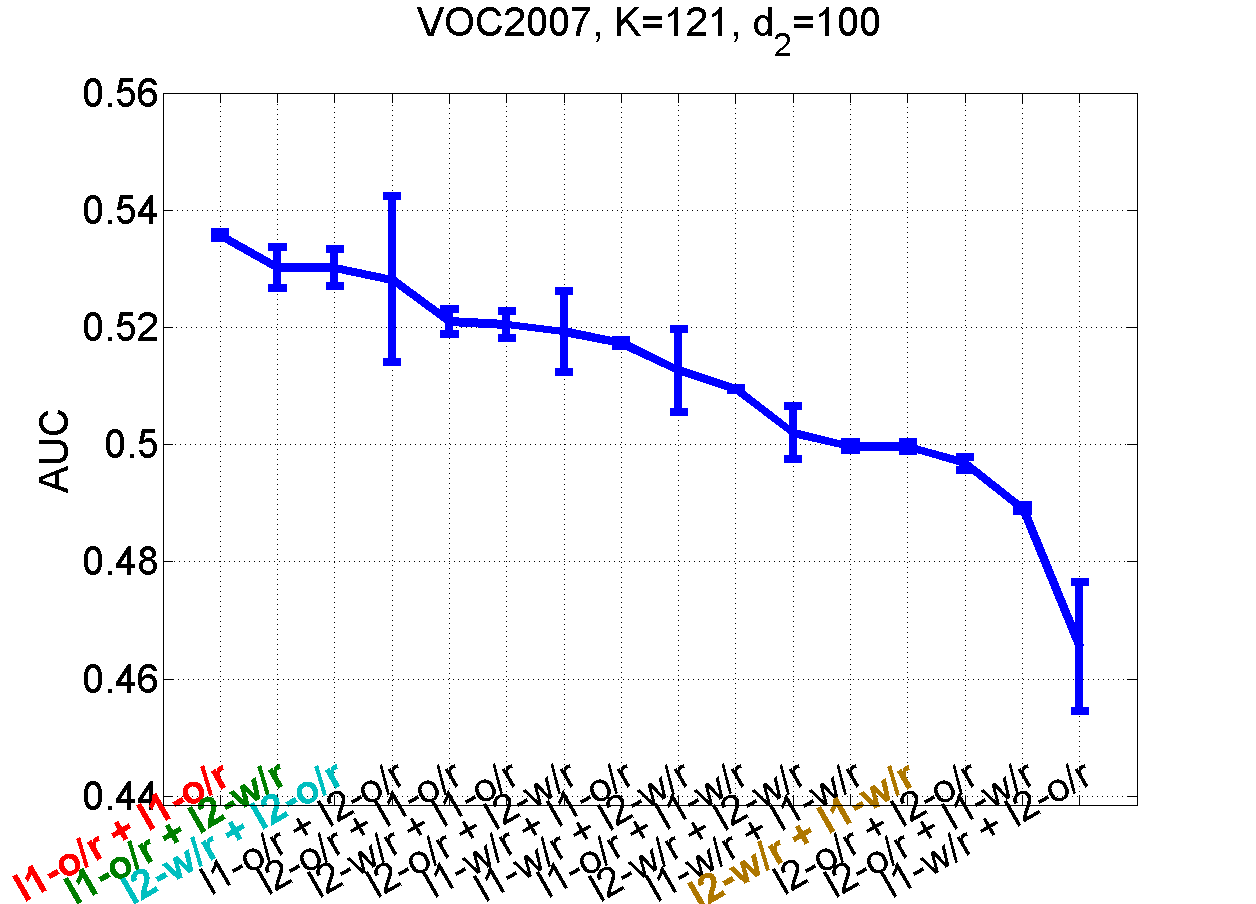}}
 \end{center}
\end{minipage}
\begin{minipage}[b]{0.33\linewidth}
 \begin{center}
 \centerline{\includegraphics[width=1.1\columnwidth]{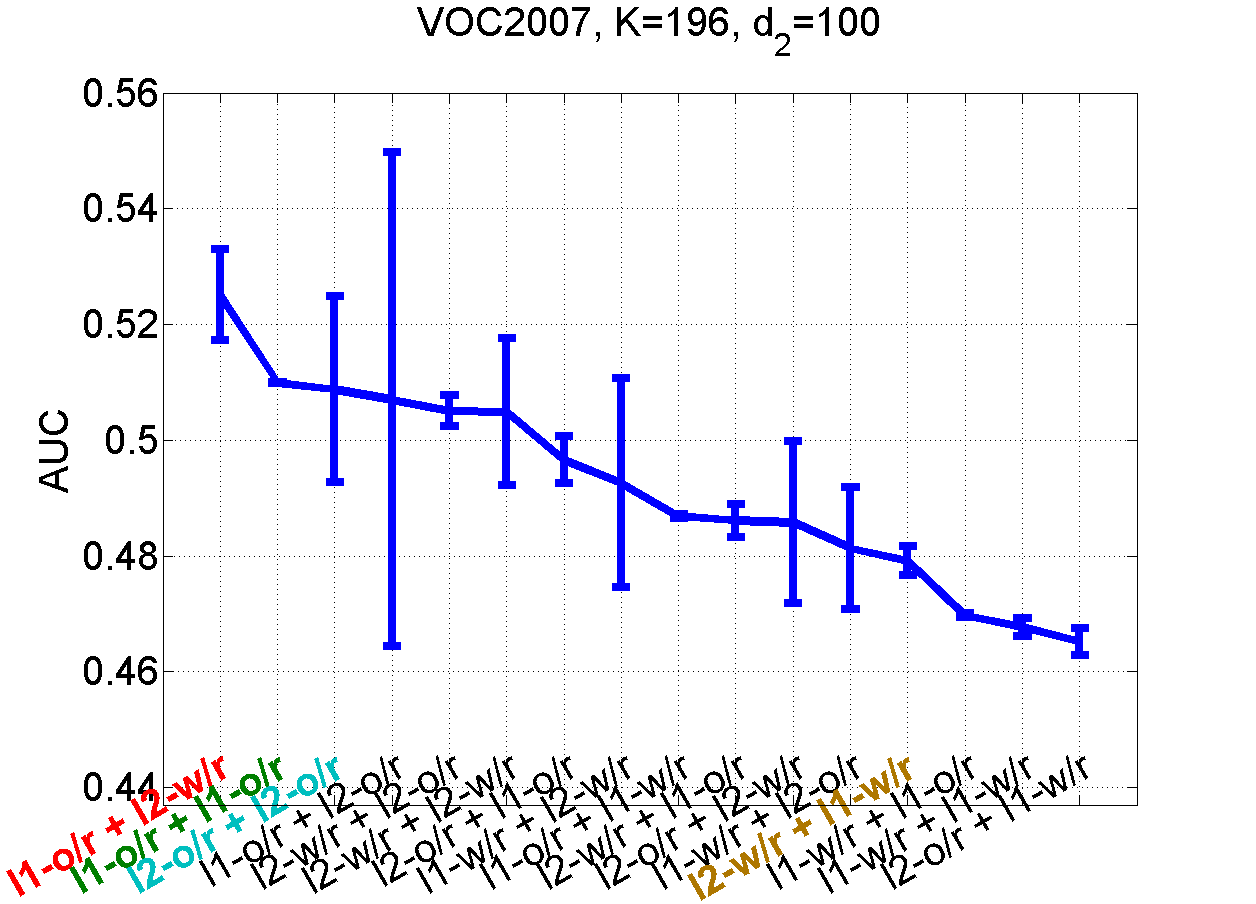}}
 \end{center}
\end{minipage}
\begin{minipage}[b]{0.33\linewidth}
 \begin{center}
 \centerline{\includegraphics[width=1.1\columnwidth]{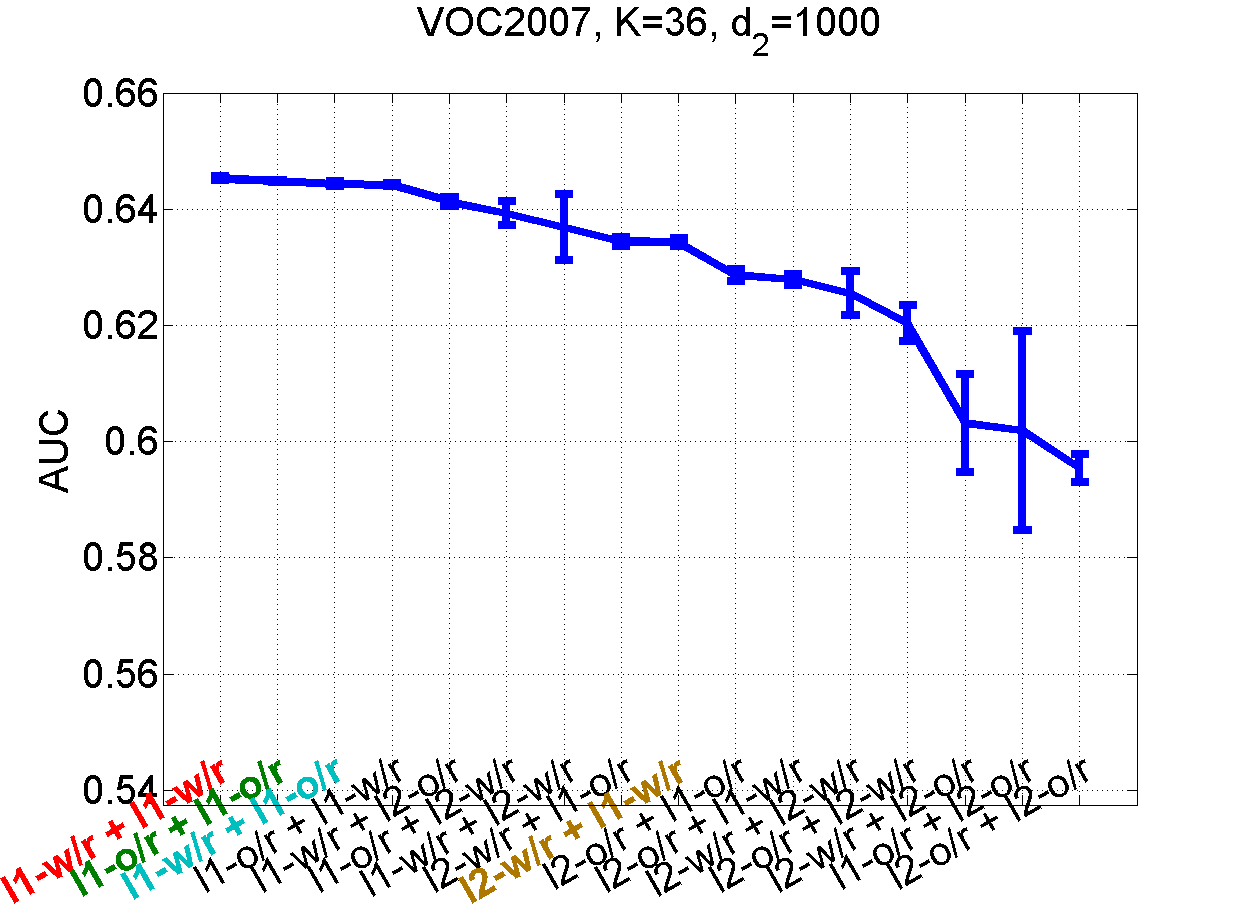}}
 \centerline{\footnotesize{(a) $K=36$}}
 \end{center}
\end{minipage}
\begin{minipage}[b]{0.33\linewidth}
 \begin{center}
 \centerline{\includegraphics[width=1.1\columnwidth]{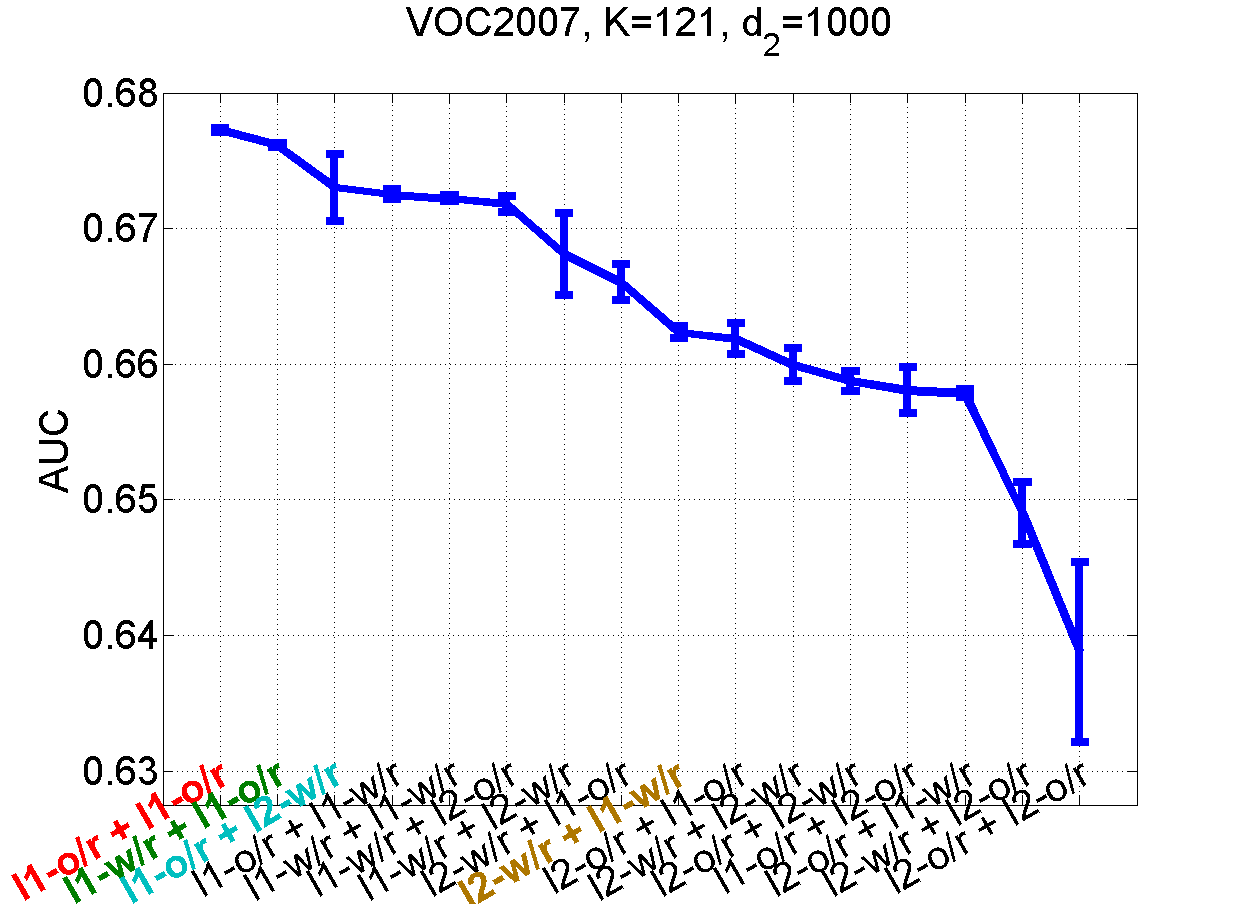}}
 \centerline{\footnotesize{(b) $K=121$}}
 \end{center}
\end{minipage}
\begin{minipage}[b]{0.33\linewidth}
 \begin{center}
 \centerline{\includegraphics[width=1.1\columnwidth]{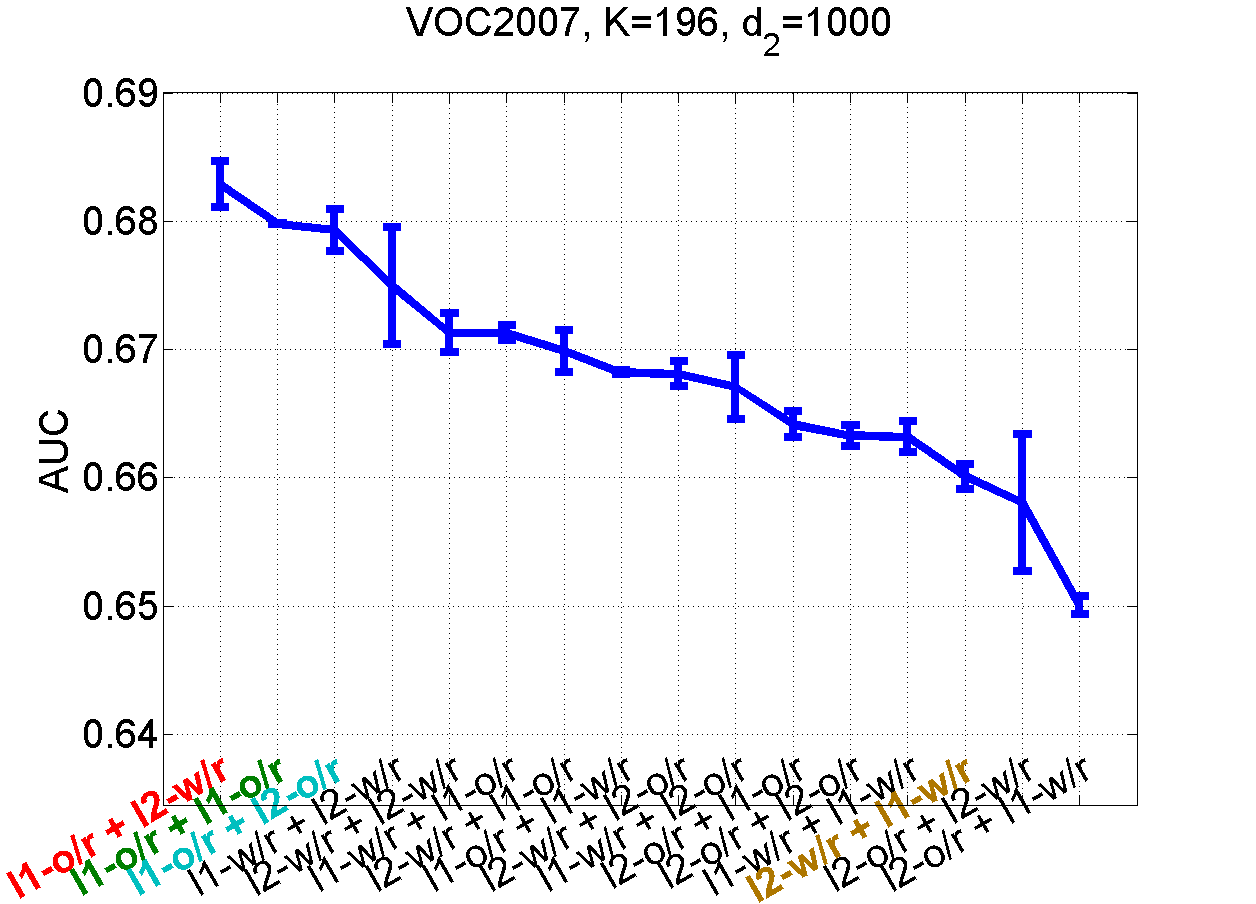}}
 \centerline{\footnotesize{(c) $K=196$}}
 \end{center}
\end{minipage}
\caption{{\footnotesize Comparison of different cascade settings (Stage \uppercase\expandafter{\romannumeral1} + Stage \uppercase\expandafter{\romannumeral2}) on VOC2007 using $K\in\{36,121,196\}$ and $d_2\in\{1,10,100,1000\}$, respectively. In each sub-figure, the cascade settings are sorted in descending order based on the means of different {\em area under object recall-overlap curves} (AUC) scores, the top 3 settings are colored by red, green, and cyan, respectively. Note that the setting ``$\ell_2-w/r + \ell_1-w/r$'' is the method proposed in \cite{zhang_cvpr11}, colored by yellow.}}\label{fig:cascade-comparison}
\vspace{-0mm}
\end{figure*}

\begin{figure*}[h]
\begin{minipage}[b]{0.33\linewidth}
 \begin{center}
 \centerline{\includegraphics[width=1.1\columnwidth]{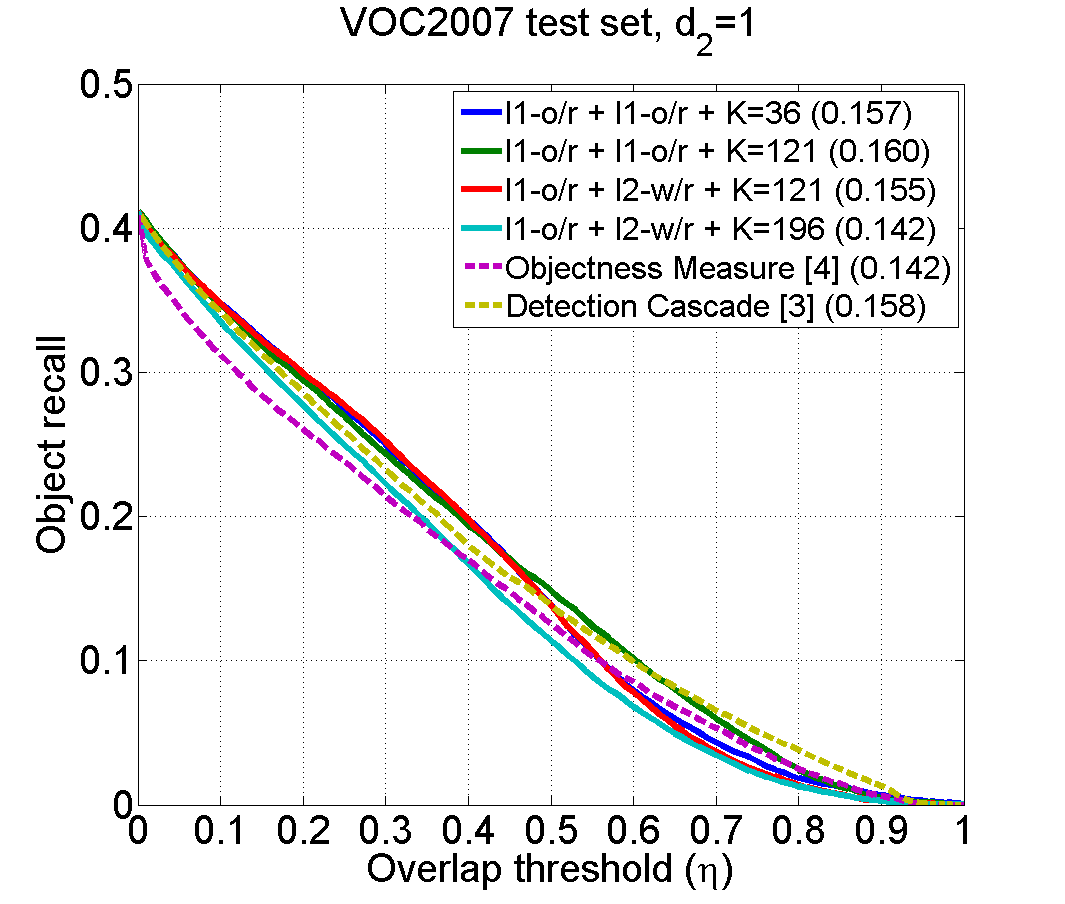}}
 \end{center}
\end{minipage}
\begin{minipage}[b]{0.33\linewidth}
 \begin{center}
 \centerline{\includegraphics[width=1.1\columnwidth]{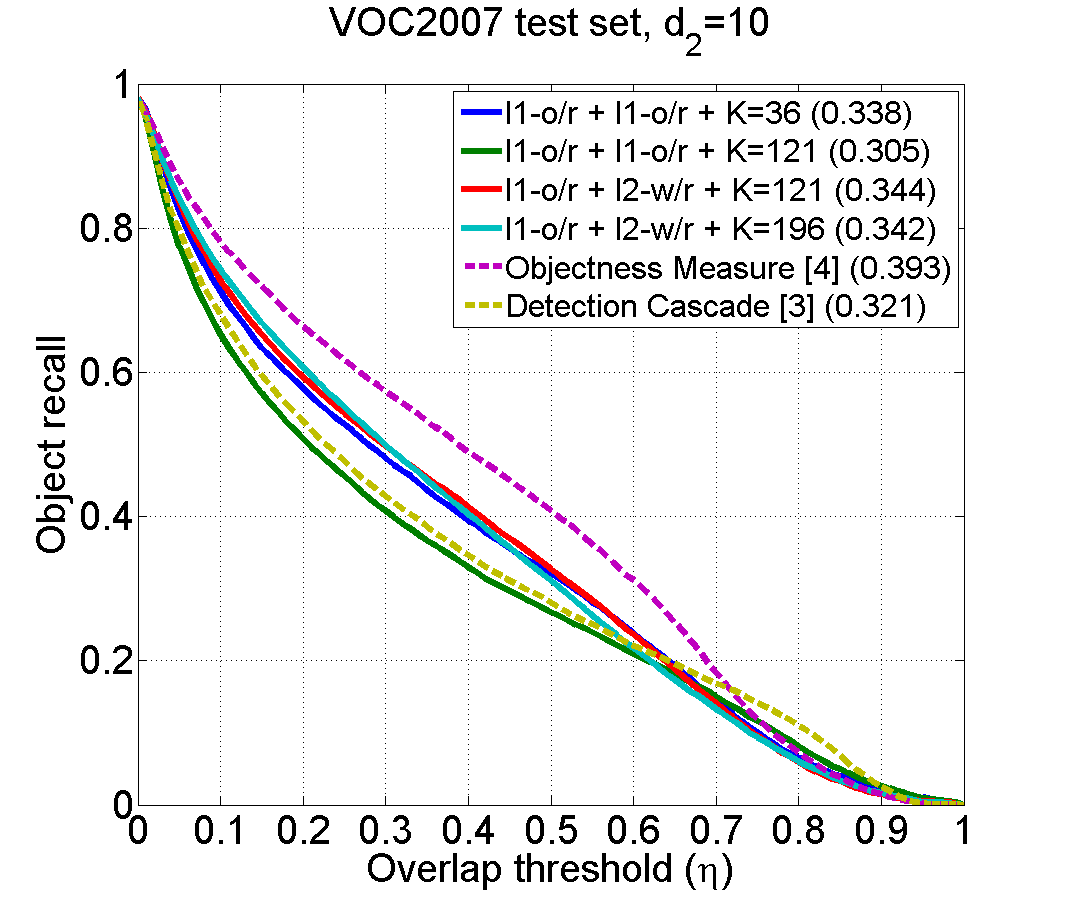}}
 \end{center}
\end{minipage}
\begin{minipage}[b]{0.33\linewidth}
 \begin{center}
 \centerline{\includegraphics[width=1.1\columnwidth]{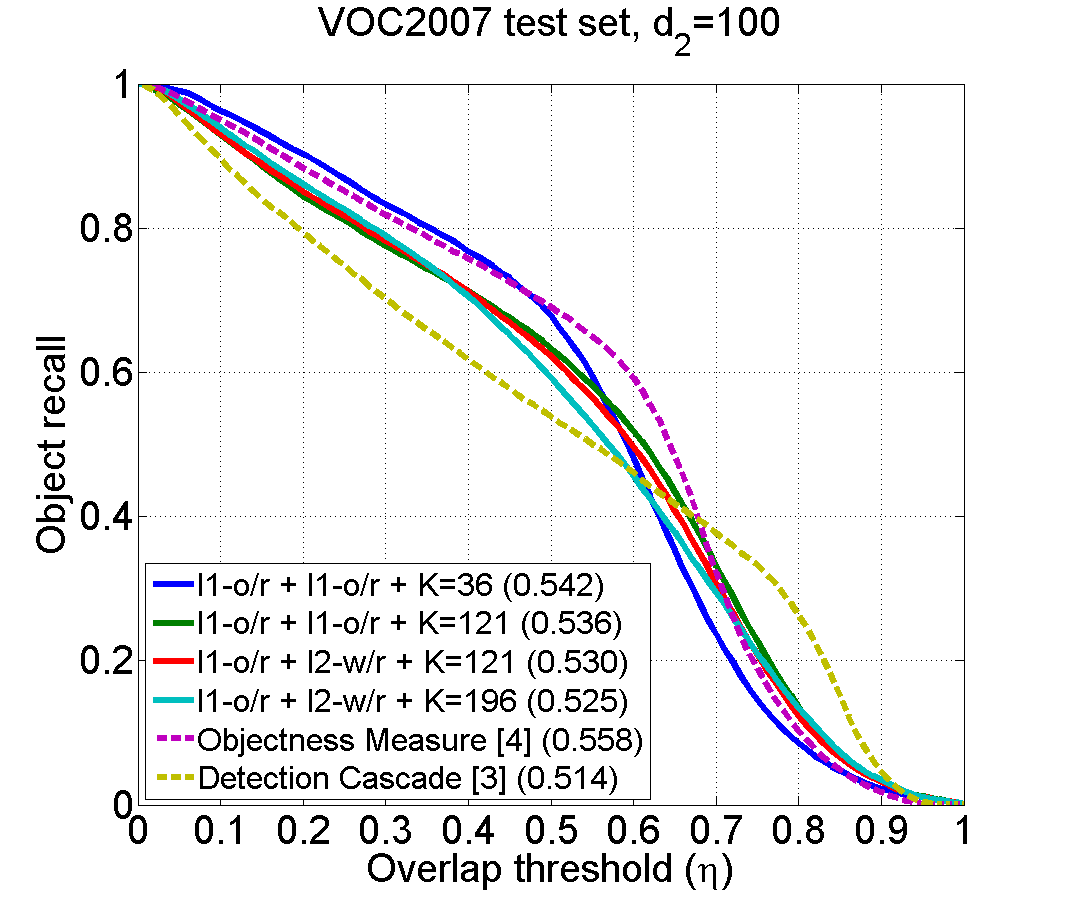}}
 \end{center}
\end{minipage}
\begin{minipage}[b]{0.33\linewidth}
 \begin{center}
 \centerline{\includegraphics[width=1.1\columnwidth]{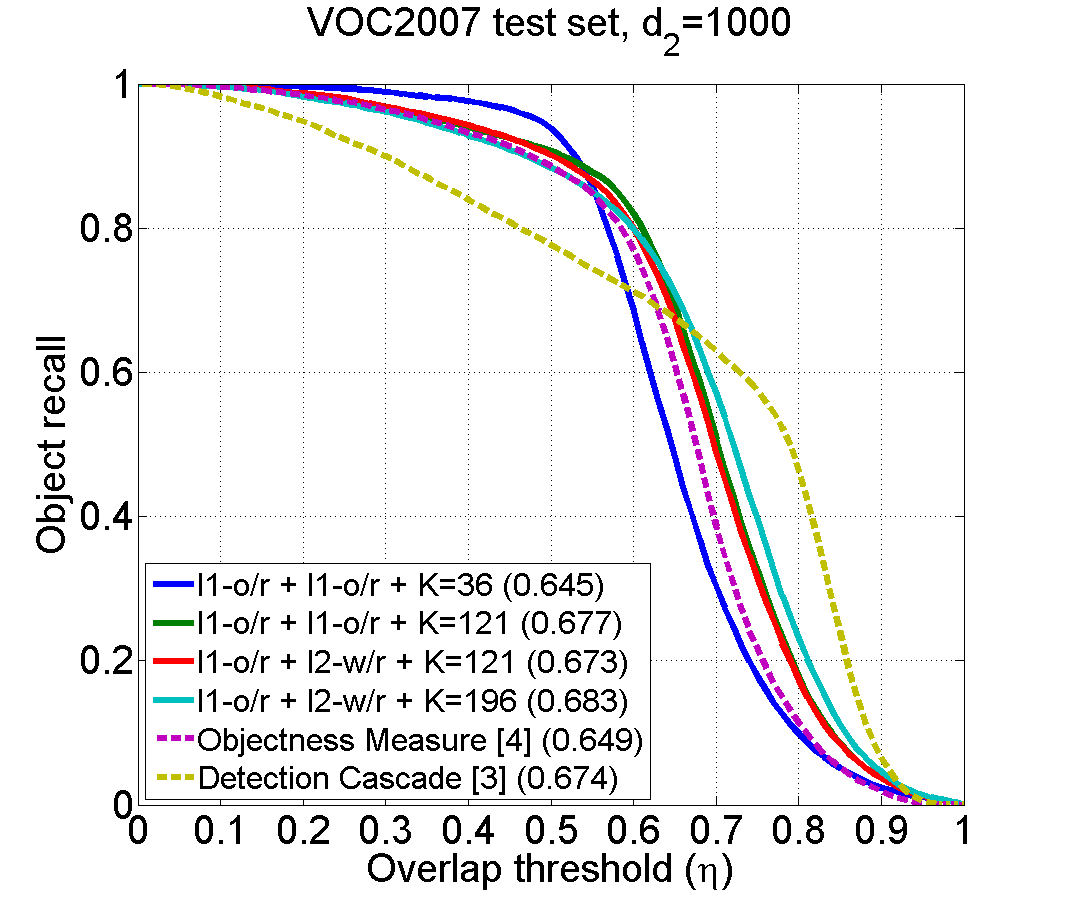}}
 \end{center}
\end{minipage}
\caption{{\footnotesize Comparison of recall-overlap curves using different methods and $d_2$'s on VOC2007. The numbers in the brackets are AUC scores for the methods. Among the 4 cascade settings, our method with larger $K$ achieves better AUC scores using more proposals. Overall, with a same number of proposals, each individual method has similar behaviors. In terms of object recall at $\eta=0.5$, ours and \cite{Alexe2012pami} perform very similarly, and both outperform \cite{Rahtu_iccv11} in most cases. But in terms of localization quality of proposals, \cite{Rahtu_iccv11} performs best.}}\label{fig:recall-overlap}
\vspace{-0mm}
\end{figure*}

\begin{figure*}[h]

 \centerline{\includegraphics[width=\columnwidth]{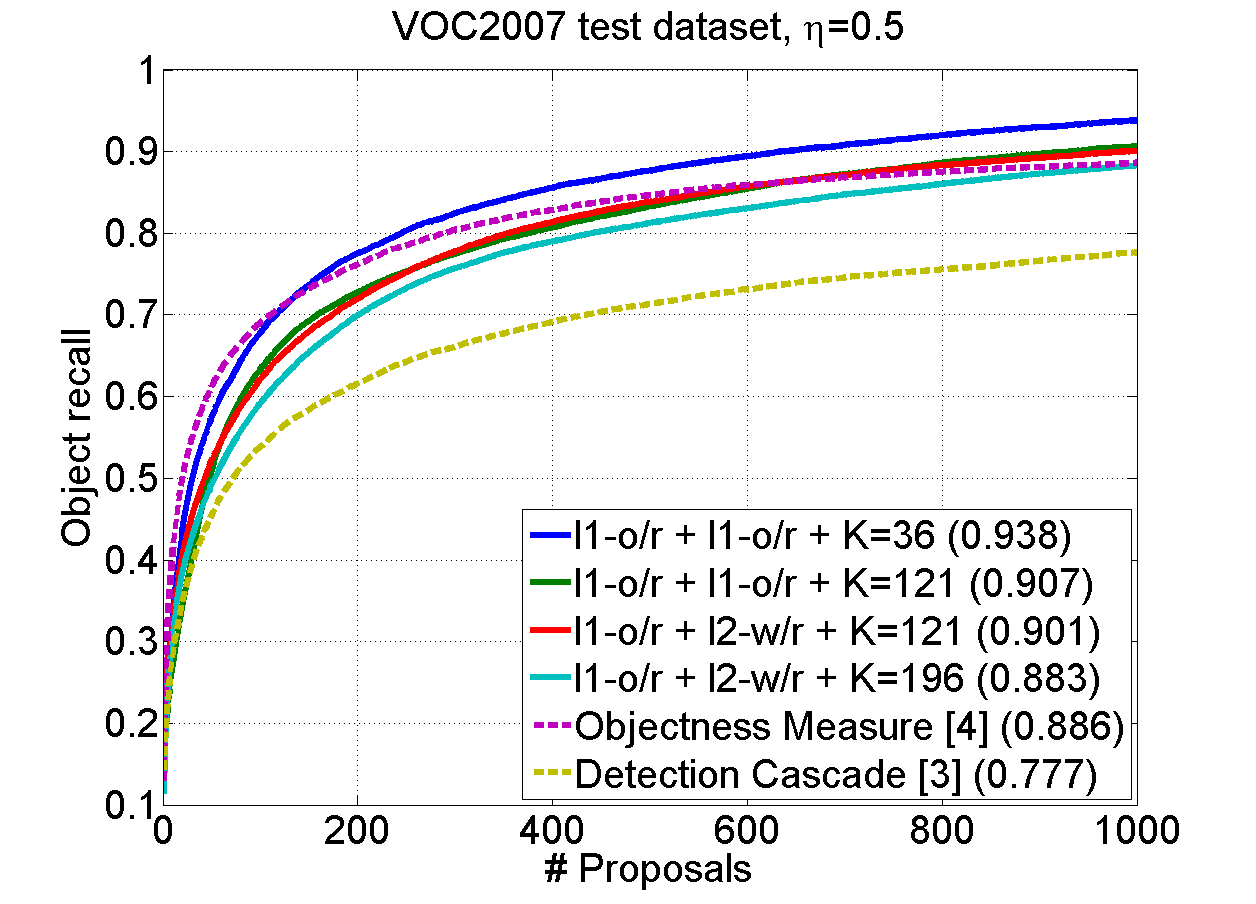}}
\caption{{\footnotesize Comparison of recall-proposal curves using different methods on VOC2007. The numbers in the brackets are the object recalls using 1000 proposals. Among the 4 cascade settings, $\ell_1-o/r+\ell_1-o/r$ performs best. Still our method and \cite{Alexe2012pami} have similar behaviors on both datasets, and both outperform \cite{Rahtu_iccv11} significantly. Using 1000 proposals, our method outperforms \cite{Alexe2012pami,Rahtu_iccv11} by 5.2\% and 16.1\% on VOC2007, respectively.}}\label{fig:recall-proposal}
\vspace{-0mm}
\end{figure*}

\begin{figure*}[t]
\begin{minipage}[b]{0.245\linewidth}
 \begin{center}
 \centerline{\includegraphics[width=1.05\columnwidth]{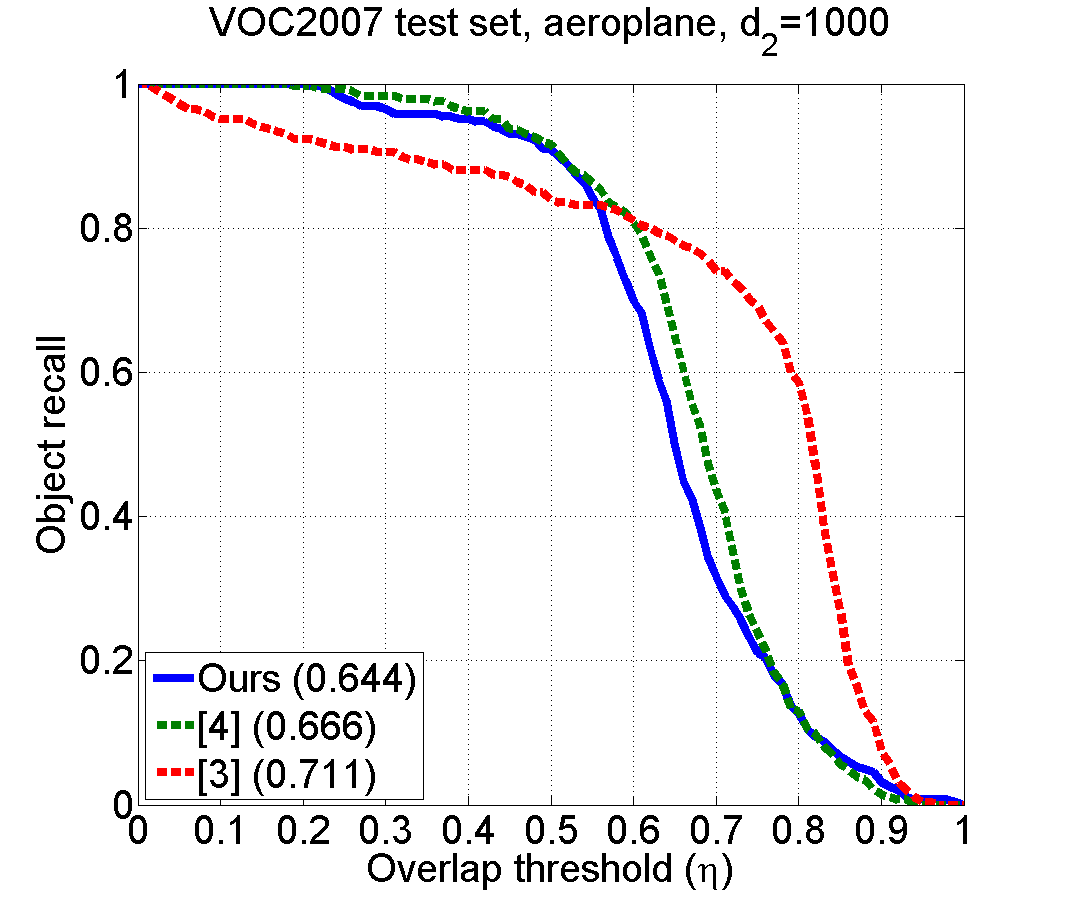}}
 \end{center}
\end{minipage}
\begin{minipage}[b]{0.245\linewidth}
 \begin{center}
 \centerline{\includegraphics[width=1.05\columnwidth]{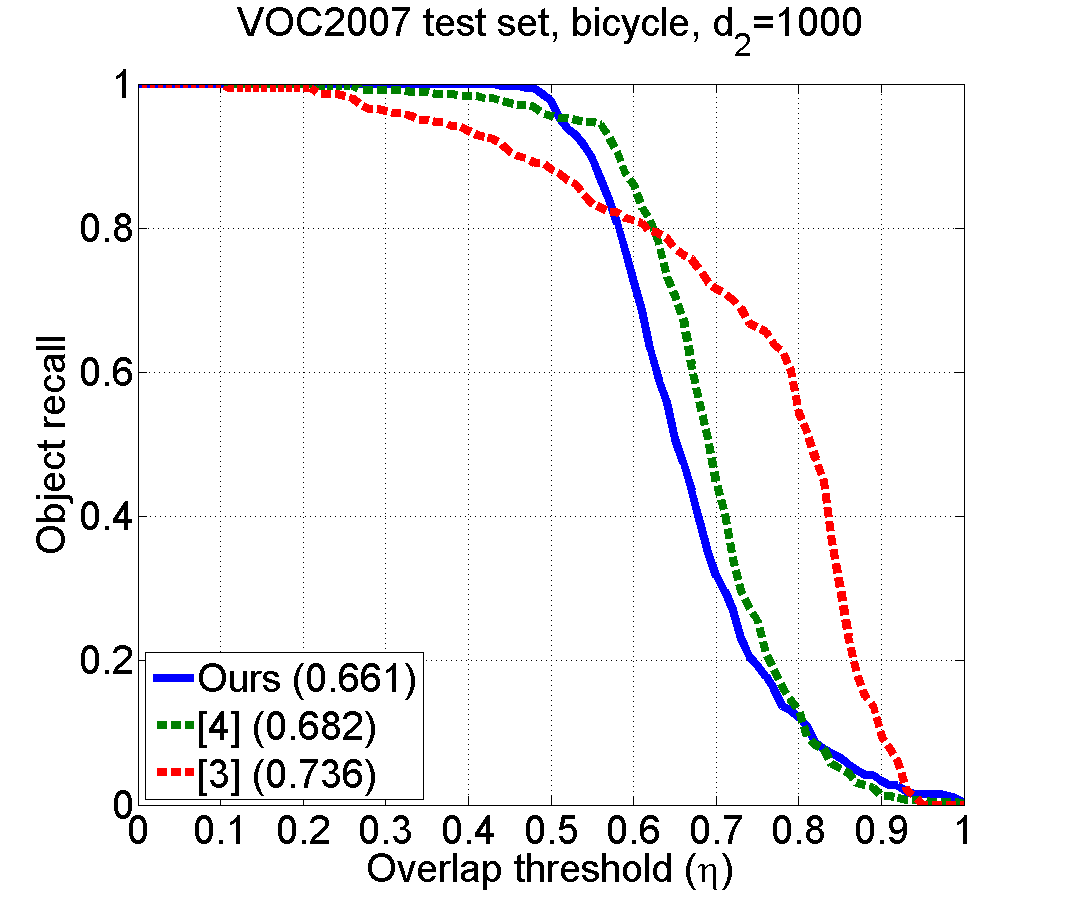}}
 \end{center}
\end{minipage}
\begin{minipage}[b]{0.245\linewidth}
 \begin{center}
 \centerline{\includegraphics[width=1.05\columnwidth]{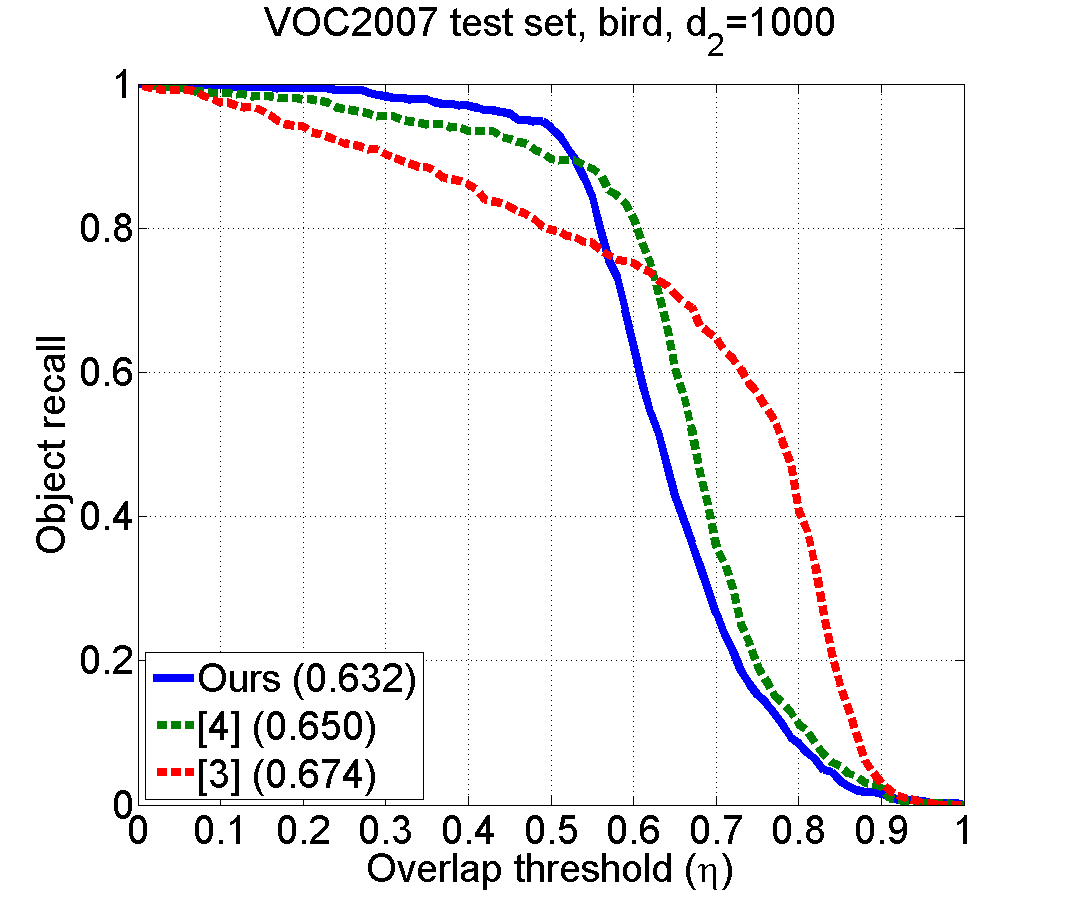}}
 \end{center}
\end{minipage}
\begin{minipage}[b]{0.245\linewidth}
 \begin{center}
 \centerline{\includegraphics[width=1.05\columnwidth]{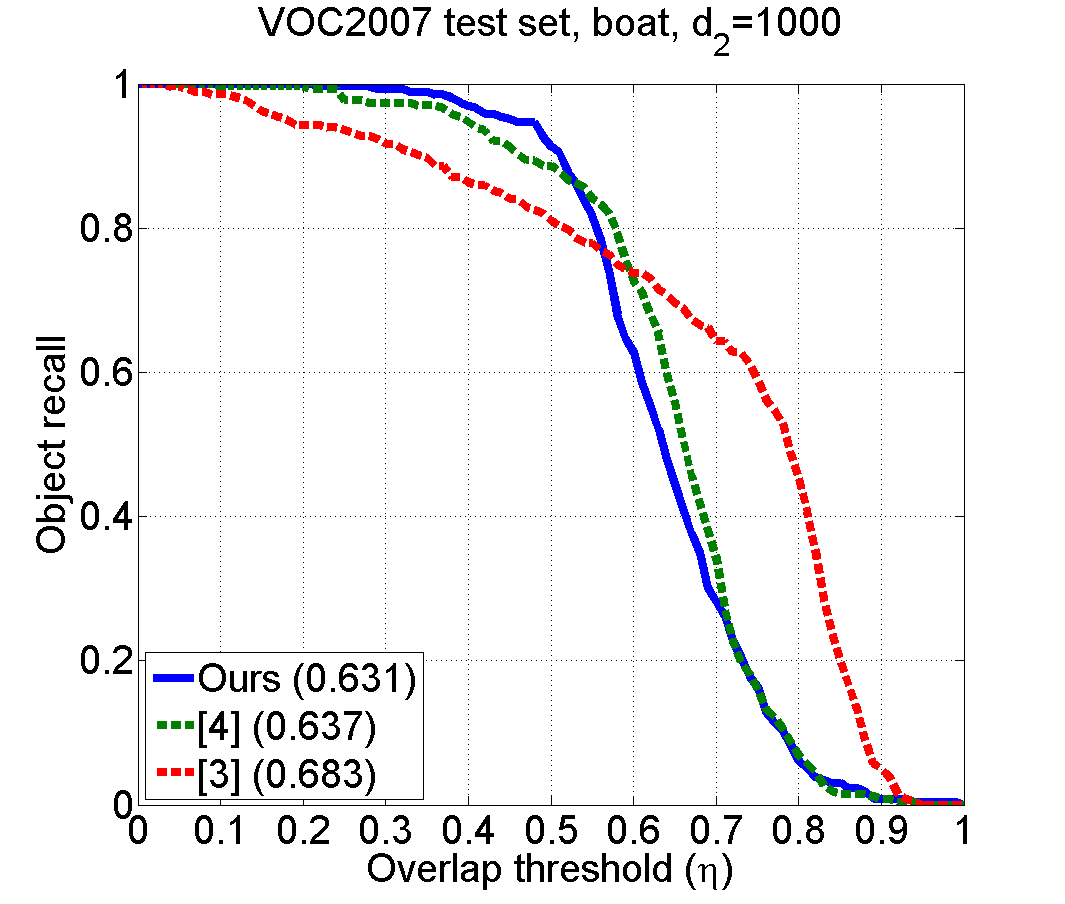}}
 \end{center}
\end{minipage}
\begin{minipage}[b]{0.245\linewidth}
 \begin{center}
 \centerline{\includegraphics[width=1.05\columnwidth]{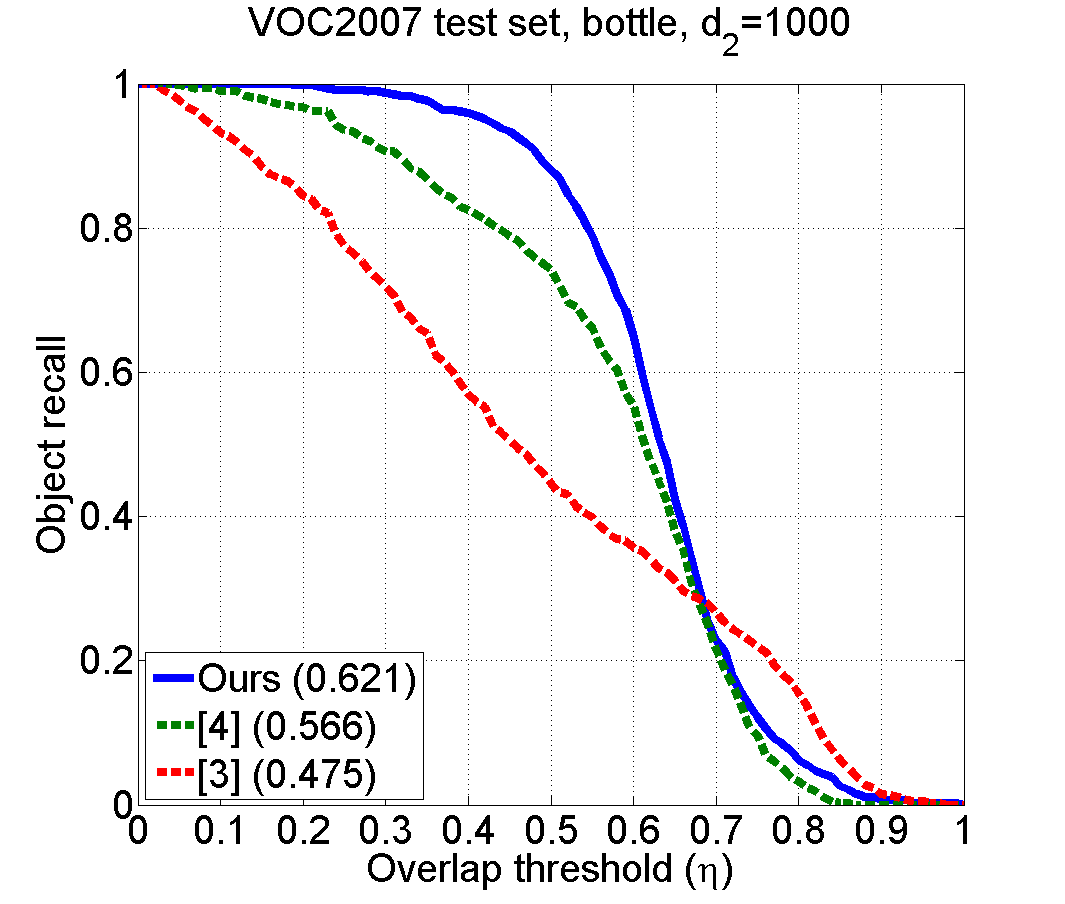}}
 \end{center}
\end{minipage}
\begin{minipage}[b]{0.245\linewidth}
 \begin{center}
 \centerline{\includegraphics[width=1.05\columnwidth]{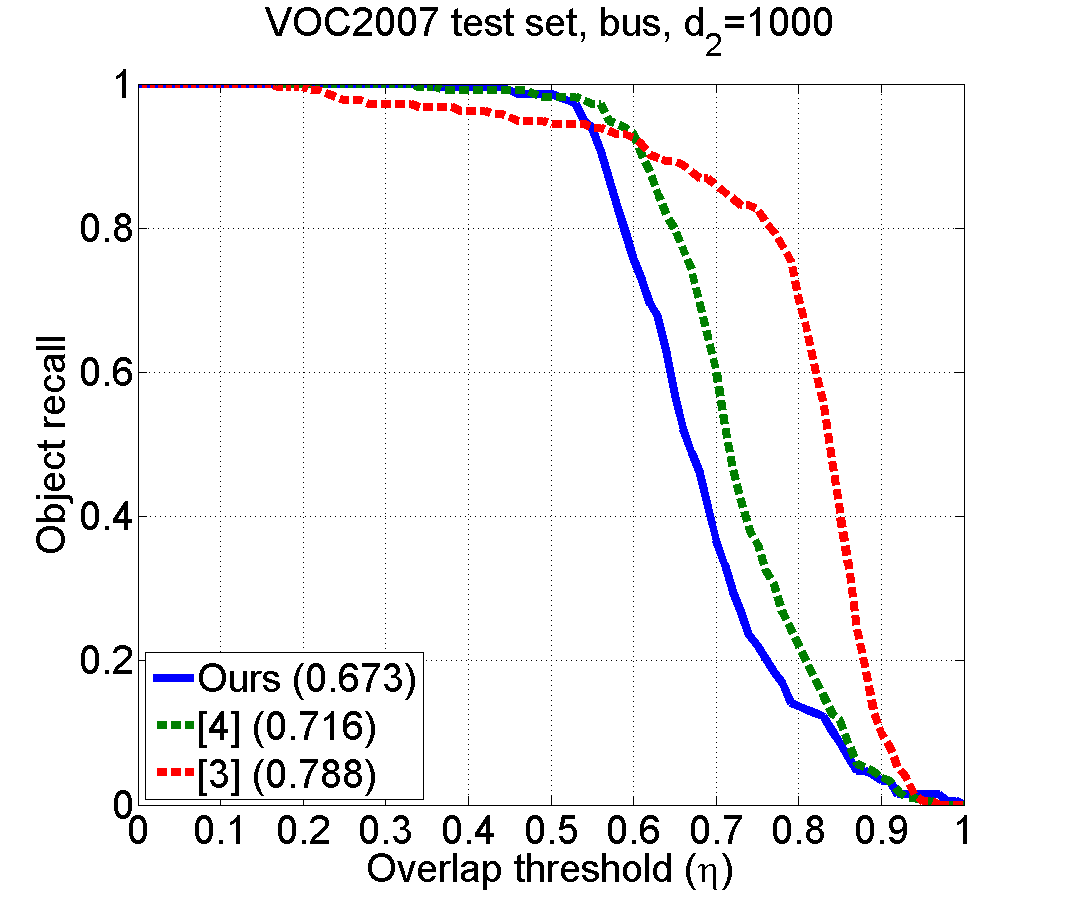}}
 \end{center}
\end{minipage}
\begin{minipage}[b]{0.245\linewidth}
 \begin{center}
 \centerline{\includegraphics[width=1.05\columnwidth]{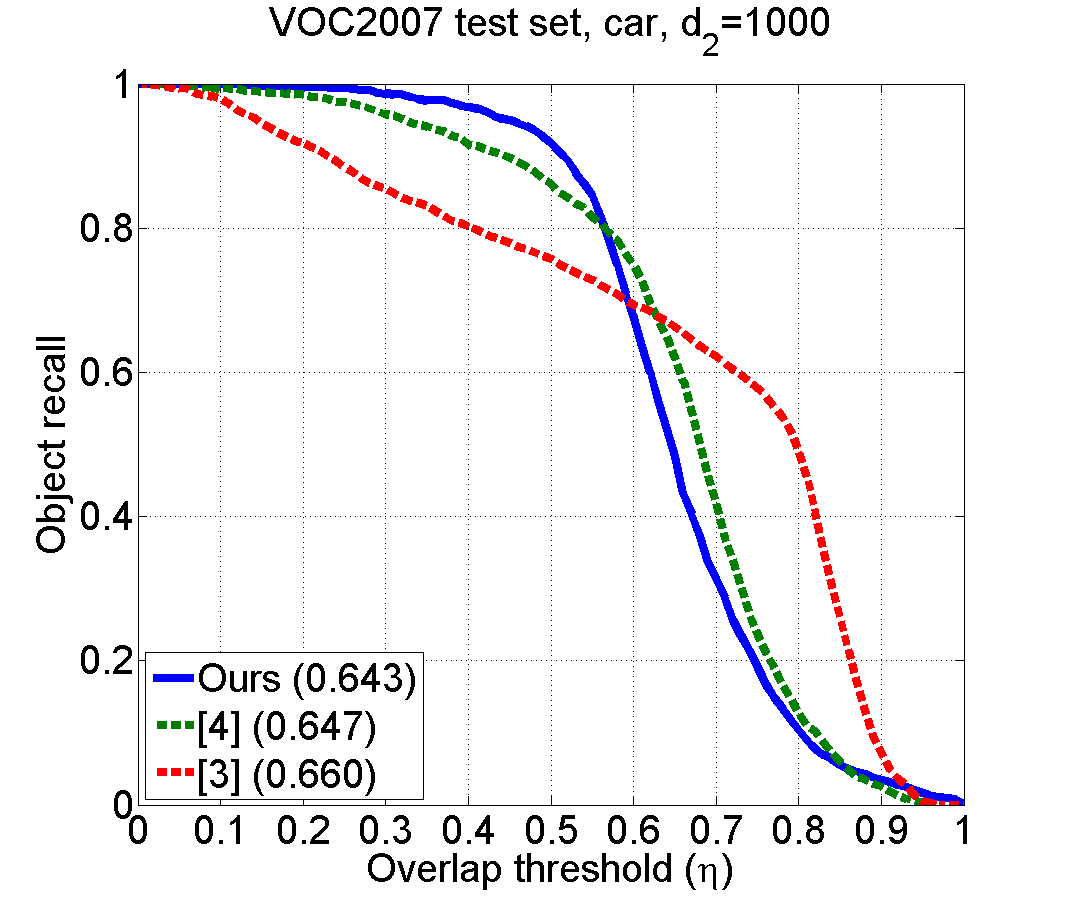}}
 \end{center}
\end{minipage}
\begin{minipage}[b]{0.245\linewidth}
 \begin{center}
 \centerline{\includegraphics[width=1.05\columnwidth]{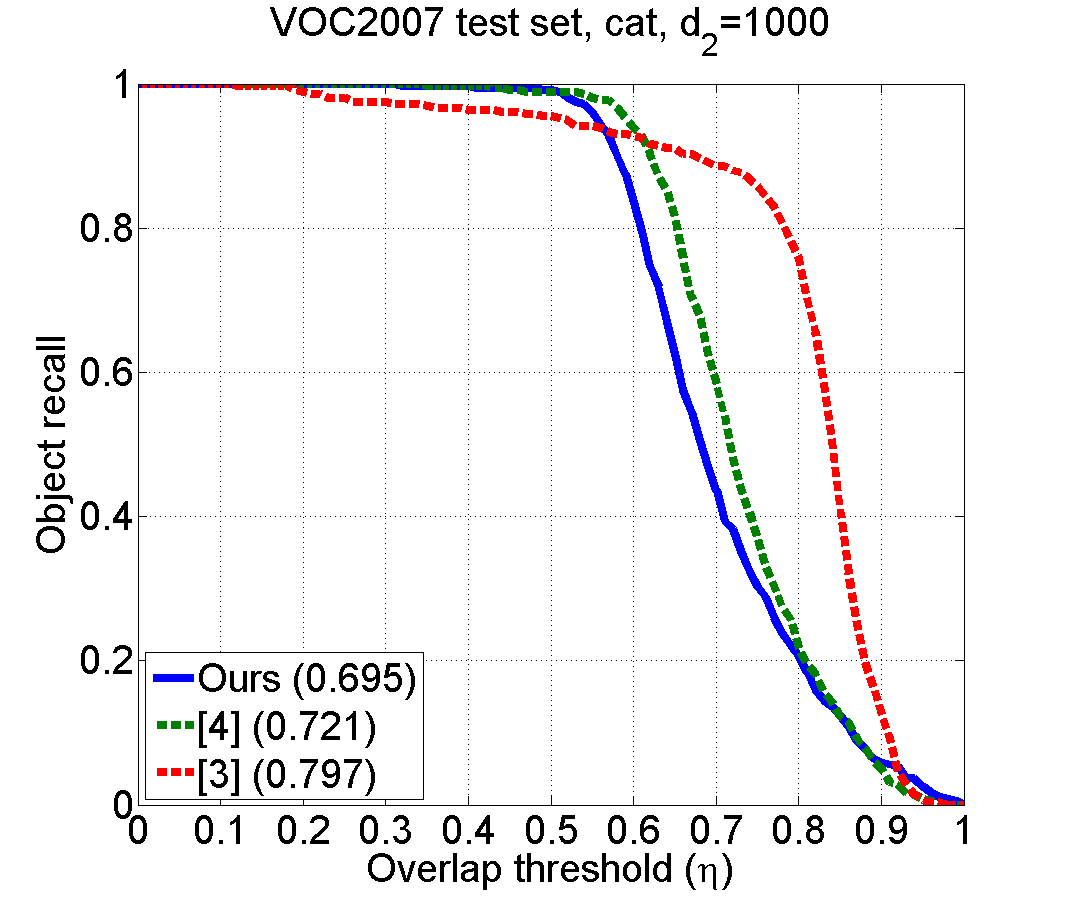}}
 \end{center}
\end{minipage}
\begin{minipage}[b]{0.245\linewidth}
 \begin{center}
 \centerline{\includegraphics[width=1.05\columnwidth]{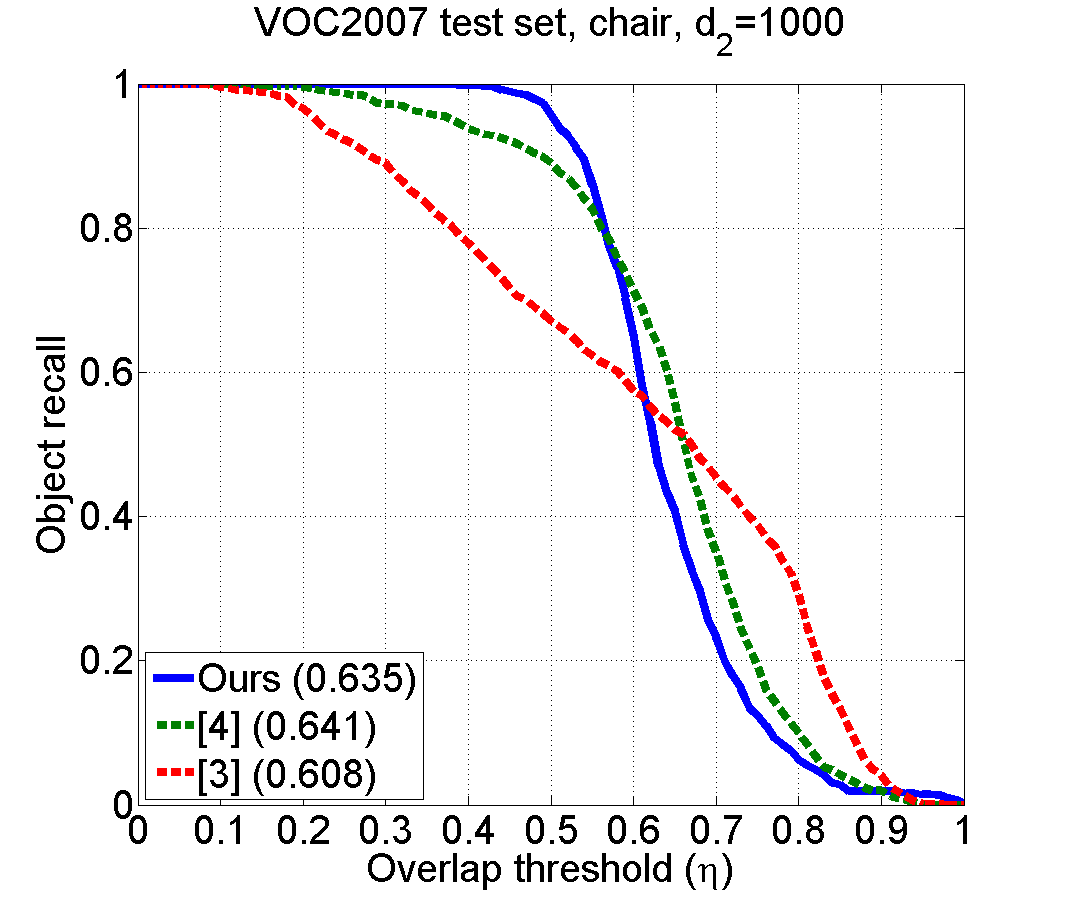}}
 \end{center}
\end{minipage}
\begin{minipage}[b]{0.245\linewidth}
 \begin{center}
 \centerline{\includegraphics[width=1.05\columnwidth]{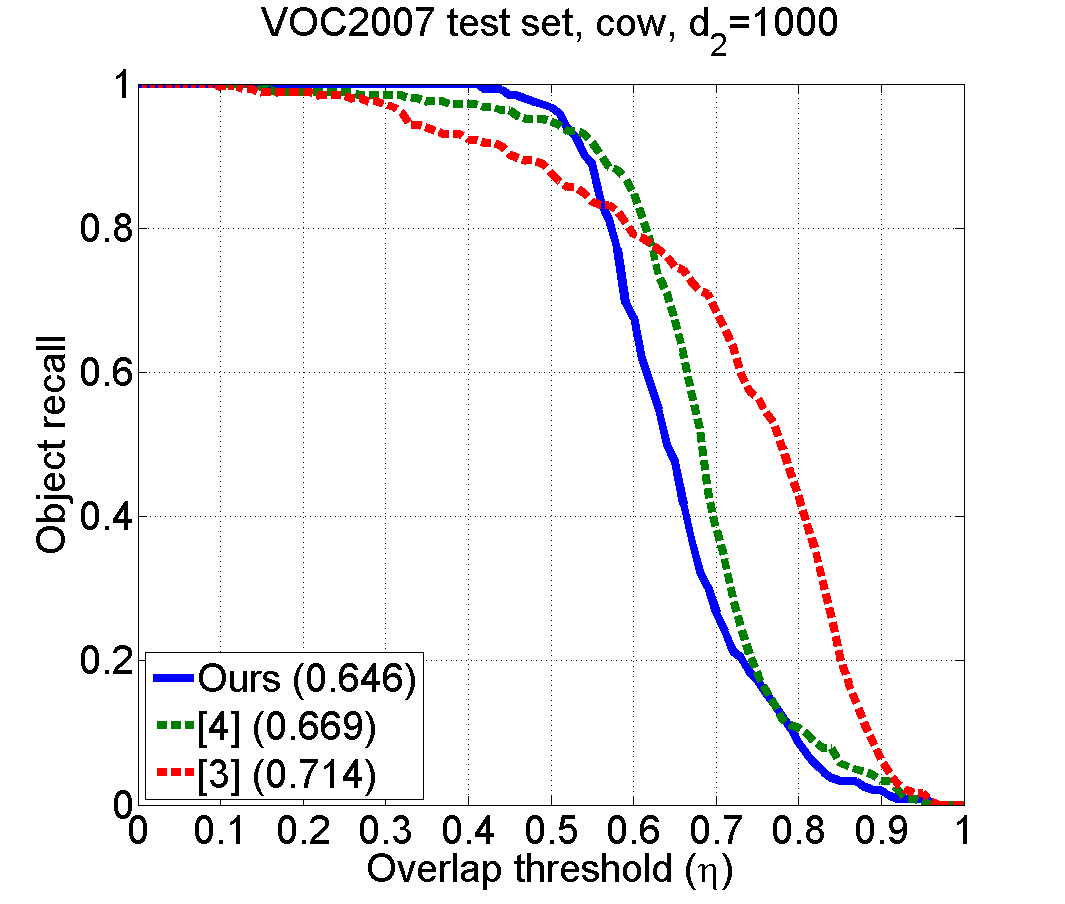}}
 \end{center}
\end{minipage}
\begin{minipage}[b]{0.245\linewidth}
 \begin{center}
 \centerline{\includegraphics[width=1.05\columnwidth]{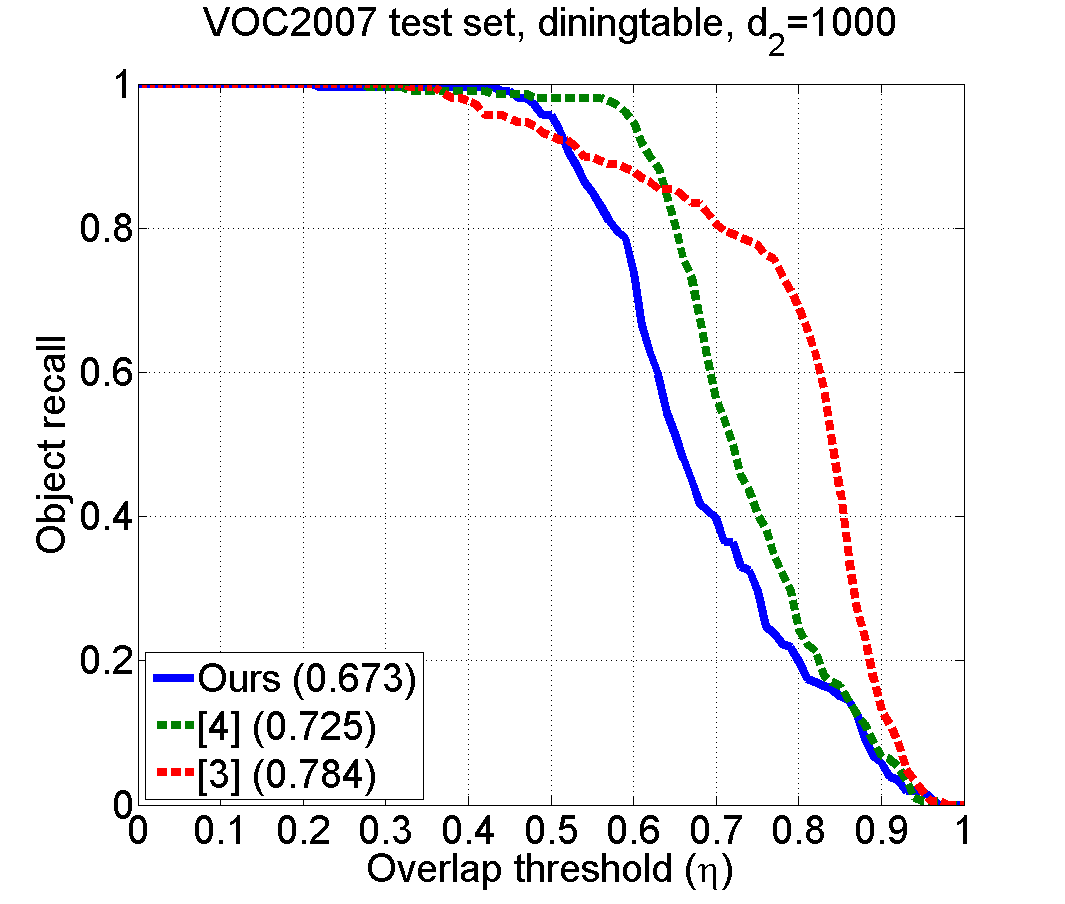}}
 \end{center}
\end{minipage}
\begin{minipage}[b]{0.245\linewidth}
 \begin{center}
 \centerline{\includegraphics[width=1.05\columnwidth]{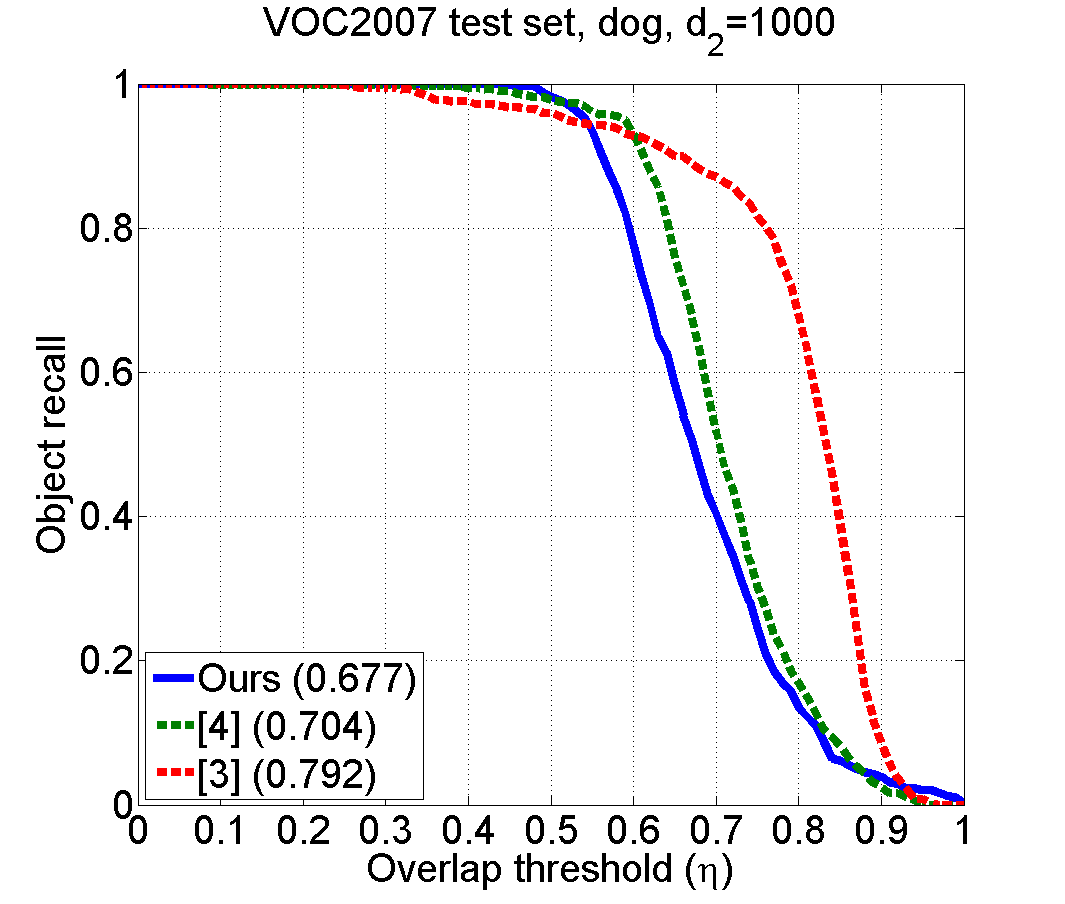}}
 \end{center}
\end{minipage}
\begin{minipage}[b]{0.245\linewidth}
 \begin{center}
 \centerline{\includegraphics[width=1.05\columnwidth]{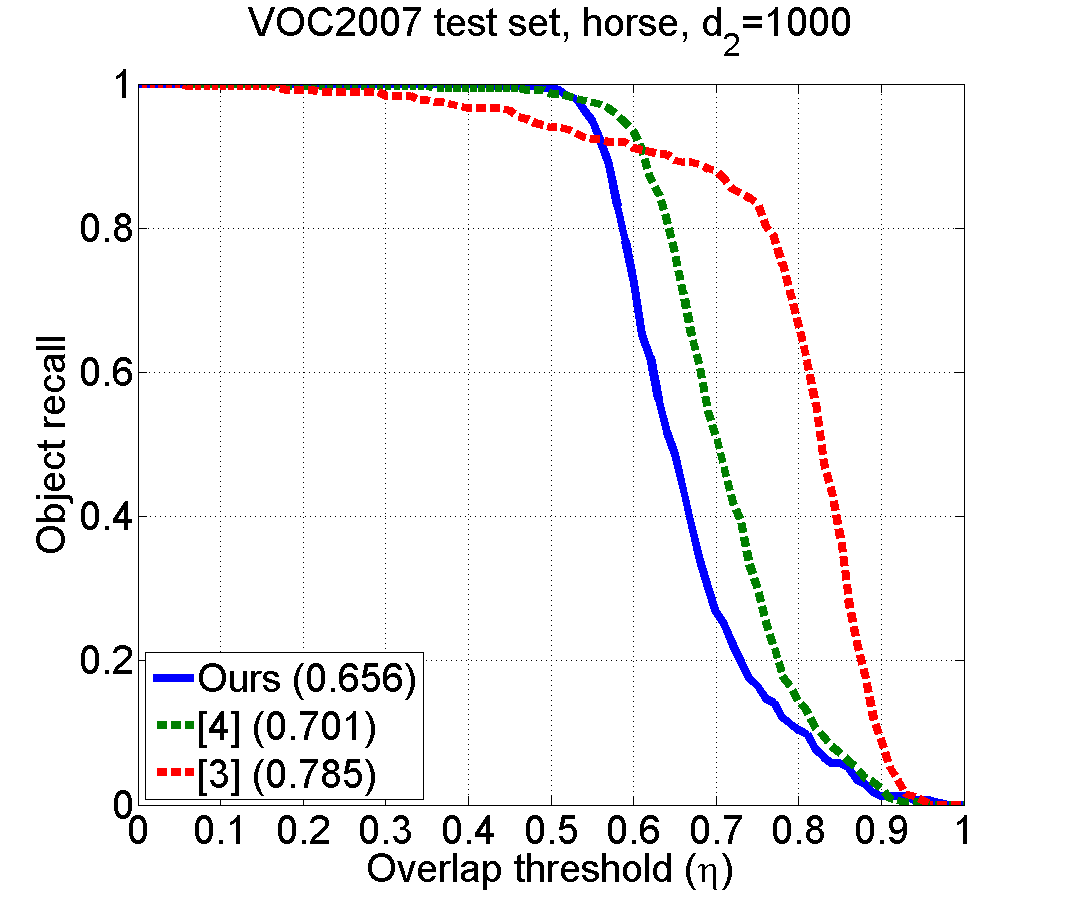}}
 \end{center}
\end{minipage}
\begin{minipage}[b]{0.245\linewidth}
 \begin{center}
 \centerline{\includegraphics[width=1.05\columnwidth]{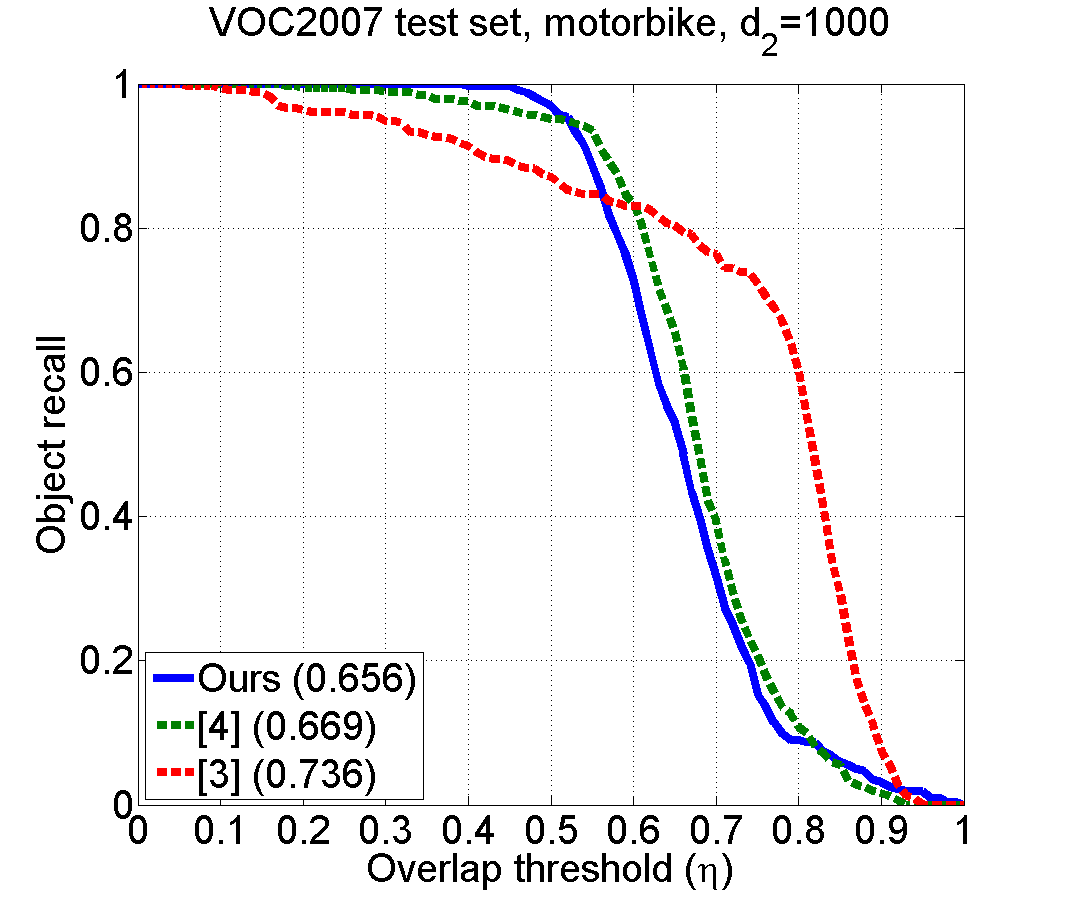}}
 \end{center}
\end{minipage}
\begin{minipage}[b]{0.245\linewidth}
 \begin{center}
 \centerline{\includegraphics[width=1.05\columnwidth]{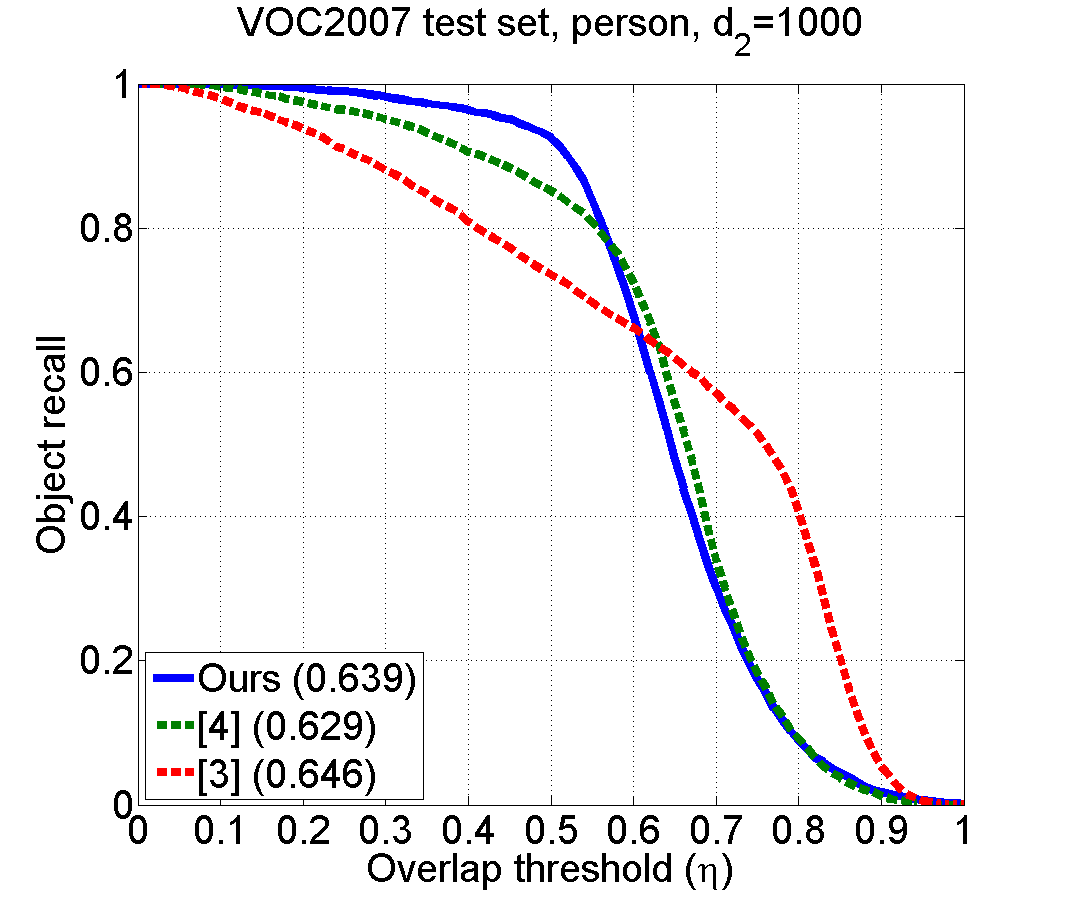}}
 \end{center}
\end{minipage}
\begin{minipage}[b]{0.245\linewidth}
 \begin{center}
 \centerline{\includegraphics[width=1.05\columnwidth]{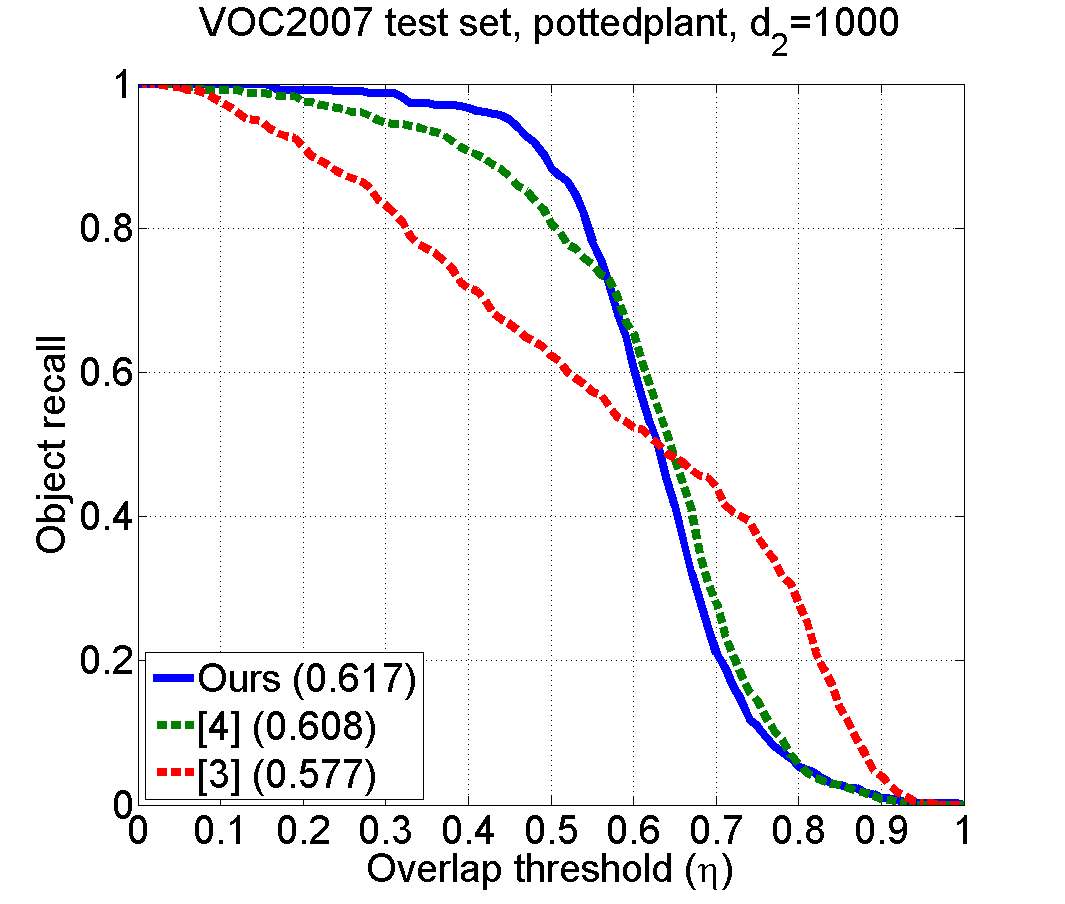}}
 \end{center}
\end{minipage}
\begin{minipage}[b]{0.245\linewidth}
 \begin{center}
 \centerline{\includegraphics[width=1.05\columnwidth]{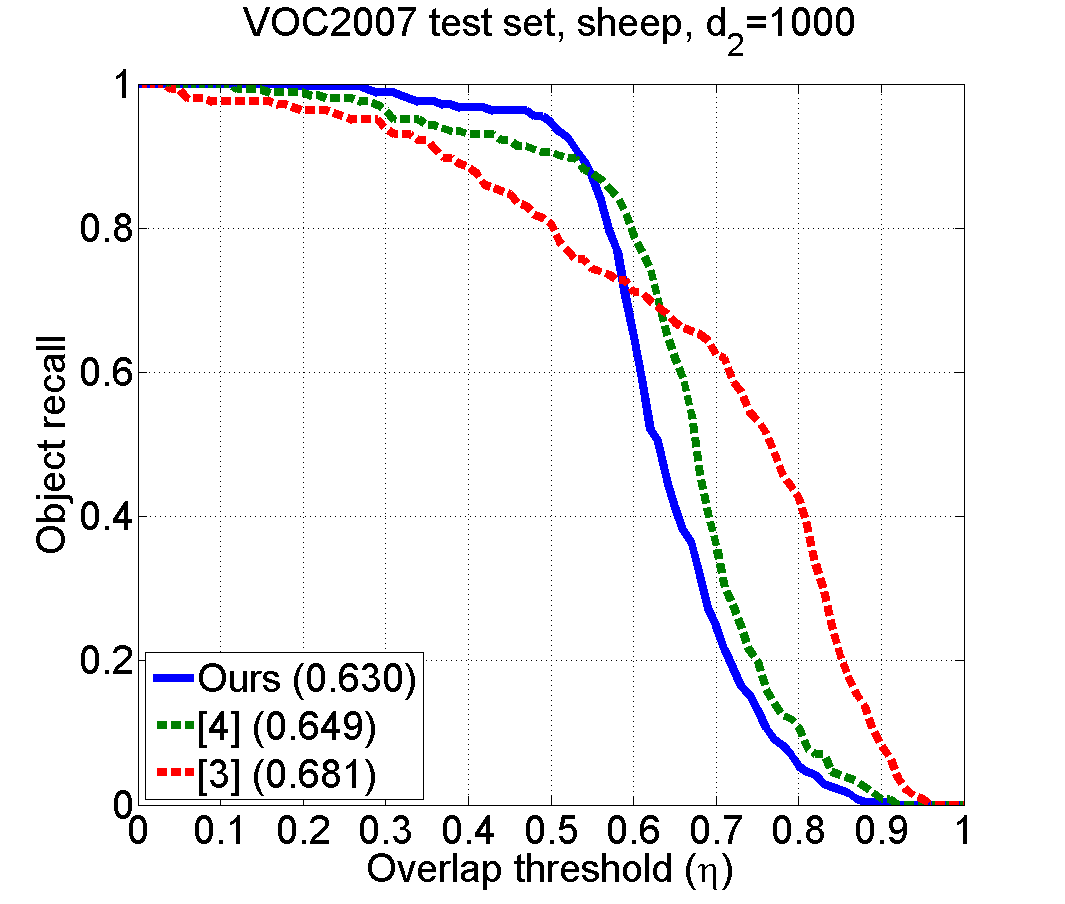}}
 \end{center}
\end{minipage}
\begin{minipage}[b]{0.245\linewidth}
 \begin{center}
 \centerline{\includegraphics[width=1.05\columnwidth]{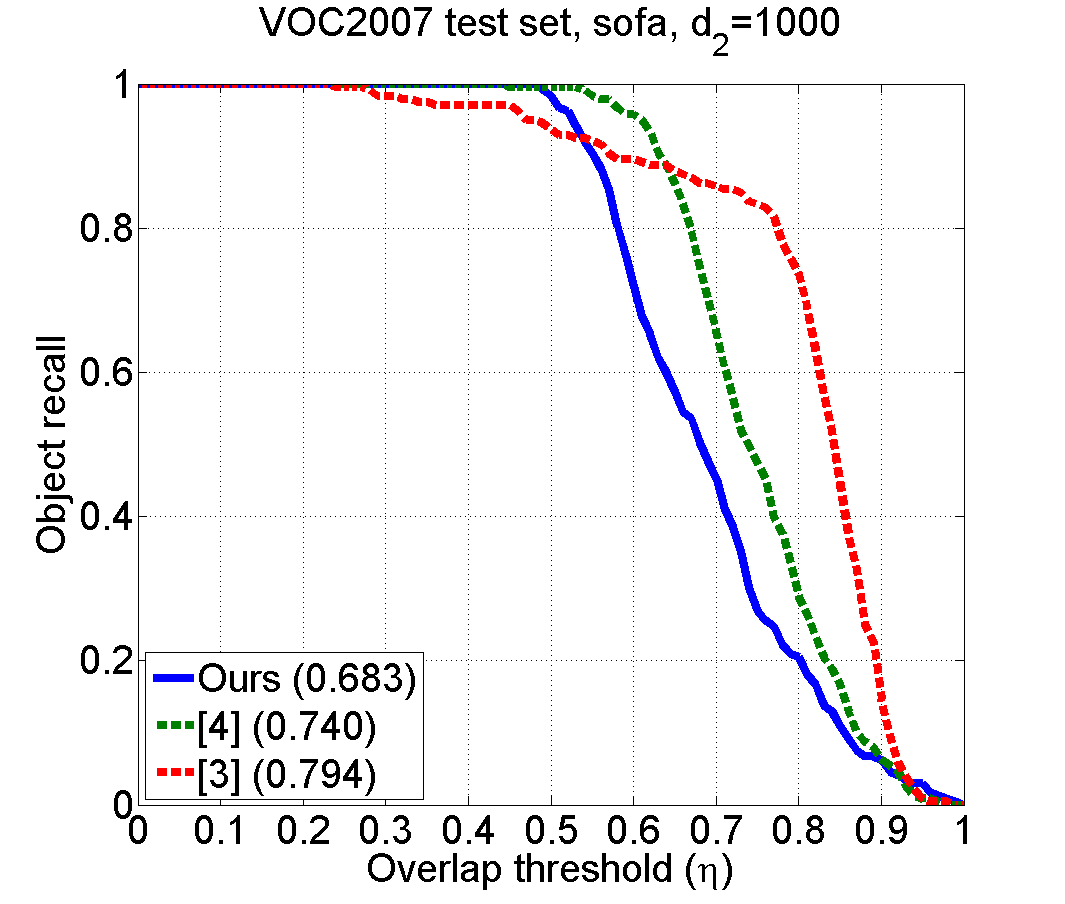}}
 \end{center}
\end{minipage}
\begin{minipage}[b]{0.245\linewidth}
 \begin{center}
 \centerline{\includegraphics[width=1.05\columnwidth]{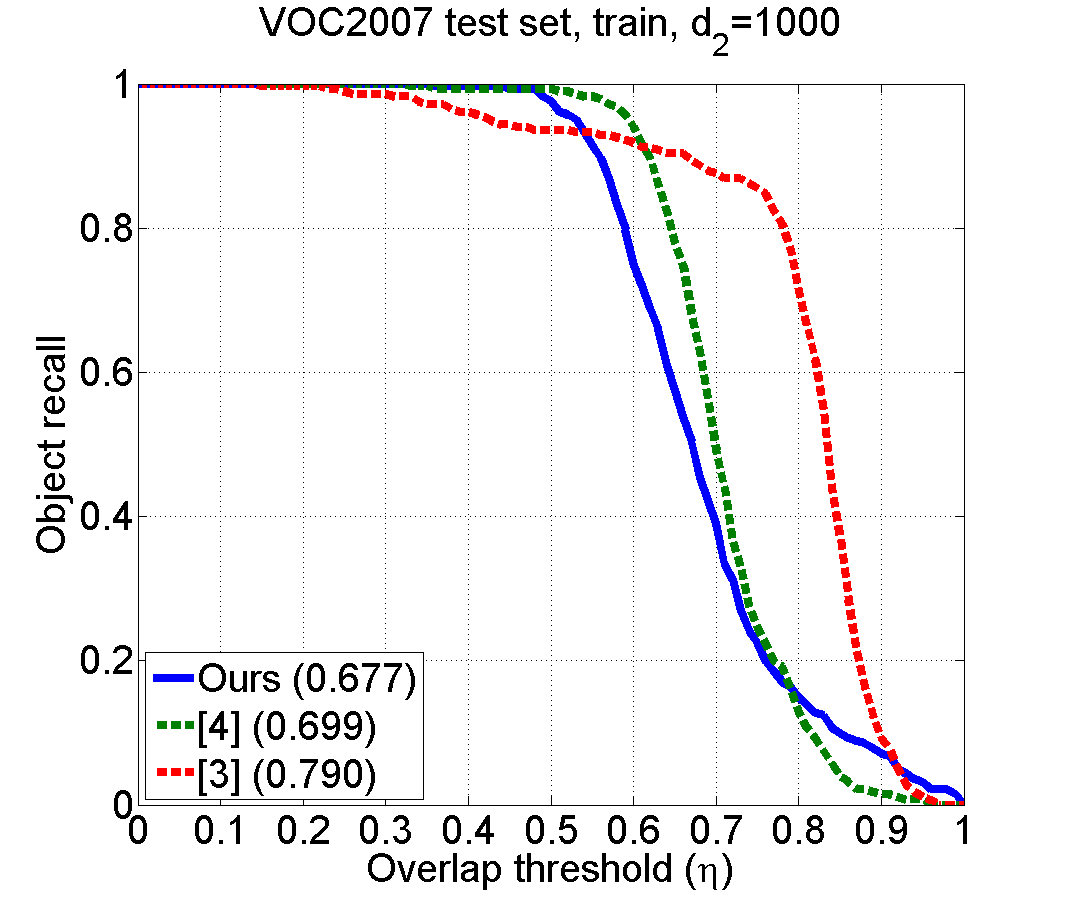}}
 \end{center}
\end{minipage}
\begin{minipage}[b]{0.245\linewidth}
 \begin{center}
 \centerline{\includegraphics[width=1.05\columnwidth]{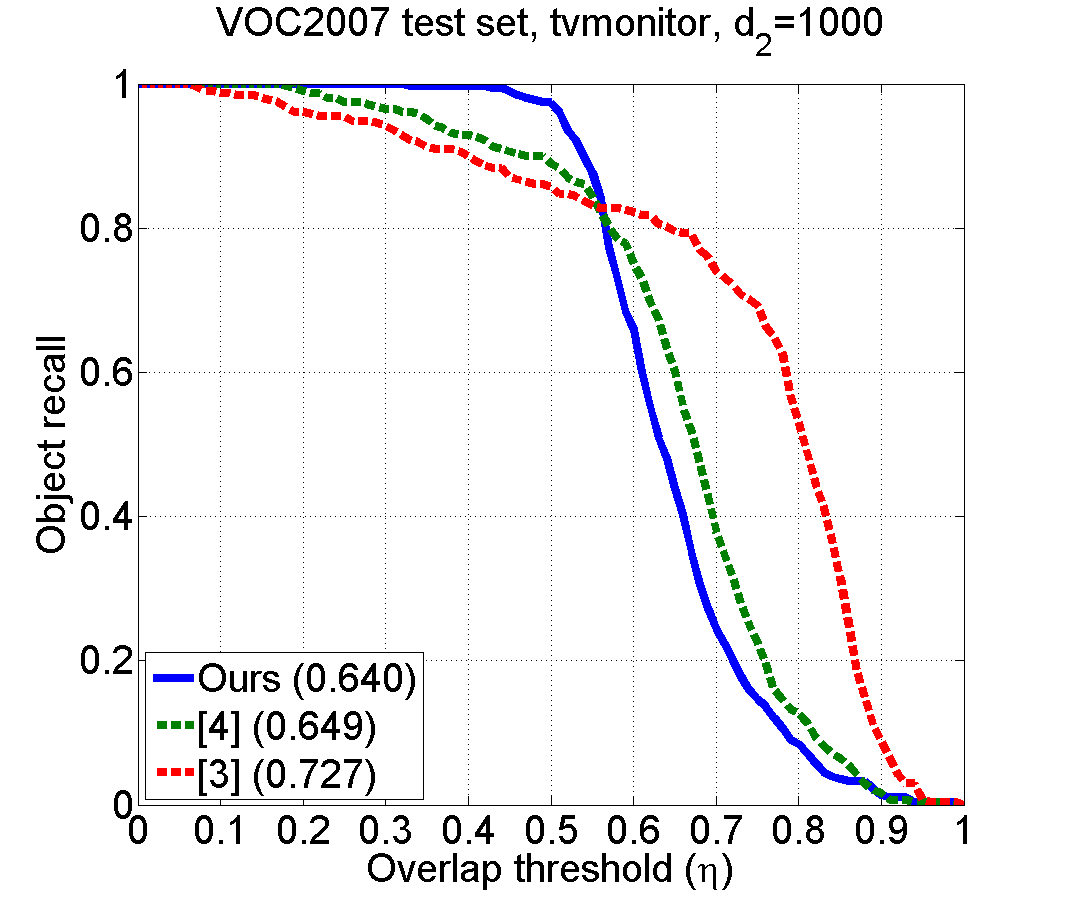}}
 \end{center}
\end{minipage}
\caption{{\footnotesize Comparison of recall-overlap curves using different methods on each class in the test dataset of VOC2007. The numbers in brackets are the AUC scores for each method. In general, our method ({\it i.e.} $\ell_1-o/r + \ell_1-o/r$) performs similarly to \cite{Alexe2012pami}, and when $\eta>0.5$ \cite{Rahtu_iccv11} seems better than ours and \cite{Alexe2012pami} in terms of localization quality of proposals. The mean and standard deviation of AUC scores for our method, \cite{Alexe2012pami,Rahtu_iccv11} are $(65.1\pm 2.2)\%$, $(66.8\pm 4.2)\%$, and $(70.8\pm 8.3)\%$, respectively.}}\label{fig:recall-overlap-class-2007}
\vspace{-0mm}
\end{figure*}

\begin{figure*}[t]
\begin{minipage}[b]{0.245\linewidth}
 \begin{center}
 \centerline{\includegraphics[width=1.05\columnwidth]{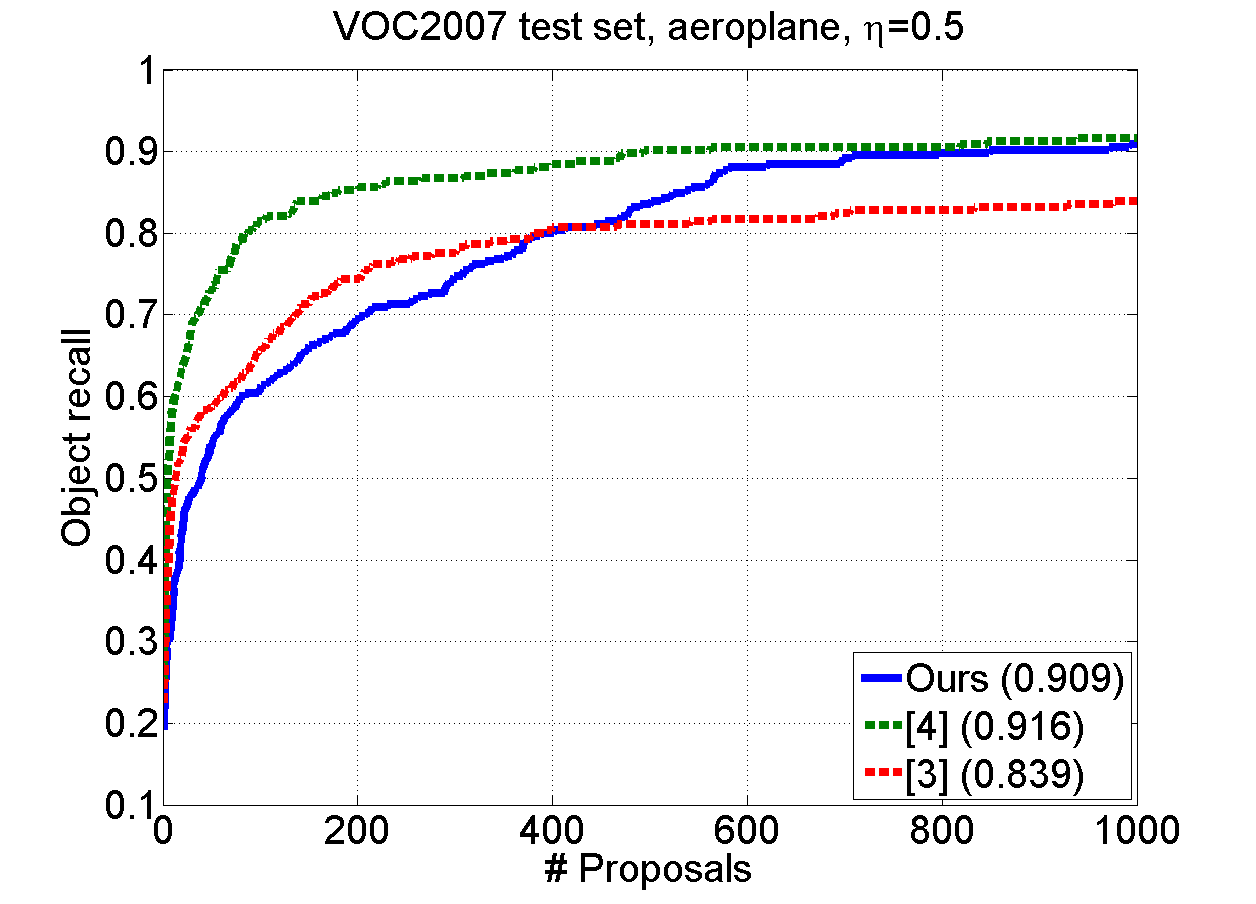}}
 \end{center}
\end{minipage}
\begin{minipage}[b]{0.245\linewidth}
 \begin{center}
 \centerline{\includegraphics[width=1.05\columnwidth]{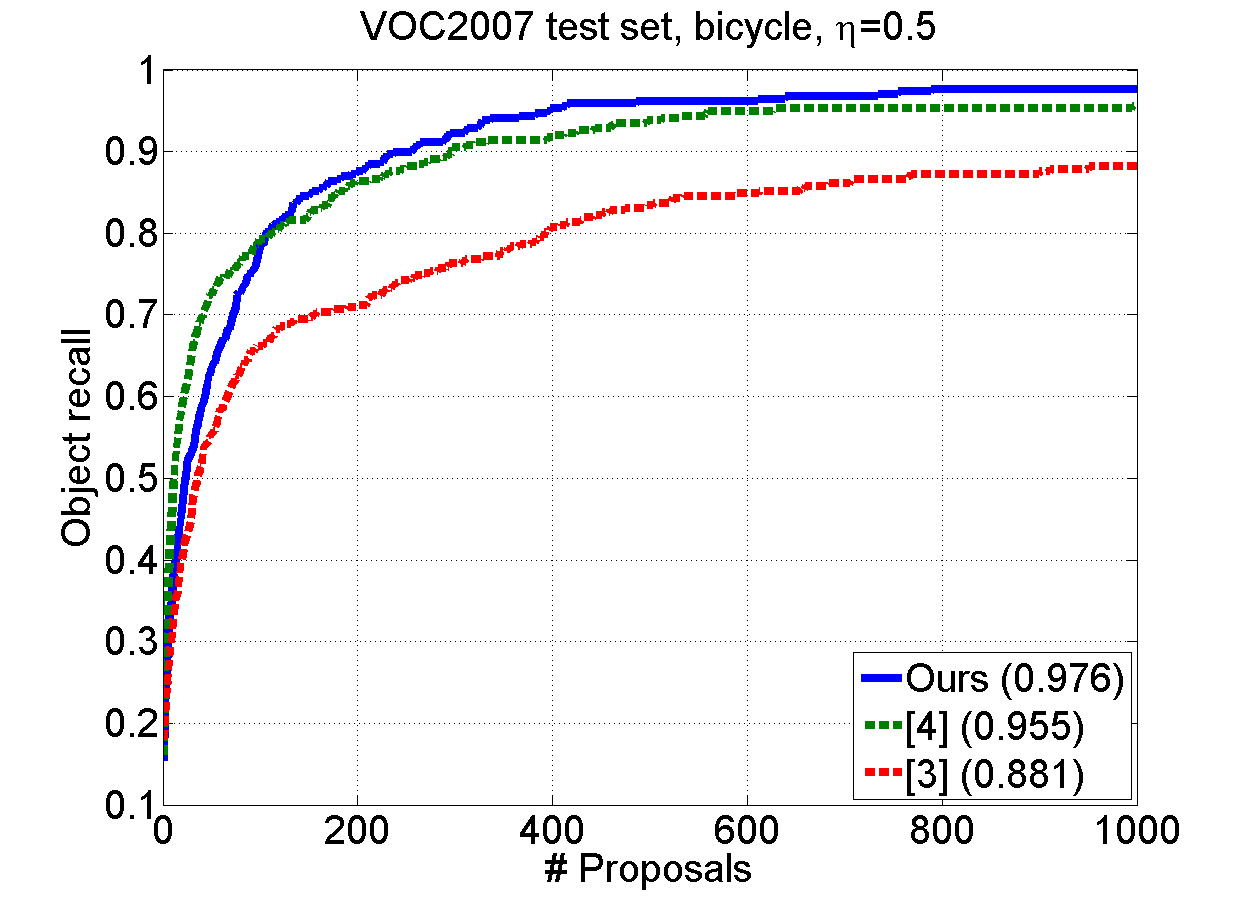}}
 \end{center}
\end{minipage}
\begin{minipage}[b]{0.245\linewidth}
 \begin{center}
 \centerline{\includegraphics[width=1.05\columnwidth]{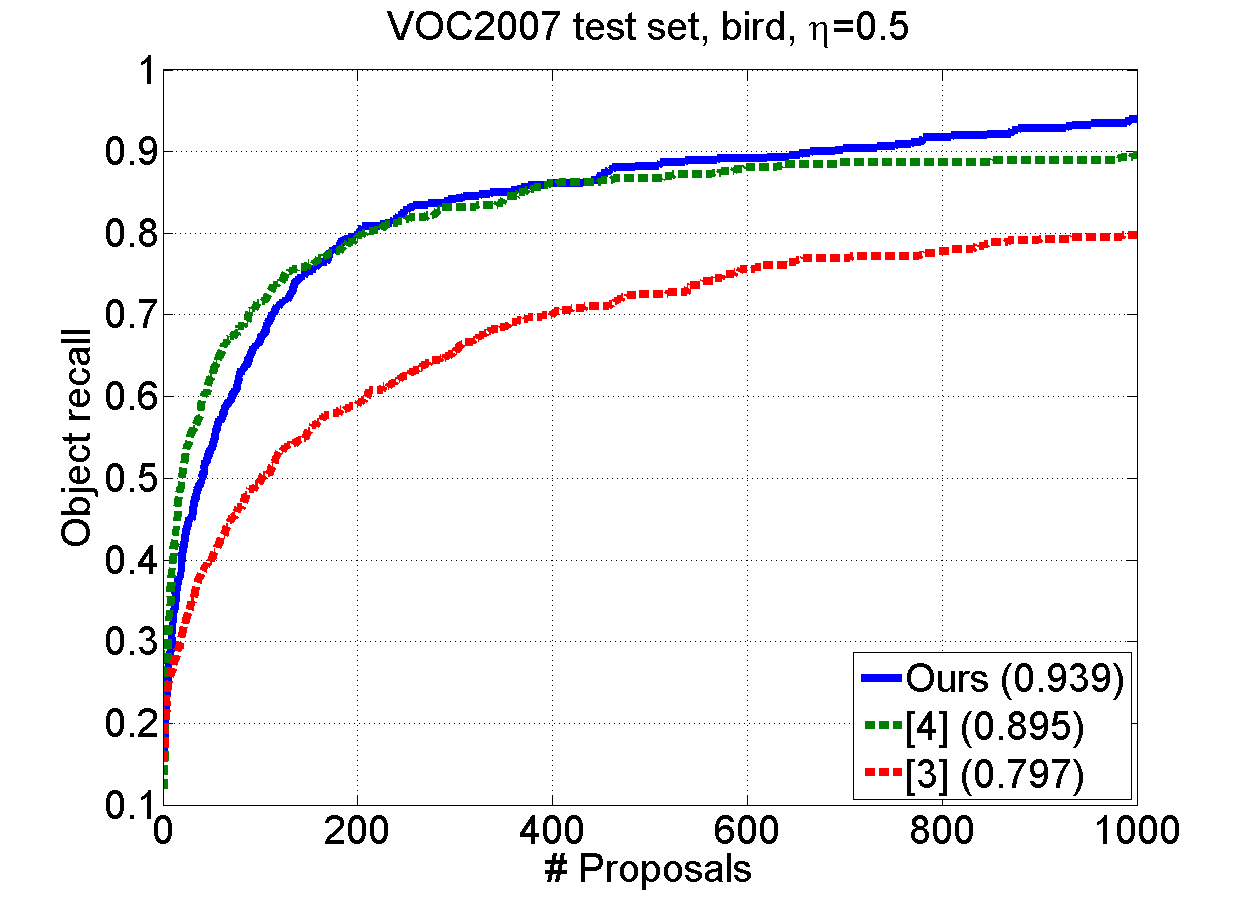}}
 \end{center}
\end{minipage}
\begin{minipage}[b]{0.245\linewidth}
 \begin{center}
 \centerline{\includegraphics[width=1.05\columnwidth]{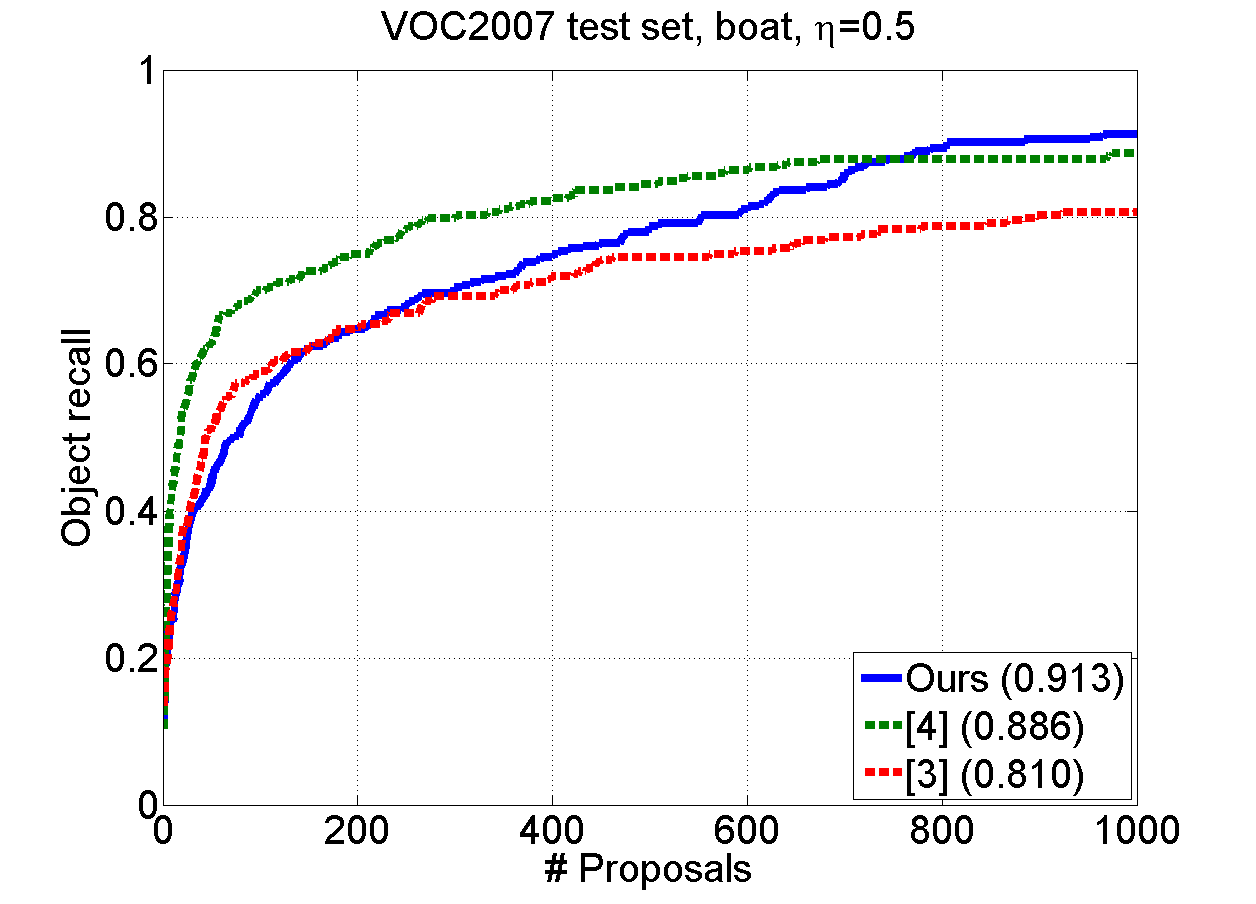}}
 \end{center}
\end{minipage}
\begin{minipage}[b]{0.245\linewidth}
 \begin{center}
 \centerline{\includegraphics[width=1.05\columnwidth]{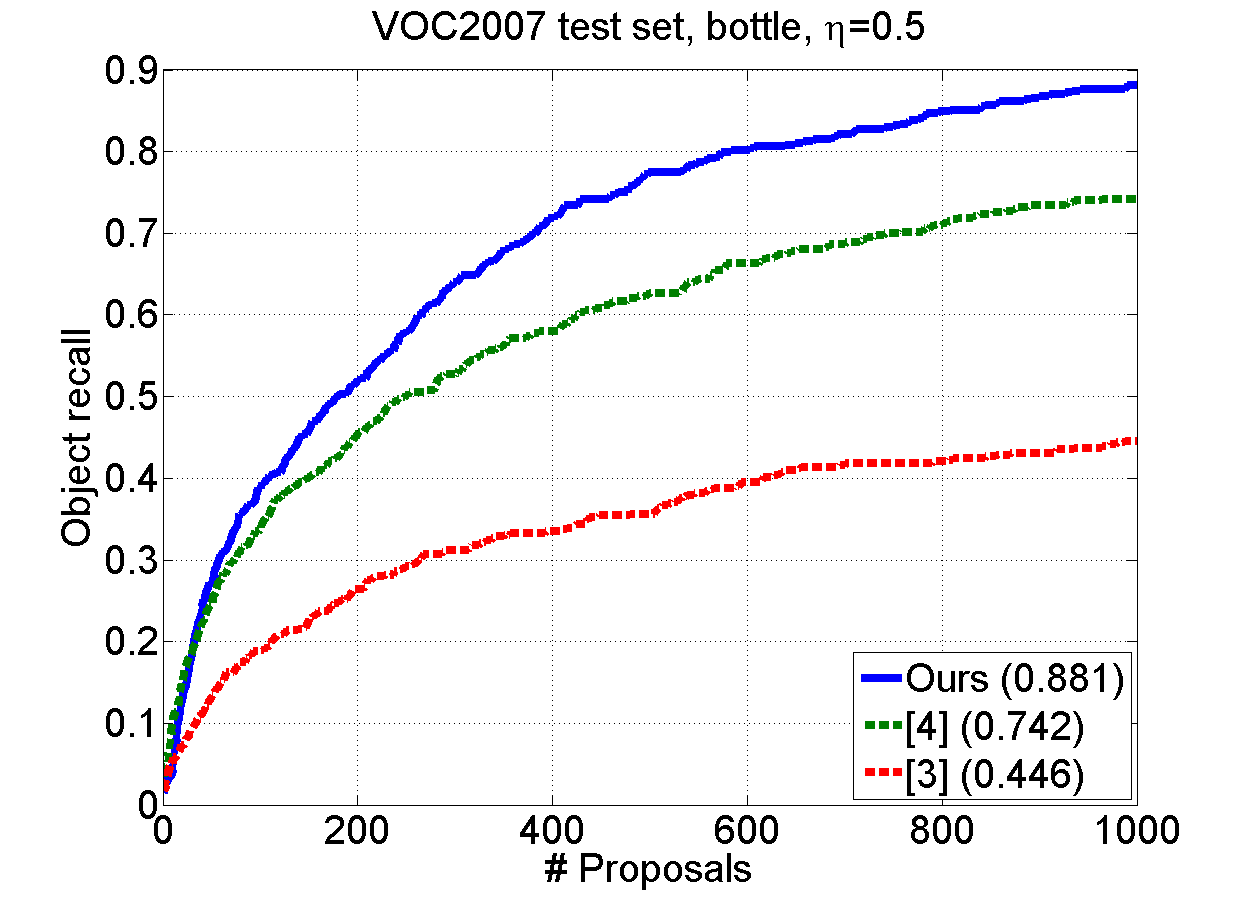}}
 \end{center}
\end{minipage}
\begin{minipage}[b]{0.245\linewidth}
 \begin{center}
 \centerline{\includegraphics[width=1.05\columnwidth]{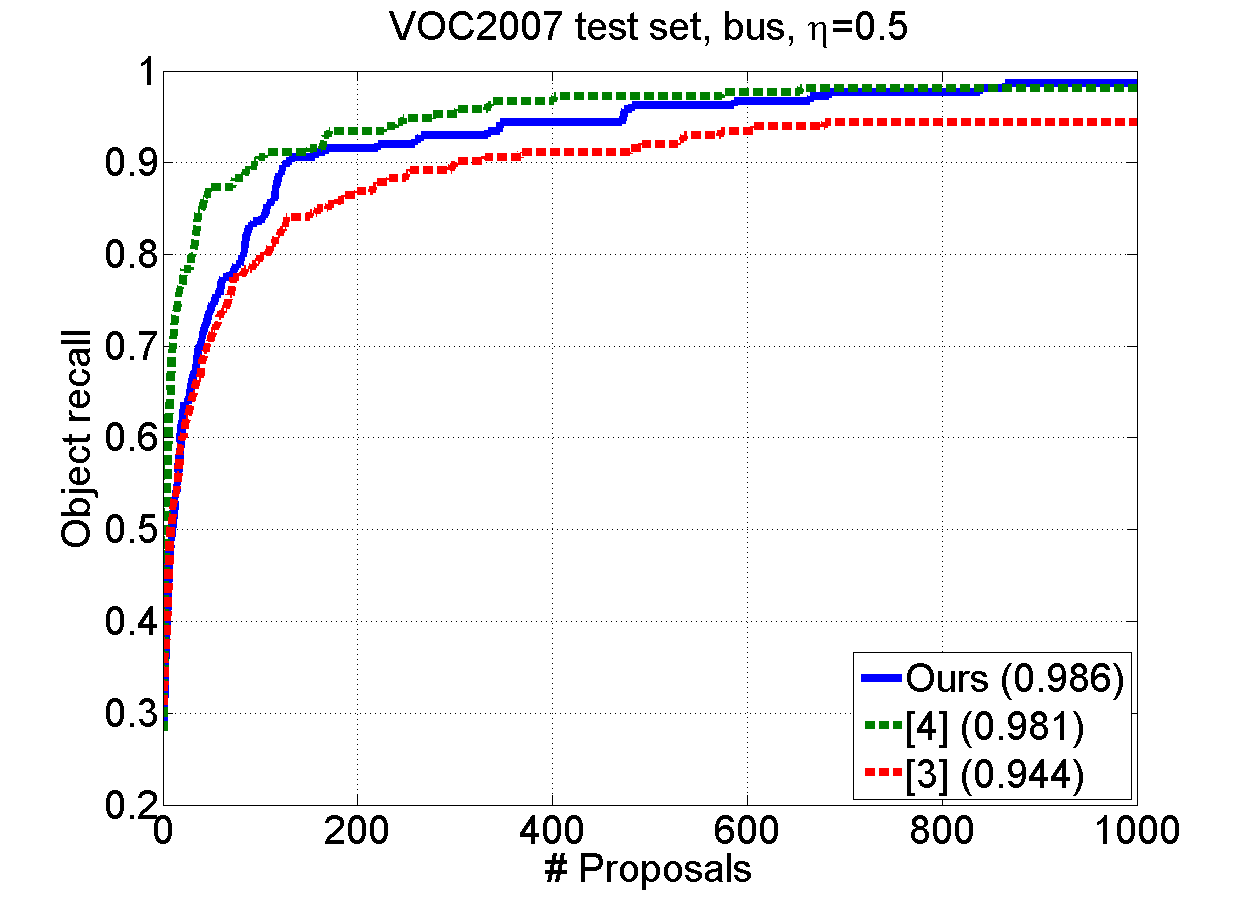}}
 \end{center}
\end{minipage}
\begin{minipage}[b]{0.245\linewidth}
 \begin{center}
 \centerline{\includegraphics[width=1.05\columnwidth]{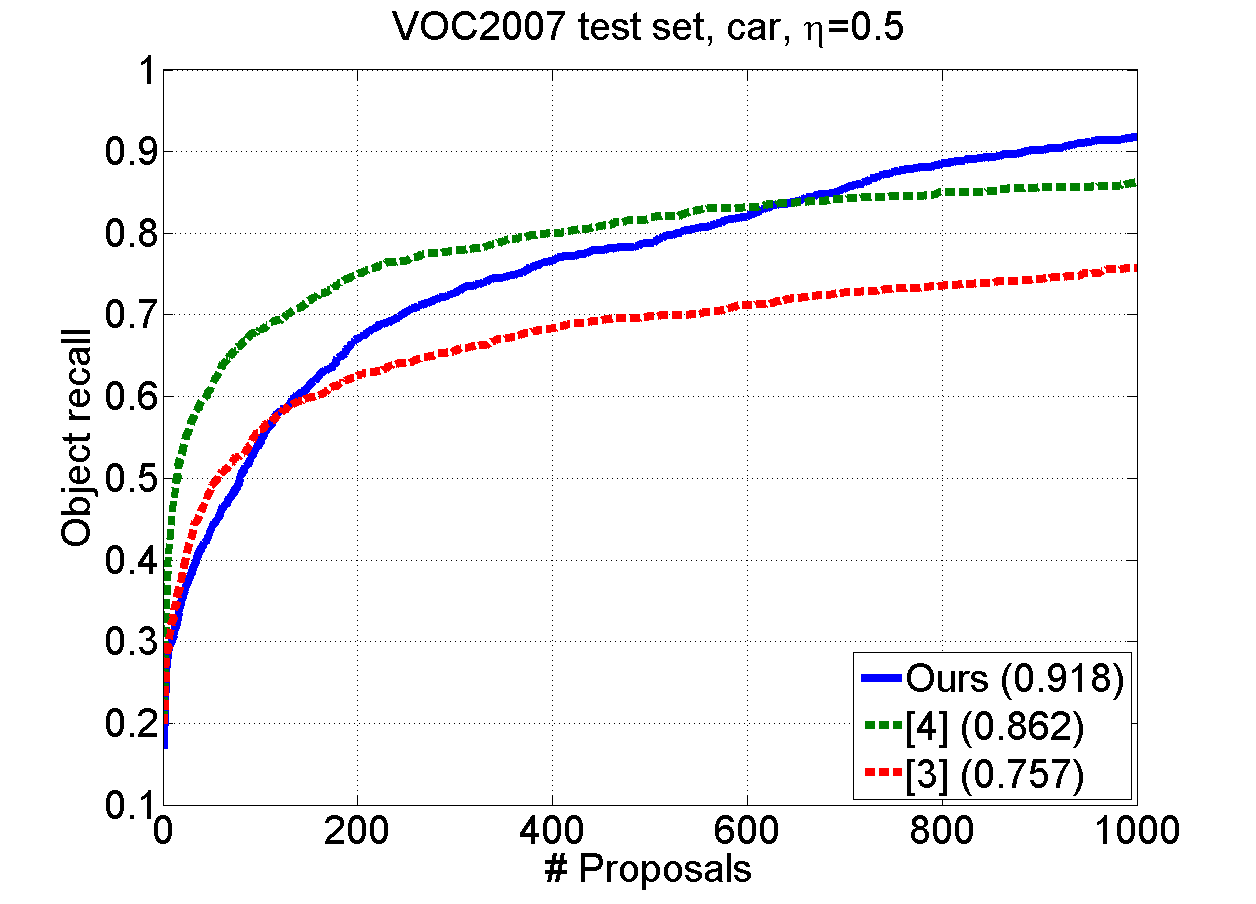}}
 \end{center}
\end{minipage}
\begin{minipage}[b]{0.245\linewidth}
 \begin{center}
 \centerline{\includegraphics[width=1.05\columnwidth]{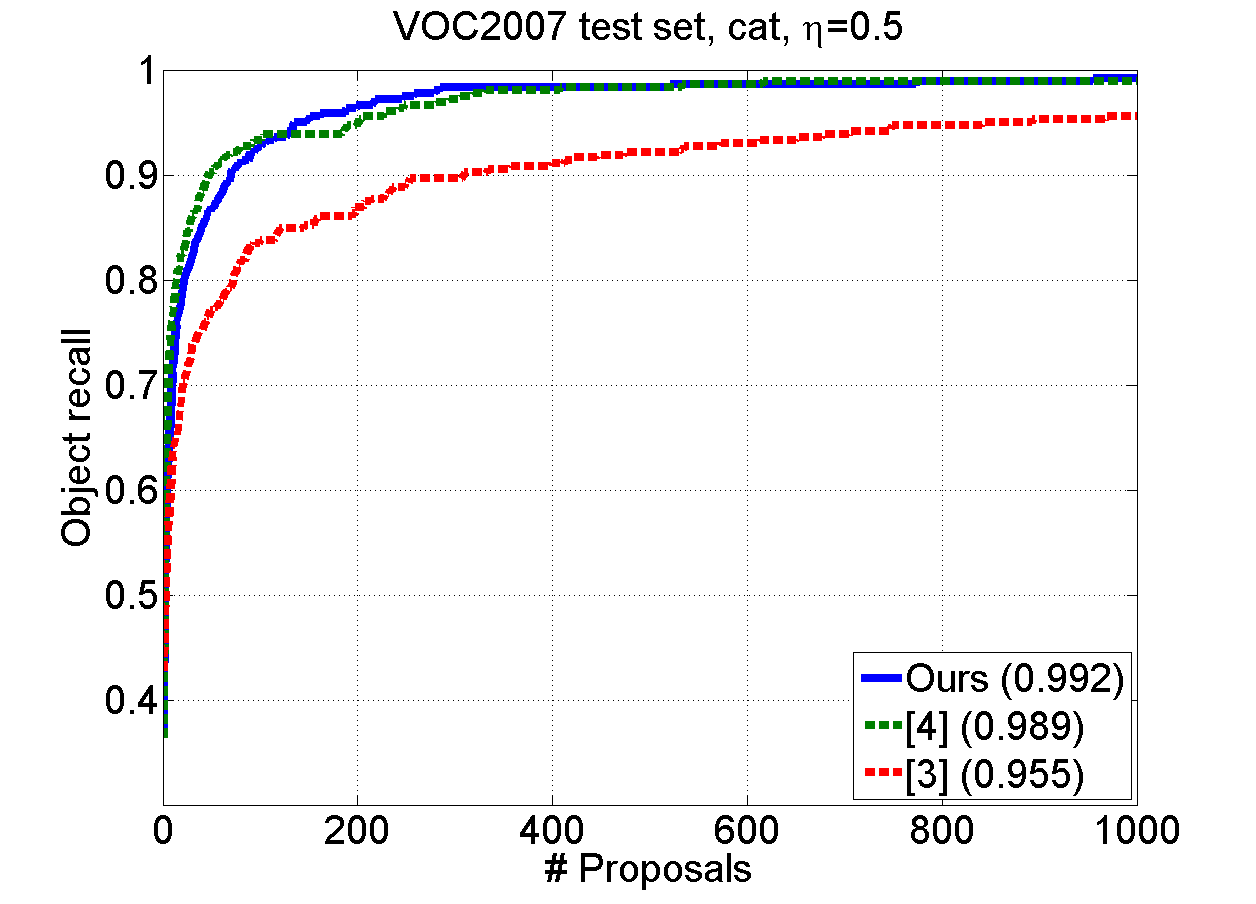}}
 \end{center}
\end{minipage}
\begin{minipage}[b]{0.245\linewidth}
 \begin{center}
 \centerline{\includegraphics[width=1.05\columnwidth]{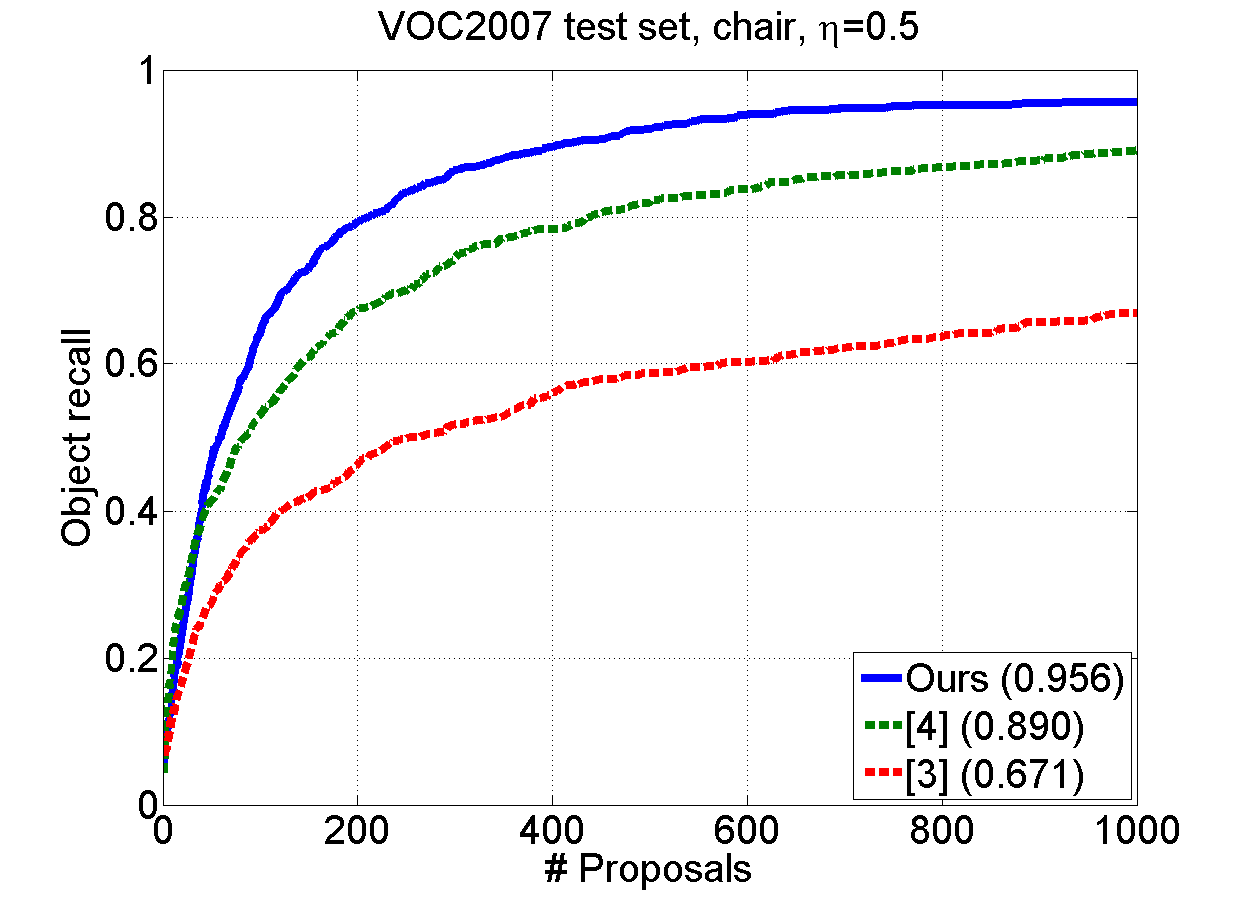}}
 \end{center}
\end{minipage}
\begin{minipage}[b]{0.245\linewidth}
 \begin{center}
 \centerline{\includegraphics[width=1.05\columnwidth]{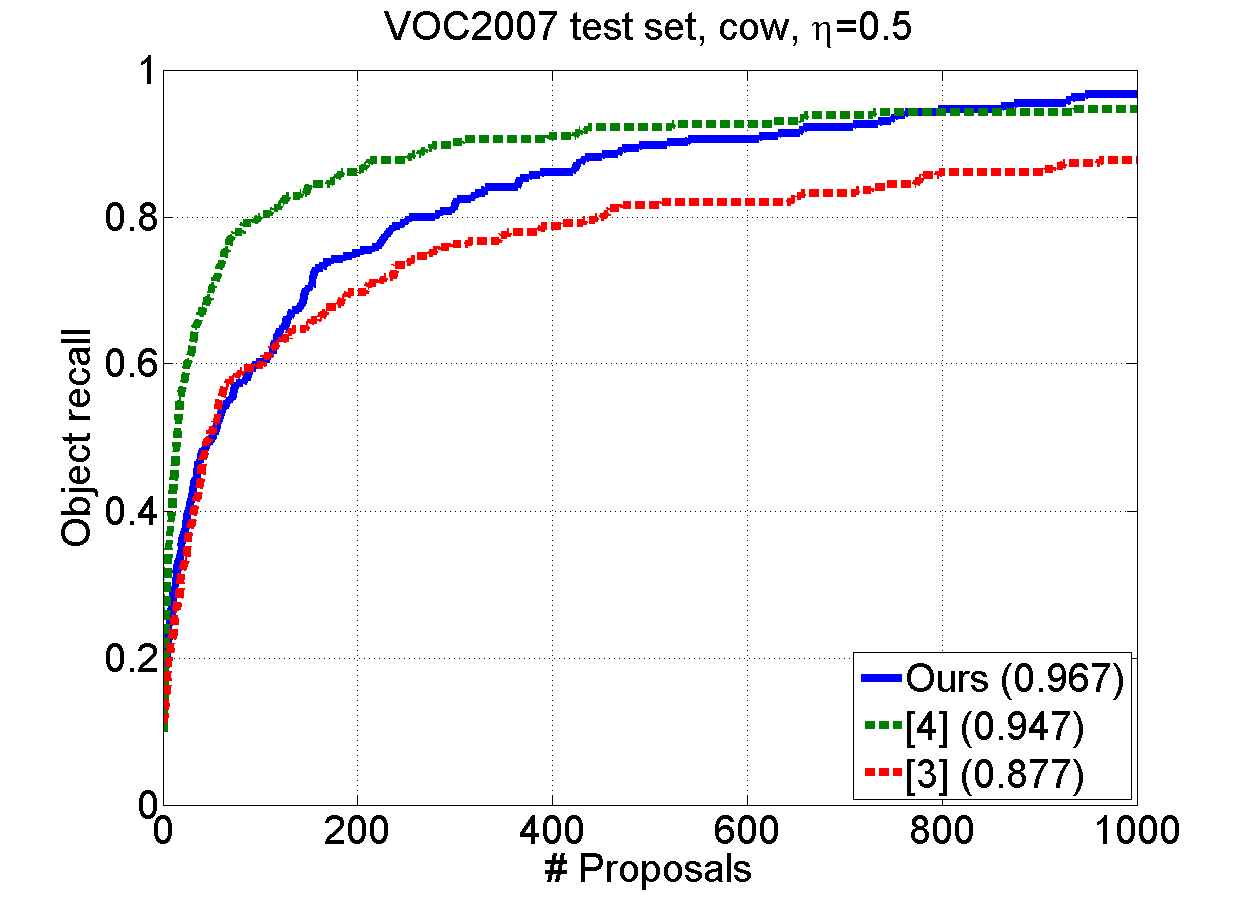}}
 \end{center}
\end{minipage}
\begin{minipage}[b]{0.245\linewidth}
 \begin{center}
 \centerline{\includegraphics[width=1.05\columnwidth]{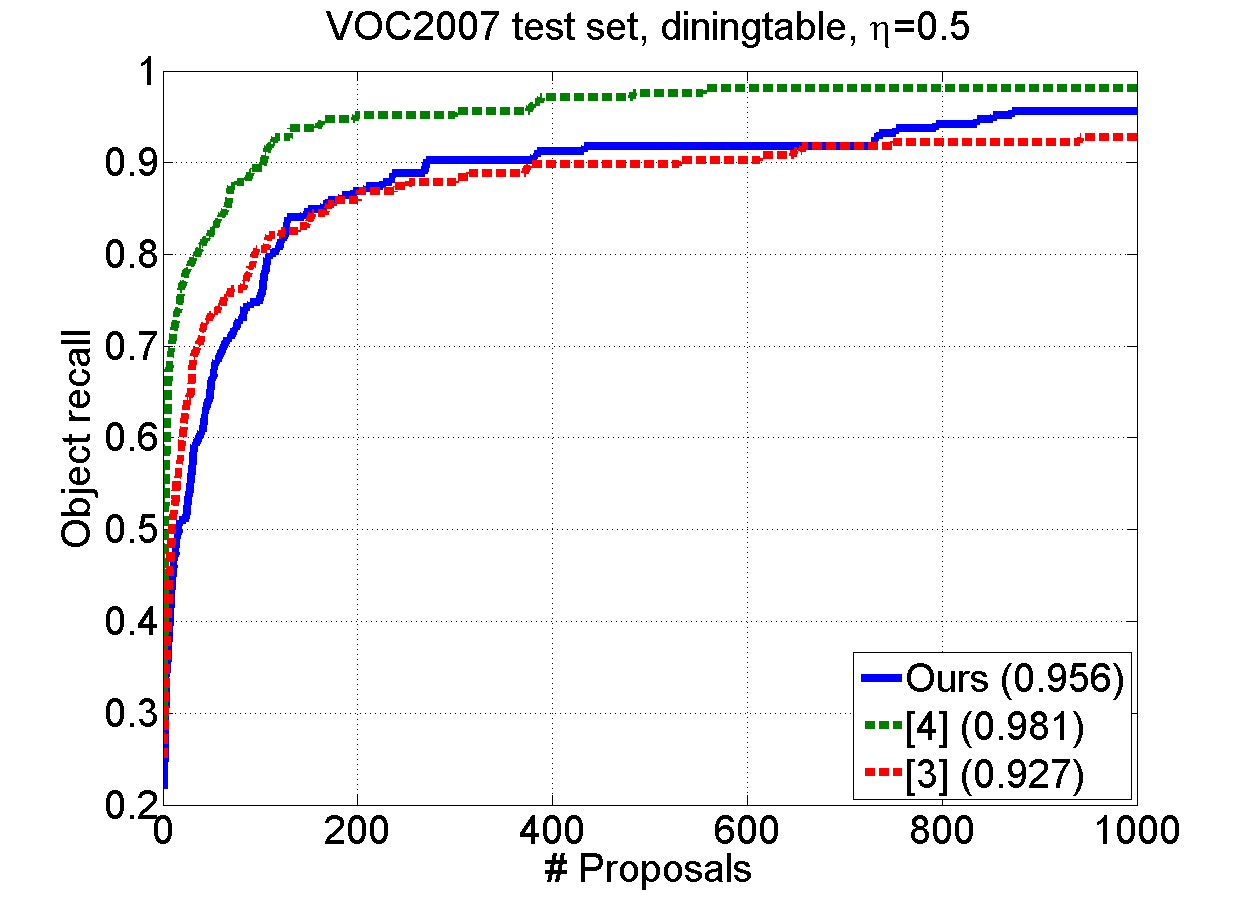}}
 \end{center}
\end{minipage}
\begin{minipage}[b]{0.245\linewidth}
 \begin{center}
 \centerline{\includegraphics[width=1.05\columnwidth]{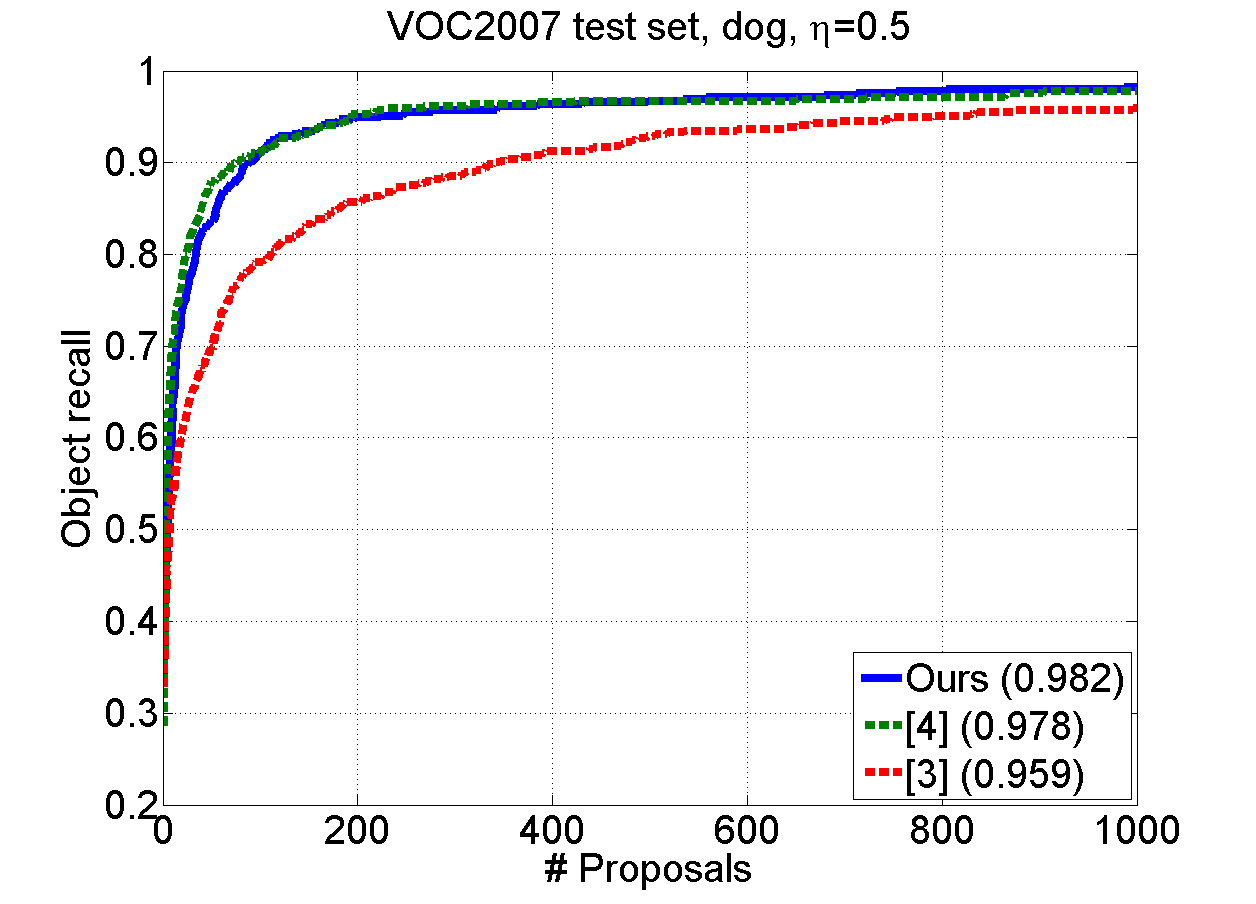}}
 \end{center}
\end{minipage}
\begin{minipage}[b]{0.245\linewidth}
 \begin{center}
 \centerline{\includegraphics[width=1.05\columnwidth]{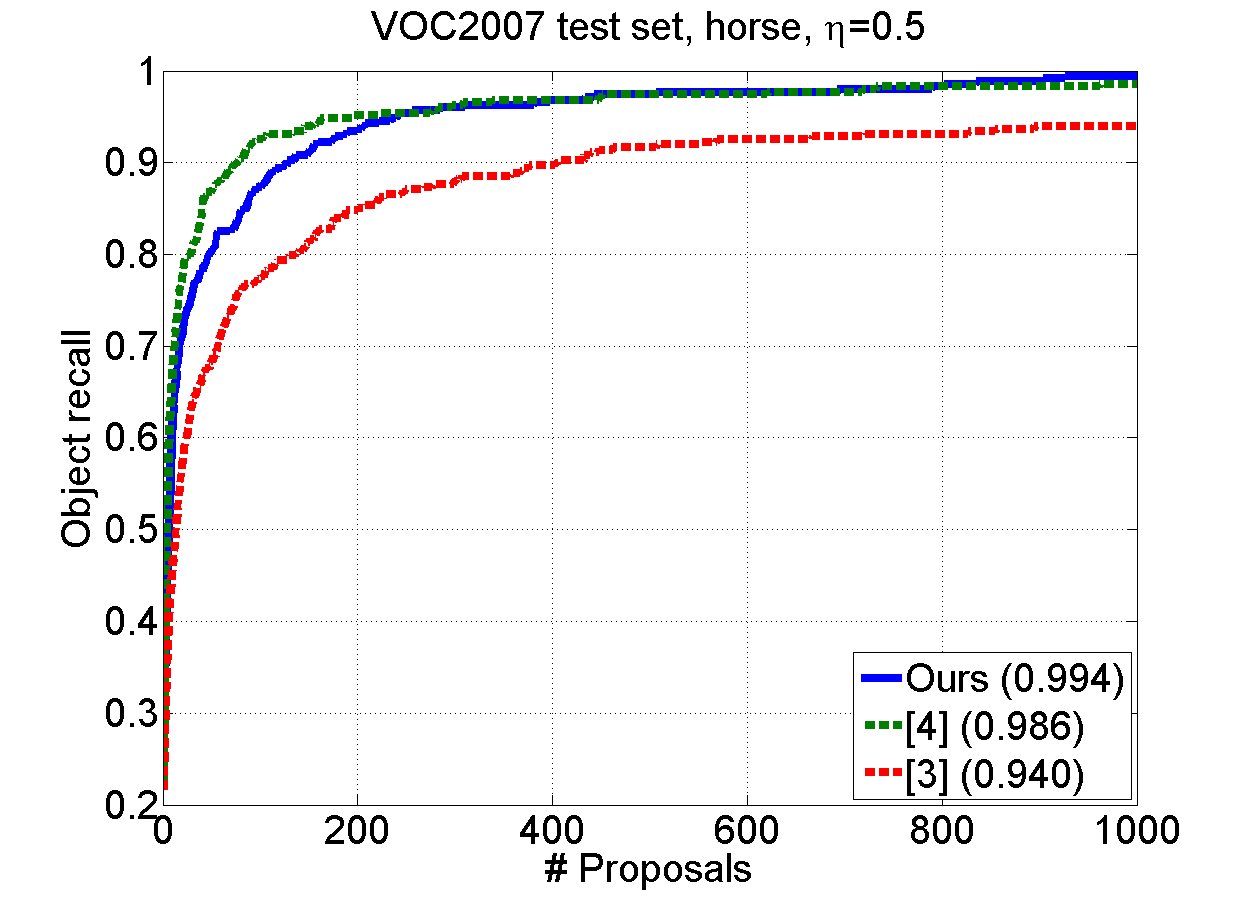}}
 \end{center}
\end{minipage}
\begin{minipage}[b]{0.245\linewidth}
 \begin{center}
 \centerline{\includegraphics[width=1.05\columnwidth]{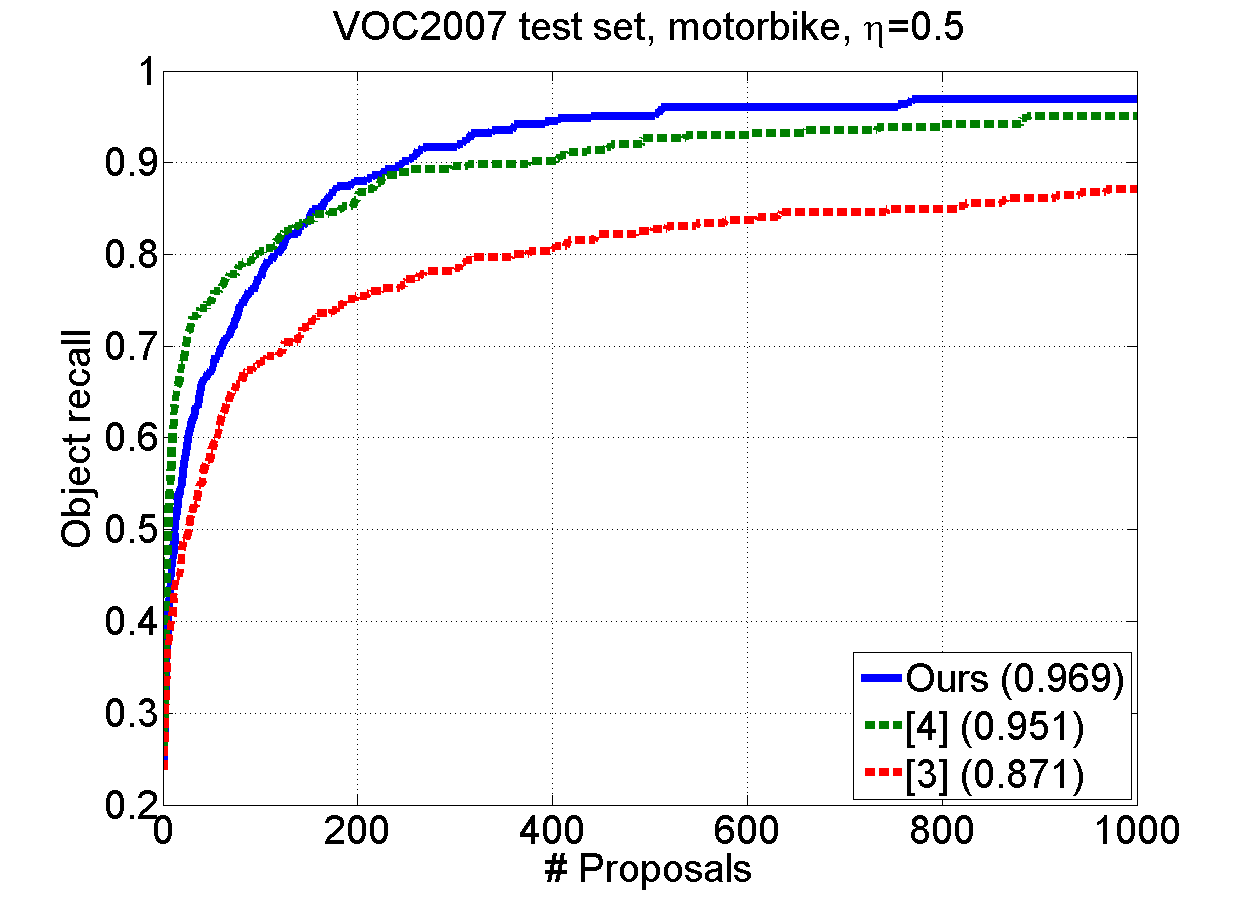}}
 \end{center}
\end{minipage}
\begin{minipage}[b]{0.245\linewidth}
 \begin{center}
 \centerline{\includegraphics[width=1.05\columnwidth]{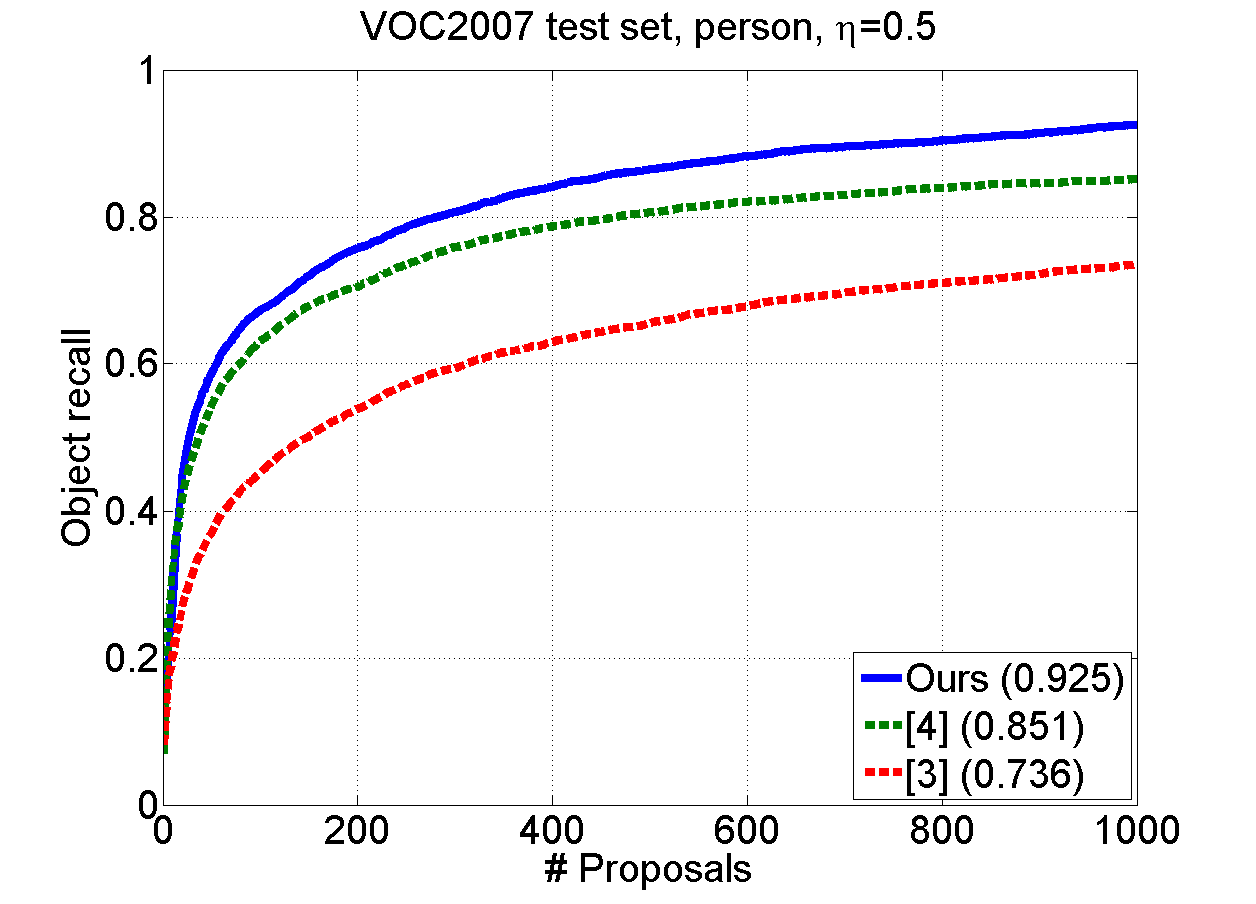}}
 \end{center}
\end{minipage}
\begin{minipage}[b]{0.245\linewidth}
 \begin{center}
 \centerline{\includegraphics[width=1.05\columnwidth]{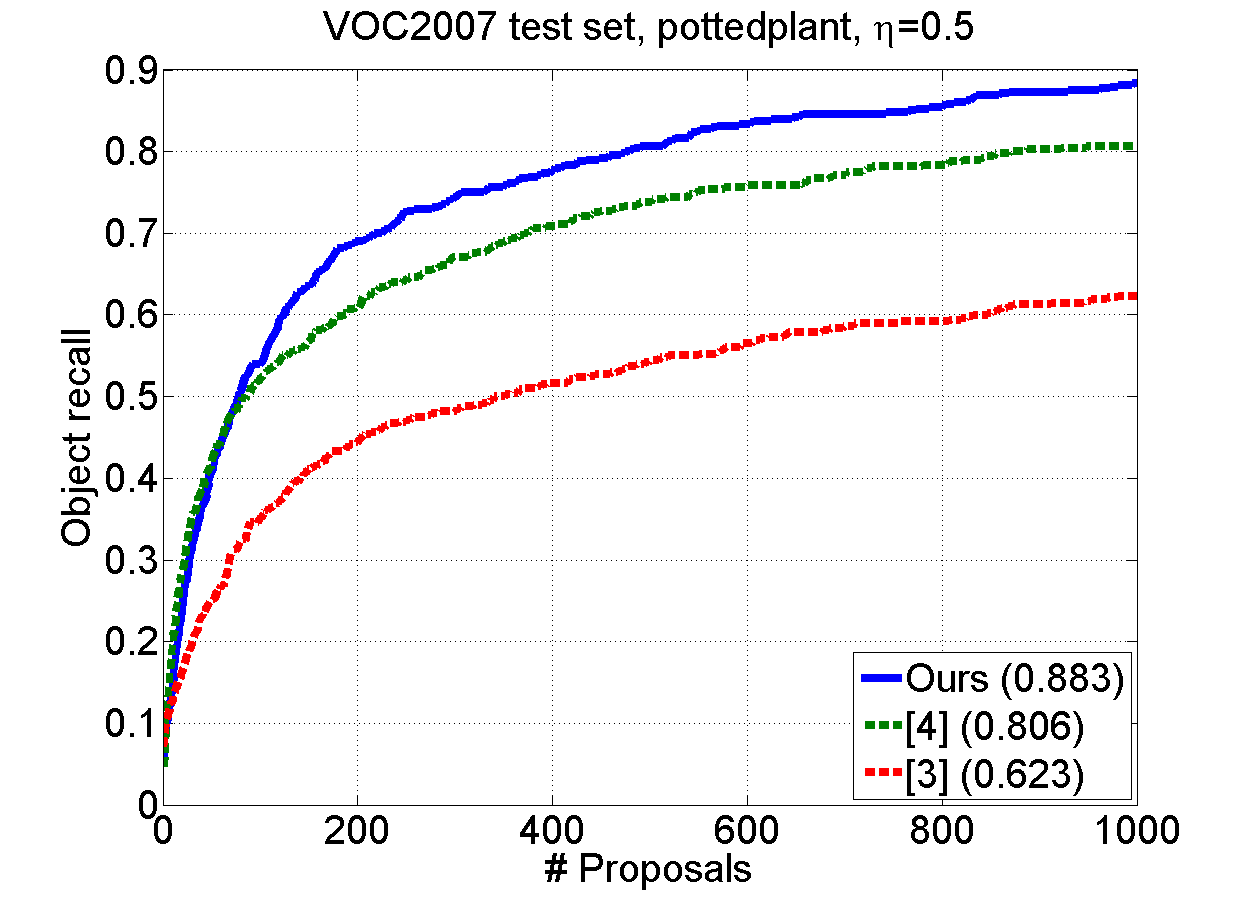}}
 \end{center}
\end{minipage}
\begin{minipage}[b]{0.245\linewidth}
 \begin{center}
 \centerline{\includegraphics[width=1.05\columnwidth]{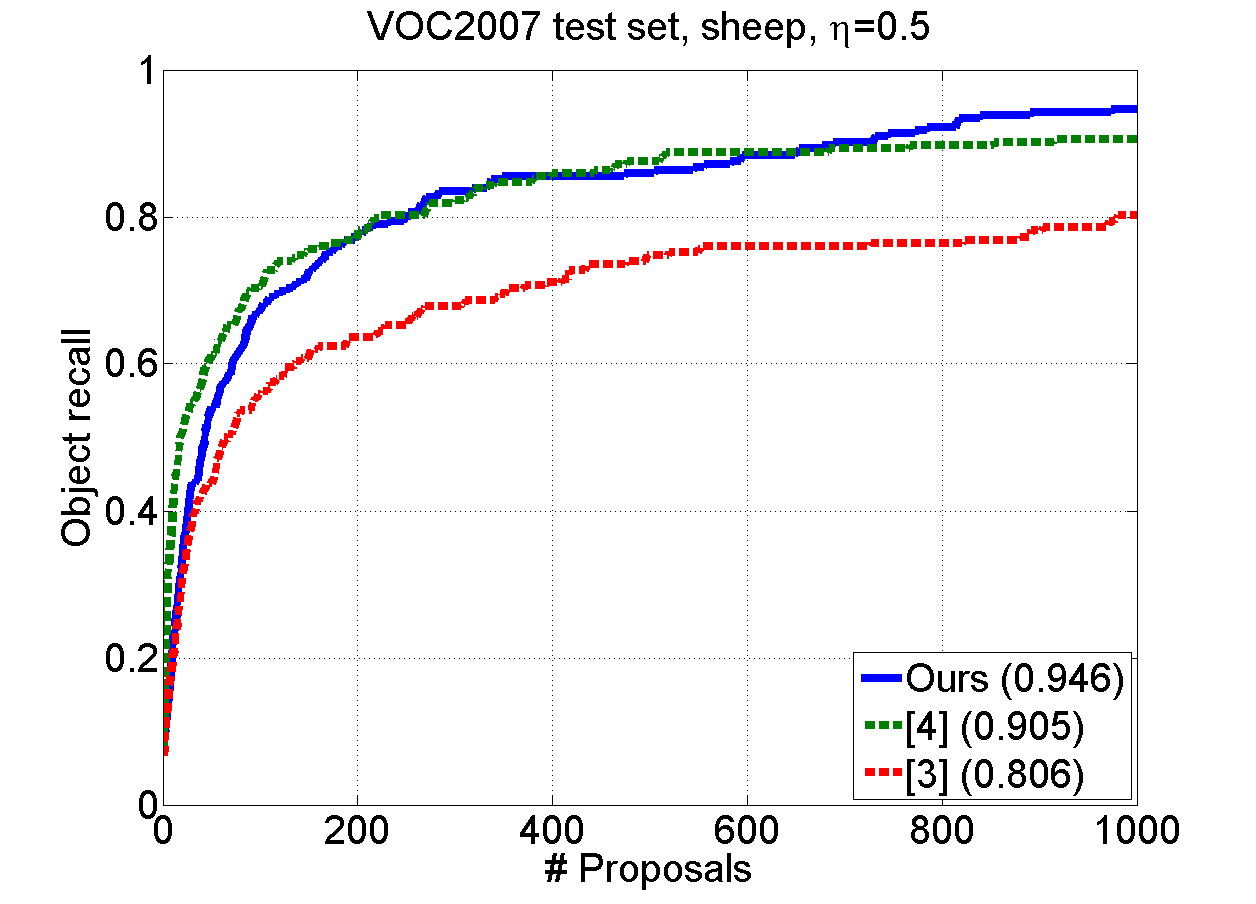}}
 \end{center}
\end{minipage}
\begin{minipage}[b]{0.245\linewidth}
 \begin{center}
 \centerline{\includegraphics[width=1.05\columnwidth]{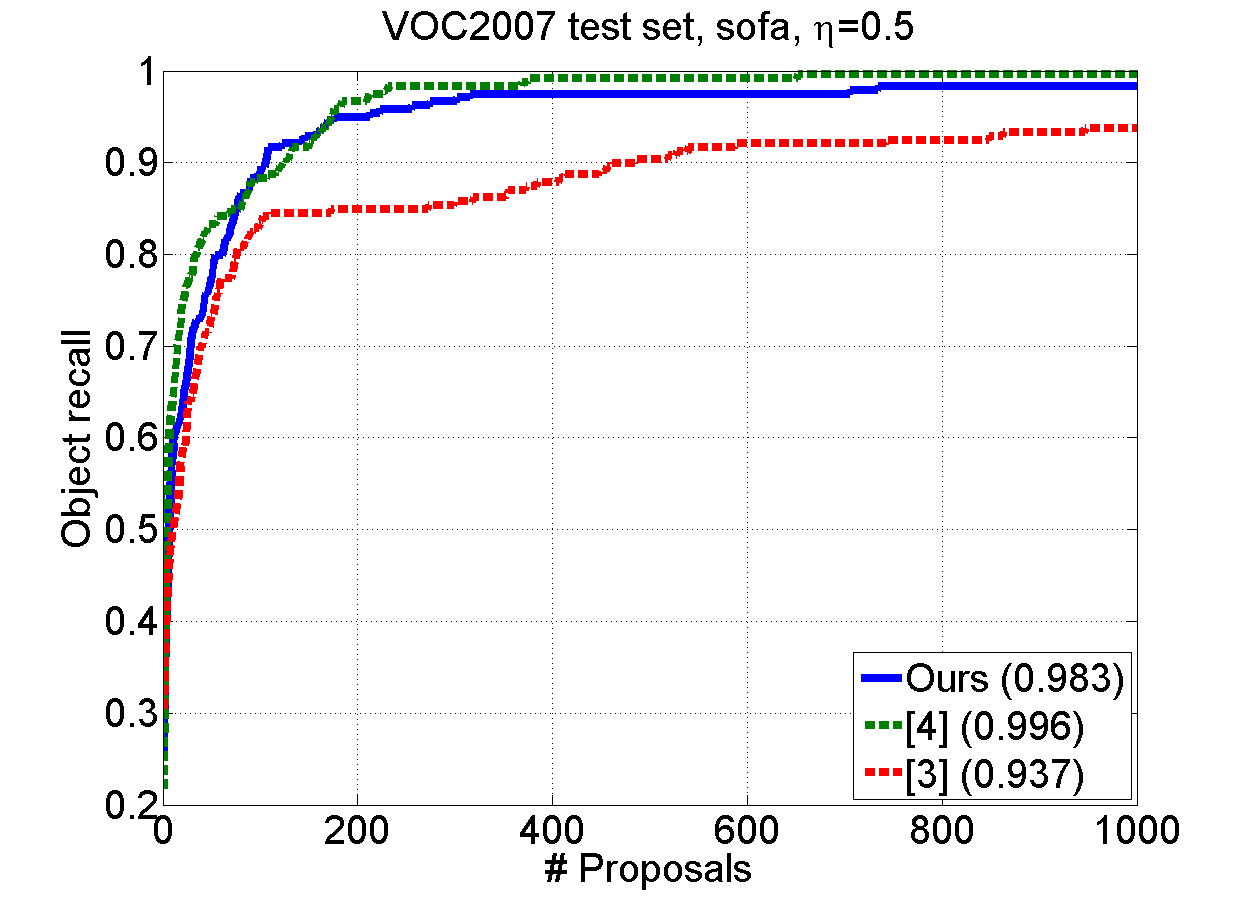}}
 \end{center}
\end{minipage}
\begin{minipage}[b]{0.245\linewidth}
 \begin{center}
 \centerline{\includegraphics[width=1.05\columnwidth]{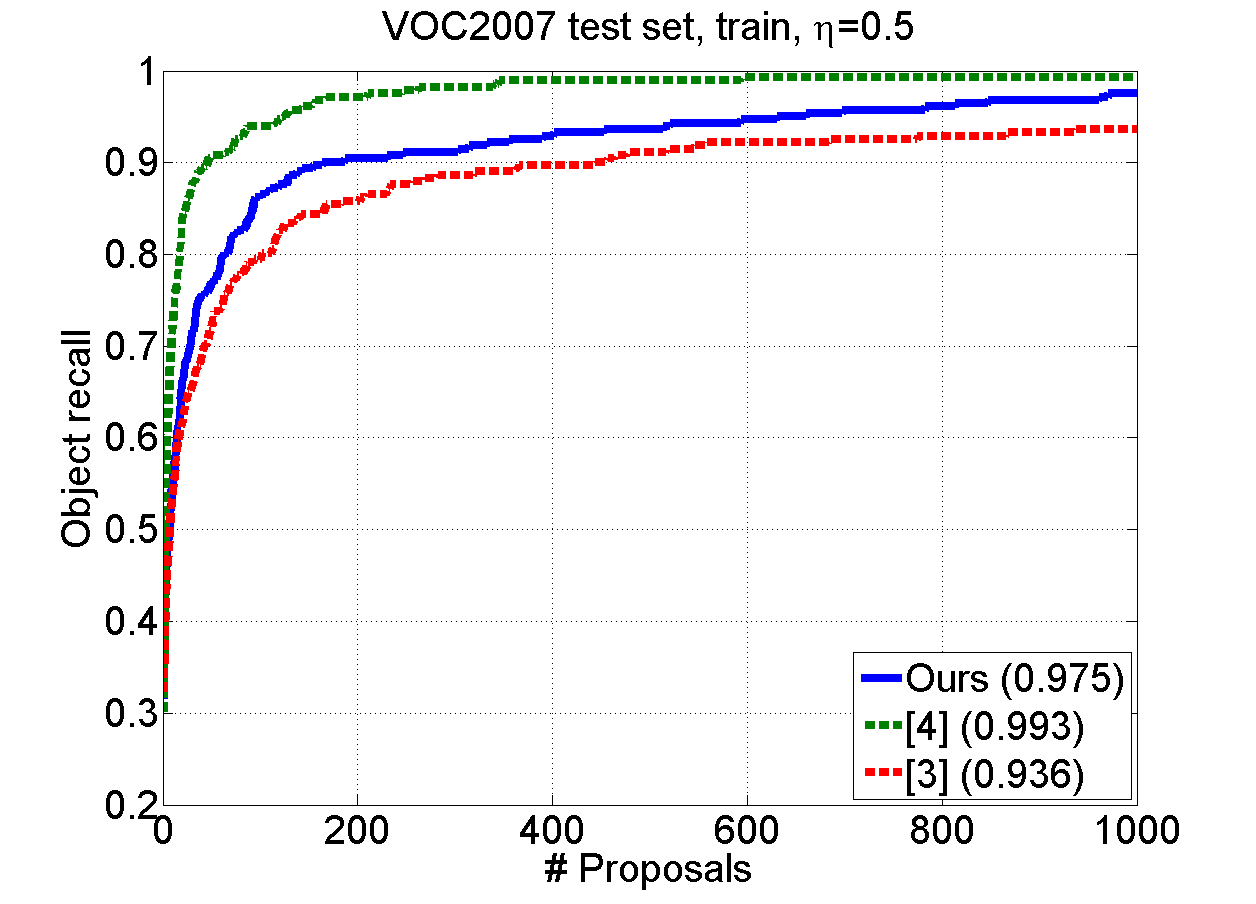}}
 \end{center}
\end{minipage}
\begin{minipage}[b]{0.245\linewidth}
 \begin{center}
 \centerline{\includegraphics[width=1.05\columnwidth]{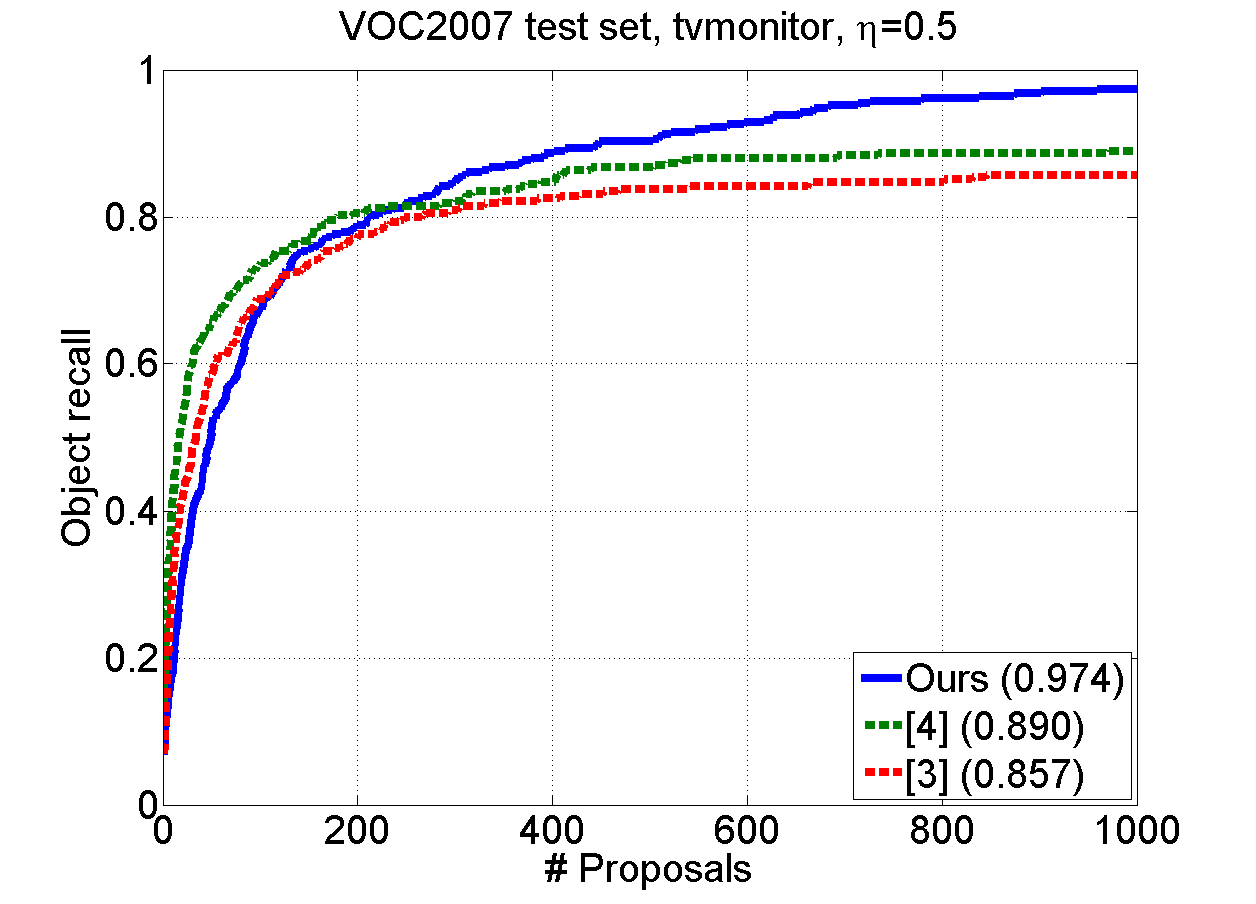}}
 \end{center}
\end{minipage}
\caption{{\footnotesize Comparison of recall-proposal curves using different methods and $\eta=0.5$ on each class in the test dataset of VOC2007. The numbers in brackets are the object recall values for each method using 1000 proposals. Still, in general our method ({\it i.e.} $\ell_1-o/r + \ell_1-o/r$) and \cite{Alexe2012pami} have similar behaviors, and both outperform \cite{Rahtu_iccv11} using 1000 proposals. The mean and standard deviation of the object recall values for our method, \cite{Alexe2012pami,Rahtu_iccv11} are $(95.1\pm 3.5)\%$, $(92.0\pm 6.7)\%$, and $(82.8\pm 12.8)\%$, respectively.}}\label{fig:recall-proposal-class-2007}
\vspace{-0mm}
\end{figure*}

\ifCLASSOPTIONcompsoc
  \section*{Acknowledgments}
\else
  \section*{Acknowledgment}
\fi

We acknowledge support of the EPSRC and financial support was provided by ERC grant ERC-2012-AdG 321162-HELIOS.

\ifCLASSOPTIONcaptionsoff
  \newpage
\fi



\bibliographystyle{IEEEtran}
\bibliography{egbib}
\end{document}